\definecolor{MyGreen1}{RGB}{20,180,40}
\definecolor{MyBlue1}{RGB}{00,150,255}
\definecolor{MyGray1}{RGB}{200,200,200}
\newtheorem{assumption}{Assumption}
\newcommand{\bigO}{\mathcal{O}}
\DeclareMathAlphabet{\mathbfsl}{OT1}{ppl}{b}{it} 
\newcommand{\al}{\alpha}
\newcommand{\eps}{\epsilon}
\newcommand{\tnabla}{\tilde{\nabla}}
\newcommand{\D}{\mathcal{D}}
\newcommand{\A}{\mathcal{A}}
\def\QEDclosed{\mbox{\rule[0pt]{1.3ex}{1.3ex}}} 
\def\QED{\QEDclosed} 
\def\proof{\noindent\hspace{2em}{\itshape Proof: }}
\def\endproof{\hspace*{\fill}~\QED\par\endtrivlist\unskip}
\newcommand{\be}[1]{\begin{equation}\label{#1}}
	\newcommand{\ee}{\end{equation}}
\renewcommand{\leq}{\leqslant}
\renewcommand{\geq}{\geqslant}
\renewcommand{\Bbb}{\mathbb}
\newcommand{\E}{{\mathbb{E}}}
\newcommand{\R}{{\Bbb R}}
\newcommand{\Cref}[1]{Co\-ro\-lla\-ry\,\ref{#1}}
\title{\LARGE A Generalized Meta Federated Learning Framework\\ with Theoretical Convergence Guarantees}
\begin{document}

\maketitle
\thispagestyle{empty} 
\begin{abstract}%
Meta federated learning (FL) is a personalized variant of FL, where multiple agents collaborate on training an initial shared model without exchanging raw data samples. The initial model should be trained in a way that current or new agents can easily adapt it to their local datasets after one or a few fine-tuning steps, thus improving the model personalization.
Conventional meta FL approaches minimize the average loss of agents on the local models obtained after one step of fine-tuning.
In practice, agents may need to apply several fine-tuning steps to adapt the global model to their local data, especially under highly heterogeneous data distributions across agents. To this end, we present a generalized framework for the meta FL by minimizing the average loss of agents on their local model after any arbitrary number $\nu$ of fine-tuning steps.
For this generalized framework, we present a variant of the well-known
federated averaging (FedAvg) algorithm and conduct a comprehensive theoretical convergence analysis to characterize the convergence speed as well as behavior of the meta loss functions in both the exact and approximated cases. 
Our experiments on real-world datasets demonstrate superior accuracy and faster convergence for the proposed scheme compared to conventional approaches.
\end{abstract}

\begin{keywords}%
Federated learning, meta learning, communication efficiency, convergence analysis.
\end{keywords}

\section{Introduction}\label{Sec1}
Federated learning (FL) is a machine learning setting where multiple entities (devices, clients, organizations, sensors, nodes, etc.) collaborate in solving a machine learning problem (e.g., learning a single global statistical model in a standard federated learning problem), under the coordination of a service provider or central server, without exchanging the raw data of the entities \cite{mcmahan2017communication,kairouz2021advances}. A major advantage of FL is that data from other agents can
be aggregated to improve the local task of each agent.

In a standard FL setting, a set of $N$ agents are connected to a central node (server) to collaboratively solve the following standard distributed optimization problem \cite{konevcny2016federated}:
\begin{equation}\label{class_FL}
	\min_{\mathbf{w} \in \R^d} f(\mathbf{w}):= \frac{1}{N} \sum_{i=1}^N f_i(\mathbf{w}),
\end{equation}
where $\mathbf{w} \in \R^d$ is the size-$d$ vector of model parameters/weights, simply referred to as the \textit{model} hereafter. Moreover, $f_i(\mathbf{w})$ represents the expected loss over the data distribution of agent $i$, which under  a supervised learning setting
can be expressed as
\begin{equation}\label{supervised_ML}
	f_i(\mathbf{w}) := \E_{(\mathbf{x},y) \sim p_i} \left [ l_i(\mathbf{w};\mathbf{x},y) \right ],	
\end{equation}
where $l_i(\mathbf{w};\mathbf{x},y)$ measures the error of model $\mathbf{w}$ in predicting the true label $y \in \mathcal{Y}_i$ given the input $\mathbf{x} \in \mathcal{X}_i$, and $p_i$ is the distribution over $\mathcal{X}_i \times \mathcal{Y}_i$.

The conventional formulation of FL in \eqref{class_FL} only generates a common output for all agents and does not tailor the model to each individual agent. This is significant, particularly under data  \textit{heterogeneous} settings where the probability distribution $p_i$ of agents are not identical. Inspired by the model-agnostic meta learning (MAML) framework in \cite{finn2017model}, the authors in \cite{fallah2020personalized} proposed a personalized variant of FL aimed at obtaining an initial shared model that current and new agents can adapt to their local datasets by performing one or a few steps of gradient descent with their own data. 
More specifically, to achieve a personalized solution, meta FL modifies the objective function to optimize the average loss of devices on their local model obtained after one step of fine-tuning, i.e., 
\begin{equation}\label{Meta_FL}
	\min_{\mathbf{w} \in \R^d} F(\mathbf{w}) := \frac{1}{N} \sum_{i=1}^N f_i(\mathbf{w} - \alpha \nabla f_i(\mathbf{w})),
\end{equation}
where $\alpha \geq 0$ is the stepsize. 
The local updates are then with respect to (w.r.t.) the so-called \textit{meta  functions} $F_i$
 defined as  
\begin{equation}\label{Meta_FL_2}
	F_i(\mathbf{w}):= f_i(\mathbf{w} - \alpha \nabla f_i(\mathbf{w})).
\end{equation}

There is a growing body of work focused  on personalized FL, which aim at personalizing the global models for individual devices to address the data and system heterogeneity. In particular, \cite{fallah2020personalized}
presented a meta FL algorithm, named Per-FedAVg, by allowing the optimization problem to minimize the average loss of the devices on the model obtained after a single step of fine-tuning. \cite{li2020federated} proposed an algorithm, named FedProx, which uses $l_2$ regularization to combine local models. There are several other studies on statistical heterogeneity of the data points of the devices, including \cite{li2020federated, 
	karimireddy2020scaffold, zhao2018federated, li2019convergence}. More recently, \cite{t2020personalized} proposed an algorithm, called pFedMe, for personalized FL  using Moreau envelopes as the  regularized loss functions for the clients. Additional recent work on personalized FL include \cite{lee2024fedl2p,ilhan2023scalefl, zhang2023fedala}.

This paper makes further progress in meta FL
\cite{fallah2020personalized} to present and analyze a generalized meta FL framework. 
In practice, devices may need to apply several fine-tuning steps to adapt the global model to their local data. To this end, we consider a generalized framework 
for the meta FL to optimize the average loss of devices on their local model after  arbitrary number $\nu$ of fine-tuning steps.
The goal is to explore whether one can achieve  a better accuracy and 
faster convergence via this generalized meta FL framework, especially under heterogeneous settings.
A similar setup was considered in \cite{ji2022theoretical}, where the authors  explored the convergence analysis of the multi-step MAML requiring the calculation of Hessian matrices. 
In this paper, we consider more practical generalized meta FL schemes, where the heavy computations of Hessian matrices need to be avoided. In particular, we provide the generalized meta FL algorithm and convergence analysis under practical first-order and Hessian-free approximations, thus avoiding any calculation of Hessian matrices. 
We also independently study the exact case of the algorithm and provide somewhat different bounds for the convergence behavior and properties of meta loss functions. 
Moreover, we perform 
training experiments over real-world datasets 
to quantify the gains on accuracy and convergence speed compared to conventional approaches. Such experiments provide further insights, beyond the asymptotic convergence analysis, about the behavior of the algorithms over finite number of communication rounds.

\section{Generalized Meta FL under Multiple Fine-Tuning Steps}
Once  the objective function is  formulated and  the meta functions $F_i$'s are defined, the gradient descent can be applied to perform the local updates at each $i$-th device, $i=1,\cdots,N$, and over each $k$-th global epoch,  $k=1,\cdots,K$. Each device does $\tau$ local updates before sending the updated model to the server. The local updates, $\mathbf{w}_{k,t}^{i}$ for $1 \leq t \leq \tau$, are governed by the gradient of meta functions as 
\begin{equation}\label{local_update}
	\mathbf{w}_{k,t}^{i} = \mathbf{w}_{k,t-1}^{i} -  \beta \nabla F_i(\mathbf{w}_{k,t-1}^{i}),
\end{equation}
where $\mathbf{w}_{k,0}^{i} := \mathbf{w}_{k-1}$, i.e., the model sent by the server, and
$\beta$ is the local learning rate (stepsize). Note that in the case of $\nu=1$, which corresponds to the conventional meta FL formulated in \eqref{Meta_FL}, 
 the gradient $\nabla F_i$ of the meta function is given by \cite{fallah2020personalized}
\begin{equation}\label{grad_Fi}
	\nabla F_i(\mathbf{w}) = \left(\mathbf{I} - \alpha \nabla^2 f_i(\mathbf{w})\right) \nabla f_i(\mathbf{w} - \alpha \nabla f_i(\mathbf{w})),
\end{equation}
which involves second-order derivatives through the Hessian matrix $\nabla^2 f_i(\mathbf{w})$. In the following, we present the case of
arbitrary $\nu\in\{1,2,\cdots\}$.

\subsection{General $\nu$ Fine-Tuning Steps}
Let us recursively define $\mathbf{w}^i_{l}:=\mathbf{w}^i_{l-1} - \alpha \nabla f_i(\mathbf{w}^i_{l-1})$, i.e., the model after ${l}$ fine-tuning steps, ${l}=1,\cdots,\nu$, with $\mathbf{w}^i_0:=\mathbf{w}$, i.e., the global \textit{meta} model.
We can then formulate the optimization problem as 
\begin{align}
	\min_{\mathbf{w} \in \R^d} F(\mathbf{w}) &:= \frac{1}{N} \sum_{i=1}^N f_i(\mathbf{w}^i_{\nu})
	=	\frac{1}{N} \sum_{i=1}^N f_i(\mathbf{w}^i_{\nu-1} - \alpha \nabla f_i(\mathbf{w}^i_{\nu-1})).\label{Meta_FLt2}
\end{align}
As shown in Appendix \ref{Appendix_gradF}, by applying the chain rule, the gradient of the meta functions $F_i(\mathbf{w}):=f_i(\mathbf{w}^i_{\nu-1} - \alpha \nabla f_i(\mathbf{w}^i_{\nu-1}))$ can be characterized in closed-form as
\begin{align}
	\nabla F_i(\mathbf{w})
	&= \left(\mathbf{I} - \alpha \nabla^2 f_i(\mathbf{w})\right) \left(\mathbf{I} - \alpha \nabla^2 f_i(\mathbf{w}^i_1)\right) \left(\mathbf{I} - \alpha \nabla^2 f_i(\mathbf{w}^i_2)\right) \nonumber\\
	& {\hspace{1.5in}} \cdots \left(\mathbf{I} - \alpha \nabla^2 f_i(\mathbf{w}^i_{\nu-1})\right) \nabla f_i(\mathbf{w}^i_{\nu-1} - \alpha \nabla f_i(\mathbf{w}^i_{\nu-1})).\label{grad_Fit_2_last}
\end{align}
As seen, the gradient of the meta functions and thus the local update rule only involve derivatives of at most second order, i.e., the Hessian matrices of the loss functions, which can be approximated as in \cite{fallah2020personalized} to reduce the computational burden. In Section \ref{sec_1stOrder}, we extend our algorithm and analysis to such first-order approximations.

\begin{remark}\label{Remark_1}
Computing the gradient $\nabla f_i(\mathbf{w})$ at every round is often computationally costly. Hence, we take a batch of data $\D_i$ with respect to distribution $p_i$ to obtain an unbiased estimate $\tnabla f_i(\mathbf{w}, \D_i)$ given by
\begin{equation}\label{unbiased_grad}
	\tnabla f_i(\mathbf{w}, \D_i) := 	\frac{1}{|\D_i|} \sum_{(\mathbf{x},y) \in \D_i} \nabla l_i(\mathbf{w};\mathbf{x},y).
\end{equation}
Similarly, the Hessian $\nabla^2 f_i(\mathbf{w})$ can be replaced by its unbiased estimate $\tnabla^2 f_i(\mathbf{w}, \D'_i)$ evaluated using a batch $\D'_i$.
\end{remark}

\begin{remark}\label{Remark_2}
Based on the recursive expression $\mathbf{w}^i_{l}:=\mathbf{w}^i_{l-1} - \alpha \nabla f_i(\mathbf{w}^i_{l-1})$, for ${l}=1,\cdots,\nu$ (with $\mathbf{w}^i_{0}=\mathbf{w}$), the unbiased estimate of $\mathbf{w}^i_{l}$ at the $i$-th agent can be obtained using $l$ independent batches $\D_{i,0},\D_{i,1}, \cdots, \D_{i,l-1}$ of size $D_{i,0},D_{i,1}, \cdots, D_{i,l-1}$ as
\begin{align}\label{w_l_unbiased}
	\tilde{\mathbf{w}}^i_l:=\tilde{\mathbf{w}}^i_l\left(\left\{\D_{i,j}\right\}_{j=0}^{l-1}\right)=\tilde{\mathbf{w}}^i_{l-1} - \alpha \tnabla f_i(\tilde{\mathbf{w}}^i_{l-1},\D_{i,l-1}), ~~~ {l}=1,\cdots,\nu,
\end{align}
with $\tilde{\mathbf{w}}^i_0:={\mathbf{w}}$. Therefore, using \eqref{grad_Fit_2_last}, the stochastic gradient $\tnabla F_i(\mathbf{w})$ at each local update iteration is computed using $2\nu+1$ independent batches $\left\{\{\D_{i,j}\}_{j=0}^{\nu},\{\D'_{i,j'}\}_{j'=0}^{\nu-1}\right\}$ of size $\left\{\{D_{i,j}\}_{j=0}^{\nu},\{D'_{i,j'}\}_{j'=0}^{\nu-1}\right\}$ as
\begin{align}
	\tnabla F_i(\mathbf{w}) := & \left(\mathbf{I} - \alpha \tnabla^2 f_i(\mathbf{w},\D'_{i,0})\right) \left(\mathbf{I} - \alpha \tnabla^2 f_i(\tilde{\mathbf{w}}^i_1,\D'_{i,1})\right) \left(\mathbf{I} - \alpha \tnabla^2 f_i(\tilde{\mathbf{w}}^i_2,\D'_{i,2})\right) \nonumber\\
	& {\hspace{0.3in}} \cdots \left(\mathbf{I} - \alpha \tnabla^2 f_i(\tilde{\mathbf{w}}^i_{\nu-1},\D'_{i,\nu-1})\right) \tnabla f_i(\tilde{\mathbf{w}}^i_{\nu-1} - \alpha \tnabla f_i(\tilde{\mathbf{w}}^i_{\nu-1},\D_{i,\nu-1}),\D_{i,\nu}).\label{grad_F_stoch}
\end{align}
%
\end{remark}

\subsection{Generalized Meta FL Algorithm}
\begin{algorithm2e}[t]
\caption{Generalized Meta FL Algorithm}
\label{Alg_MetaFL}
\DontPrintSemicolon
\LinesNumbered
\KwIn{Initial model $\mathbf{w}_0$, fraction of active users $r$, number of local updates $\tau$, number of global updates $K$, and number of fine-tuning steps $\nu$.}
\KwOut{Final meta model $\mathbf{w}_K$}
\For{$k\leftarrow 0$ \KwTo $K-1$}{
	Server chooses a subset $\A_k$ of users, uniformly at random, with size $rN$\;
	Server sends $\mathbf{w}_k$ to all users in $\A_k$\;
	\For{{all} $i \in \A_k$}{
		$\mathbf{w}_{k+1,0}^{i} \leftarrow \mathbf{w}_k$\;
		\For{$t\leftarrow 1$ \KwTo $\tau$}{
			$\mathbf{w}_{k+1,t}^{i}=\mathsf{gMeta\_Local{\_}Update}\big(\mathbf{w}_{k+1,t-1}^{i},f_i(.),\nu, \{\D^{~\!t}_{i,j}\}_{j=0}^{\nu},\{\D'^{~\!t}_{i,j'}\}_{j'=0}^{\nu-1}\big)$\;
		}
	Agent $i$ sends $\mathbf{w}_{k+1,\tau}^{i}$ back to the server\;
	}
Server updates the global model by aggregating the received local models: $\mathbf{w}_{k+1} = \frac{1}{rN} \sum_{i \in \A_k} \mathbf{w}_{k+1,\tau}^{i}$\;
}
\end{algorithm2e}

The generalized meta FL algorithm, built upon ideas of the FedAvg algorithm, is summarized in Algorithm \ref{Alg_MetaFL}. Starting from an initial global model $\mathbf{w}_0$, the server at each $k$-th global iteration shares the current global model $\mathbf{w}_k$ with a fraction $r$ of agents. Each selected agent applies $\tau$ local updates on top of $\mathbf{w}_k$ before sharing the final updated model with the central server. The server then aggregates the received local models to obtain the new global (meta) model.

The key distinction of our algorithm is the local updates performed by agents. Such updates are governed by the gradient of the meta functions, as established in \eqref{local_update} and \eqref{grad_F_stoch}. The procedure is summarized in Algorithm \ref{Alg_gMeta_Exact}. The function $\mathsf{gMeta\_Local{\_}Update}(.)$, that defines the local generalized meta update, takes the local loss function and a set of data batches as input and returns the updated model for a given input model and number $\nu$ of fine-tuning steps.  Each agent first calculates and stores $\nu$ intermediate models obtained after $\nu$ gradient descent steps. It then recursively applies \eqref{grad_F_stoch} to estimate $\tnabla F_i(\mathbf{w})$, from which the updated model can be obtained according to \eqref{local_update}. The superscript $t$ in $\D^{~\!t}_{i,j}$ and $\D'^{~\!t}_{i,j'}$
is used to highlight that we apply new batches for each $t$-th local update iteration.
Note that Algorithm \ref{Alg_gMeta_Exact} requires evaluations of the Hessian matrix. In Section \ref{sec_1stOrder}, we present variations of the algorithm under first-order approximations.

\begin{algorithm2e}[t]
	\caption{Generalized Meta Local Update;
		 $\mathsf{gMeta\_Local{\_}Update}(.)$}
	\label{Alg_gMeta_Exact}
	\DontPrintSemicolon
	\LinesNumbered
	\KwIn{Initial local model $\mathbf{w}^{i}_{\rm in}$, local loss function $f_i(.)$, number of fine-tuning steps $\nu$, and data batches $\left\{\D_{i,j}\right\}_{j=0}^{\nu},\{\D'_{i,j'}\}_{j'=0}^{\nu-1}$ for arbitrary agent $i$.}
	\KwOut{Updated local model $\mathbf{w}^{i}_{\rm out}$}
	$\tilde{\mathbf{w}}^{i}_{0}\leftarrow\mathbf{w}^{i}_{\rm in}$\;
	\For{$l\leftarrow 1$ \KwTo $\nu$}{
		Compute  stochastic gradient $\tnabla f_i(\tilde{\mathbf{w}}^{i}_{l-1},\D_{i,l-1})$ using dataset  $\D_{i,l-1}$\;
		Calculate and store $\tilde{\mathbf{w}}_{l}^{i}\leftarrow\tilde{\mathbf{w}}_{l-1}^{i} - \alpha \tnabla f_i(\tilde{\mathbf{w}}_{l-1}^{i},\D_{i,l-1})$\;
	}
		$\mathbf{d}\leftarrow\tnabla f_i(\tilde{\mathbf{w}}_{\nu}^{i},\D_{i,\nu})$\;
	\For{$l'\leftarrow 1$ \KwTo $\nu$}{
		Estimate the Hessian $\tnabla^2 f_i(\tilde{\mathbf{w}}_{\nu-l'}^{i},\D'_{i,\nu-l'})$ using dataset  $\D'_{i,\nu-l'}$\;
		 ${\mathbf{d}}\leftarrow\left(\mathbf{I} - \alpha \tnabla^2 f_i(\tilde{\mathbf{w}}_{\nu-l'}^{i},\D'_{i,\nu-l'})\right)\mathbf{d}$\;
	}
$\mathbf{w}_{\rm out}^{i} \leftarrow \mathbf{w}_{\rm in}^{i} - \beta {\mathbf{d}}$\;
\end{algorithm2e}

\section{Convergence Analysis}\label{convergence}
In this section, we provide a thorough analysis of the convergence behavior of our generalized meta FL approach. The goal is to characterize the overall number of communication rounds between the server and agents to find an $\epsilon$-approximate first-order stationary point, as defined in the following. 
\begin{definition}
	A random vector $\mathbf{w}_\eps \in \R^d$ is called an $\epsilon$-approximate First-Order Stationary Point (FOSP) for problem \eqref{Meta_FLt2} if it satisfies $\E[ \Vert \nabla F(\mathbf{w}_\eps) \Vert^2] \leq \eps$ \cite{fallah2020personalized}.
\end{definition}

\subsection{Assumptions}
We first make the following assumptions, which are 
required for proving our main results.

\begin{assumption}\label{asm:boundedness}
	Functions $f_i$ are bounded below, i.e., $\min_{\mathbf{w} \in \R^d} f_i(\mathbf{w}) > -\infty$. 
\end{assumption}

\begin{assumption}\label{asm_grad}
	For every $i \in\{1,\dots,N\}$, $f_i$ is twice continuously differentiable and $L_i$-smooth, and also, its gradient is bounded by a non-negative constant $B_i$, i.e., 
	\begin{align}\label{grad_asm_1}
		\|\nabla f_i(\mathbf{w})\| \leq B_i,
		\quad \Vert \nabla f_i(\mathbf{w}) - \nabla f_i(\mathbf{u}) \Vert \leq L_i \Vert \mathbf{w} - \mathbf{u} \Vert  \quad \forall \mathbf{w},\mathbf{u} \in \R^d.
	\end{align}  
\end{assumption}

The  following condition is imposed on the Hessian of each $f_i$ which is a common assumption in the analysis of second-order methods.
\begin{assumption}\label{asm_Hesian_Lip}
	For every agent $i \in\{1,\dots,N\}$, the Hessian of function $f_i$ is $\rho_i$-Lipschitz continuous, i.e.,
	\begin{equation}\label{Hessian_Lip}
		\Vert \nabla^2 f_i(\mathbf{w}) - \nabla^2 f_i(\mathbf{u}) \Vert \leq \rho_i \Vert \mathbf{w} - \mathbf{u} \Vert \quad \forall \mathbf{w},\mathbf{u} \in \R^d.
	\end{equation}
\end{assumption}

For the sake of simplification of analysis, in the rest of the paper, we define $B:= \max_{i} B_i$,  $L:= \max_{i} L_i$, and $\rho:= \max_{i} \rho_i$ which can be considered as an upper bound on the norm of gradient of $f_i$, smoothness parameter of $f_i$, and Lipschitz continuity parameter of Hessian $\nabla^2 f_i$, respectively, for $i=1,\dots, N$. 

The next assumption provides upper bounds on the variance and kurtosis of gradient and Hessian estimations. 
\begin{assumption}\label{asm_bounded_var_i}
	For any $\mathbf{w}\in \R^d$, the stochastic gradient $\nabla l_i(\mathbf{x},y;\mathbf{w})$ and Hessian $\nabla^2 l_i(\mathbf{x},y; \mathbf{w})$
	have bounded variance, i.e., 
	\begin{align}
		&\E_{(\mathbf{x},y) \sim p_i} \left [\| \nabla l_i(\mathbf{x},y;\mathbf{w}) - \nabla f_i(\mathbf{w})\|^2 \right ] \leq \sigma_G^2, \label{bounded_var_i} \\
		&\E_{(\mathbf{x},y) \sim p_i} \left [\| \nabla^2 l_i(\mathbf{x},y;\mathbf{w}) - \nabla^2 f_i(\mathbf{w})\|^2 \right ] \leq \sigma_H^2. \label{bounded_hess_i}  
	\end{align}
	We additionally assume that the stochastic gradient and Hessian have a bounded kurtosis (thus a bounded fourth standardized moment), i.e.,
	\begin{align}
		&\E_{(\mathbf{x},y) \sim p_i} \left [\| \nabla l_i(\mathbf{x},y;\mathbf{w}) - \nabla f_i(\mathbf{w})\|^4 \right ] \leq \kappa_G, \label{kurt_grad} 
		\\&\E_{(\mathbf{x},y) \sim p_i} \left [\| \nabla^2 l_i(\mathbf{x},y;\mathbf{w}) - \nabla^2 f_i(\mathbf{w})\|^4 \right ] \leq \kappa_H. \label{kurt_hess}  
	\end{align}  
\end{assumption}

Finally, the following assumption characterizes the \textit{similarity} between the tasks of agents.
\begin{assumption}\label{asm_similarity}
	For any $\mathbf{w}\in \R^d$, the gradient and Hessian of local functions $f_i(\mathbf{w})$ and the average function $f(\mathbf{w})=\sum_{i=1}^Nf_i(\mathbf{w})$ satisfy the following conditions
	\begin{equation}\label{bounded_var}
		\frac{1}{N} \sum_{i=1}^N \| \nabla f_i(\mathbf{w}) - \nabla f(\mathbf{w})\|^2 \leq \gamma_G^2, \qquad \frac{1}{N} \sum_{i=1}^N \| \nabla^2 f_i(\mathbf{w}) - \nabla^2 f(\mathbf{w})\|^2 \leq \gamma_H^2. 
	\end{equation}
\end{assumption}
Assumption~\ref{asm_similarity} captures the diversity between the gradients and Hessians of agents. 
Note that under Assumption \ref{asm_grad}, the conditions in Assumption~\ref{asm_similarity} are automatically satisfied for $\gamma_G = 2B$ and $\gamma_H = 2L$.

\subsection{Several Useful Lemmas}
The following result shows that, under Assumptions~\ref{asm_grad} and \ref{asm_Hesian_Lip}, the local meta functions
 $F_i(\mathbf{w}):=f_i(\mathbf{w}^i_{\nu-1} - \alpha \nabla f_i(\mathbf{w}^i_{\nu-1}))$ and their average function $F(\mathbf{w})=\frac{1}{N}\sum_{i=1}^N F_i(\mathbf{w})$ are smooth. 
\begin{lemma}\label{lemma_F_smooth}
	Recall the definition of $F_i(\mathbf{w})$ in \eqref{Meta_FLt2}.
	If  Assumptions~\ref{asm_grad} and \ref{asm_Hesian_Lip} hold, then $F_i$ is smooth
	with parameter
	\begin{align}\label{LF}
		L_F := L(1+\alpha L)^{2\nu} + B\alpha\rho(1+\alpha L)^{\nu -1}\sum_{l=0}^{\nu-1}(1+\alpha L)^l.
	\end{align}
	As a result, the average function $F(\mathbf{w})=\frac{1}{N}\sum_{i=1}^N F_i(\mathbf{w})$ is also smooth
	with parameter $L_F$.
\end{lemma}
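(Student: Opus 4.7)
The plan is to work directly from the closed-form expression \eqref{grad_Fit_2_last} for $\nabla F_i(\mathbf{w})$ and bound $\|\nabla F_i(\mathbf{w}) - \nabla F_i(\mathbf{u})\|$ by $L_F\|\mathbf{w}-\mathbf{u}\|$ via a triangle-inequality decomposition. Writing $\mathbf{A}_l(\mathbf{w}) := \mathbf{I} - \alpha\nabla^2 f_i(\mathbf{w}_l^i(\mathbf{w}))$ for $l=0,1,\ldots,\nu-1$ and $\mathbf{g}(\mathbf{w}) := \nabla f_i(\mathbf{w}_\nu^i(\mathbf{w}))$, we have $\nabla F_i(\mathbf{w}) = \mathbf{A}_0(\mathbf{w}) \mathbf{A}_1(\mathbf{w}) \cdots \mathbf{A}_{\nu-1}(\mathbf{w})\,\mathbf{g}(\mathbf{w})$, and the smoothness constant will split into a contribution from changing $\mathbf{g}$ and a contribution from changing the matrix product. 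The constant $L_F$ in the claim is already organized this way: the first summand $L(1+\alpha L)^{2\nu}$ will come from the $\mathbf{g}$-change, and the second summand $B\alpha\rho(1+\alpha L)^{\nu-1}\sum_l(1+\alpha L)^l$ from the product-change.

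The first preparatory step is to control how the intermediate iterates $\mathbf{w}_l^i$ depend on the starting point. By induction on $l$, using Assumption~\ref{asm_grad} ($L$-smoothness of $f_i$) and the recursion $\mathbf{w}_l^i = \mathbf{w}_{l-1}^i - \alpha\nabla f_i(\mathbf{w}_{l-1}^i)$, one gets $\|\mathbf{w}_l^i(\mathbf{w}) - \mathbf{w}_l^i(\mathbf{u})\| \le (1+\alpha L)^l\|\mathbf{w}-\mathbf{u}\|$. Consequently, from Assumption~\ref{asm_grad} and Assumption~\ref{asm_Hesian_Lip}, $\|\mathbf{A}_l(\mathbf{w})\|\le 1+\alpha L$, $\|\mathbf{g}(\mathbf{w})\|\le B$, $\|\mathbf{A}_l(\mathbf{w})-\mathbf{A}_l(\mathbf{u})\| \le \alpha\rho(1+\alpha L)^l\|\mathbf{w}-\mathbf{u}\|$, and $\|\mathbf{g}(\mathbf{w})-\mathbf{g}(\mathbf{u})\| \le L(1+\alpha L)^\nu\|\mathbf{w}-\mathbf{u}\|$.

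The main step is to write
\begin{equation*}
\nabla F_i(\mathbf{w}) - \nabla F_i(\mathbf{u}) = \Big(\prod_{l=0}^{\nu-1}\mathbf{A}_l(\mathbf{w})\Big)\bigl(\mathbf{g}(\mathbf{w})-\mathbf{g}(\mathbf{u})\bigr) + \Big(\prod_{l=0}^{\nu-1}\mathbf{A}_l(\mathbf{w}) - \prod_{l=0}^{\nu-1}\mathbf{A}_l(\mathbf{u})\Big)\mathbf{g}(\mathbf{u}),
\end{equation*}
bounding the first piece directly by $L(1+\alpha L)^{2\nu}\|\mathbf{w}-\mathbf{u}\|$. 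The main obstacle, and the only nontrivial step, is the second piece: handling the difference of the two $\nu$-fold matrix products. I would use the standard telescoping identity
\begin{equation*}
\prod_{l=0}^{\nu-1}\mathbf{A}_l(\mathbf{w}) - \prod_{l=0}^{\nu-1}\mathbf{A}_l(\mathbf{u}) = \sum_{k=0}^{\nu-1}\Big(\prod_{l=0}^{k-1}\mathbf{A}_l(\mathbf{w})\Big)\bigl(\mathbf{A}_k(\mathbf{w})-\mathbf{A}_k(\mathbf{u})\bigr)\Big(\prod_{l=k+1}^{\nu-1}\mathbf{A}_l(\mathbf{u})\Big),
\end{equation*}
apply submultiplicativity of the operator norm together with the per-factor bounds from the previous paragraph, which gives $\|\prod_l\mathbf{A}_l(\mathbf{w})-\prod_l\mathbf{A}_l(\mathbf{u})\|\le \alpha\rho(1+\alpha L)^{\nu-1}\sum_{k=0}^{\nu-1}(1+\alpha L)^k\|\mathbf{w}-\mathbf{u}\|$, and finally multiply by $\|\mathbf{g}(\mathbf{u})\|\le B$.

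Summing the two pieces yields exactly the stated constant $L_F = L(1+\alpha L)^{2\nu} + B\alpha\rho(1+\alpha L)^{\nu-1}\sum_{l=0}^{\nu-1}(1+\alpha L)^l$. For the averaged claim, since $F=\frac{1}{N}\sum_i F_i$ and each $F_i$ is $L_F$-smooth, the triangle inequality gives $\|\nabla F(\mathbf{w}) - \nabla F(\mathbf{u})\|\le \frac{1}{N}\sum_i\|\nabla F_i(\mathbf{w})-\nabla F_i(\mathbf{u})\| \le L_F\|\mathbf{w}-\mathbf{u}\|$, completing the proof. The sole bookkeeping difficulty is keeping the exponents in the telescoping sum straight; everything else is a direct application of the assumed smoothness, gradient-boundedness, and Hessian-Lipschitz constants.
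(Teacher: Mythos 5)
Your proposal is correct and follows essentially the same route as the paper: the same split of $\nabla F_i(\mathbf{w})-\nabla F_i(\mathbf{u})$ into a gradient-change term (bounded by $L(1+\alpha L)^{2\nu}\|\mathbf{w}-\mathbf{u}\|$) and a matrix-product-change term (bounded by $B\|\mathbf{B}^{(\nu)}_{\mathbf{w}}-\mathbf{B}^{(\nu)}_{\mathbf{u}}\|$), using the same per-factor bounds $\|\mathbf{w}_l(\mathbf{w})-\mathbf{w}_l(\mathbf{u})\|\le(1+\alpha L)^l\|\mathbf{w}-\mathbf{u}\|$, $\|\mathbf{I}-\alpha\nabla^2 f_i(\cdot)\|\le 1+\alpha L$, and the Hessian-Lipschitz condition. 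The only difference is that you handle the product difference via the explicit telescoping identity while the paper runs an induction on the recursion $\mathbf{B}^{(\nu)}_{\mathbf{w}}=\mathbf{B}^{(\nu-1)}_{\mathbf{w}}(\mathbf{I}-\alpha\nabla^2 f_i(\mathbf{w}_{\nu-1}))$; these are the same argument (the telescoping sum is the unrolled induction) and both yield exactly $c_\nu=\alpha\rho(1+\alpha L)^{\nu-1}\sum_{l=0}^{\nu-1}(1+\alpha L)^l$.
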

\begin{proof}
	See Appendix \ref{App_A}.
\end{proof}
As a sanity check, when $\nu=1$ the smoothness parameter reduces to $L_F=L(1+\alpha L)^2+B\alpha\rho$, which further reduces to $4L+B\alpha\rho$ for $\alpha \leq 1/L$. This matches the smoothness parameter in \cite{fallah2020personalized} for Per-FedAvg algorithm.

The following intermediate lemma characterizes the difference between the unbiased estimates of intermediate models with the true ones.
\begin{lemma}\label{lemma_w_tilde_difference}
	Recall that the unbiased estimate of intermediate models, at each $i$-th agent, can be obtained using $l$ independent batches $\D_{i,0},\D_{i,1}, \cdots, \D_{i,l-1}$ of size $D_{i,0},D_{i,1}, \cdots, D_{i,l-1}$ as \eqref{w_l_unbiased}, for $l=1,2,,\cdots\nu$. The distances of the estimated models $\tilde{\mathbf{w}}^i_l$ with the true models ${\mathbf{w}}^i_l$ (obtained using exact/true gradients) are upper bounded as
	\begin{align}\label{w_distance}
		\|\E\left[\tilde{\mathbf{w}}^i_l-{\mathbf{w}}^i_l\right]\|&\leq\E\left[\|\tilde{\mathbf{w}}^i_l-{\mathbf{w}}^i_l\|\right]\leq h_{i,l}:=\alpha \sigma_G\sum_{j=0}^{l-1} \frac{(1+\alpha L)^j}{\sqrt{D_{i,l-1-j}}},\\
		\E\left[\|\tilde{\mathbf{w}}^i_l-{\mathbf{w}}^i_l\|^2\right]&\leq h'_{i,l}:=\frac{\alpha^2 \sigma^2_G(2+2\alpha^2 L^2)^{l-1}}{{D_{i,0}}} + 2\alpha^2 \sigma^2_G\sum_{j=1}^{l-1} \frac{(2+2\alpha^2 L^2)^{l-j-1}}{{D_{i,j}}},\\
		\E\left[\|\tilde{\mathbf{w}}^i_l-{\mathbf{w}}^i_l\|^4\right]&\leq h''_{i,l}:=\frac{\alpha^4\left(\kappa_G+3(D_{i,0}-1)\sigma^4_G\right) (8+64\alpha^4 L^4)^{l-1}}{{D^3_{i,0}}} 
		\nonumber\\
		&{\hspace{1.55cm}}+ 64\alpha^4 \sum_{j=1}^{l-1} \frac{\left(\kappa_G+3(D_{i,j}-1)\sigma^4_G\right)(8+64\alpha^4 L^4)^{l-j-1}}{{D^3_{i,j}}}.
	\end{align}
\end{lemma}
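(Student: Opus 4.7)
The plan is to proceed by induction on $l$, starting from the error recursion obtained by differencing the stochastic and exact updates. Setting $\mathbf{e}_l := \tilde{\mathbf{w}}^i_l - {\mathbf{w}}^i_l$ with $\mathbf{e}_0 = \mathbf{0}$, and adding and subtracting $\nabla f_i(\tilde{\mathbf{w}}^i_{l-1})$ inside the bracket, one obtains
\begin{equation*}
\mathbf{e}_l = \mathbf{e}_{l-1} - \alpha\bigl[\nabla f_i(\tilde{\mathbf{w}}^i_{l-1}) - \nabla f_i({\mathbf{w}}^i_{l-1})\bigr] - \alpha \mathbf{u}_l,
\end{equation*}
where $\mathbf{u}_l := \tnabla f_i(\tilde{\mathbf{w}}^i_{l-1}, \D_{i,l-1}) - \nabla f_i(\tilde{\mathbf{w}}^i_{l-1})$ is mean-zero conditionally on the history $\mathcal{F}_{l-1}$ (because $\D_{i,l-1}$ is drawn fresh), and within-batch independence together with Assumption~\ref{asm_bounded_var_i} gives $\E[\|\mathbf{u}_l\|^2\mid\mathcal{F}_{l-1}]\leq\sigma_G^2/D_{i,l-1}$.

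For the first bound I would take norms in the recursion, use the triangle inequality together with the $L$-smoothness of $f_i$ (Assumption~\ref{asm_grad}) to obtain $\|\mathbf{e}_l\| \leq (1+\alpha L)\|\mathbf{e}_{l-1}\| + \alpha\|\mathbf{u}_l\|$, take expectation, and apply Jensen to the noise: $\E\|\mathbf{u}_l\|\leq(\E\|\mathbf{u}_l\|^2)^{1/2}\leq\sigma_G/\sqrt{D_{i,l-1}}$. The resulting scalar linear recursion unrolls to $h_{i,l}$, and the outer inequality $\|\E\mathbf{e}_l\|\leq\E\|\mathbf{e}_l\|$ is just Jensen again. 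For the second moment, conditioning on $\mathcal{F}_{l-1}$ and exploiting the mean-zero property of $\mathbf{u}_l$ kills the cross term in the expansion of $\|\mathbf{e}_l\|^2$, giving $\E[\|\mathbf{e}_l\|^2\mid\mathcal{F}_{l-1}] \leq (1+\alpha L)^2\|\mathbf{e}_{l-1}\|^2 + \alpha^2\sigma_G^2/D_{i,l-1}$. The elementary inequality $2\alpha L\leq 1+\alpha^2 L^2$ upgrades $(1+\alpha L)^2$ to $2+2\alpha^2 L^2$, after which unrolling and majorizing produces $h'_{i,l}$.

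The fourth-moment bound is where the main effort lies, and the key ingredient is a Rosenthal-type control of $\mathbf{u}_l$. Writing $\mathbf{u}_l = D^{-1}\sum_{d=1}^{D}\xi_d$ with $D:=D_{i,l-1}$ and $\xi_d$ iid mean-zero, and expanding $\E\|\sum_d\xi_d\|^4 = \sum_{i,j,k,m}\E[(\xi_i^\top\xi_j)(\xi_k^\top\xi_m)]$, independence forces all but four index patterns to vanish: the diagonal $i{=}j{=}k{=}m$ contributing $D\,\E\|\xi\|^4\leq D\kappa_G$, and three pair matchings each bounded by $D(D-1)(\E\|\xi\|^2)^2\leq D(D-1)\sigma_G^4$, where for the crossed pairings one uses $\mathrm{tr}(\Sigma^2)\leq(\mathrm{tr}\,\Sigma)^2$ for the psd covariance $\Sigma$ of $\xi$. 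Dividing by $D^4$ yields $\E[\|\mathbf{u}_l\|^4\mid\mathcal{F}_{l-1}]\leq(\kappa_G+3(D-1)\sigma_G^4)/D^3$. On the state side I would apply $(x+y)^4\leq 8(x^4+y^4)$ once to split $\|\mathbf{e}_l\|^4\leq 8\|\mathbf{e}_{l-1}\|^4+8\|\mathbf{e}_l-\mathbf{e}_{l-1}\|^4$, and again on the increment to separate the Lipschitz and noise contributions, producing the recursion
\begin{equation*}
\E\|\mathbf{e}_l\|^4 \leq (8+64\alpha^4 L^4)\,\E\|\mathbf{e}_{l-1}\|^4 + 64\alpha^4\bigl(\kappa_G+3(D_{i,l-1}-1)\sigma_G^4\bigr)/D_{i,l-1}^3.
\end{equation*}
Handling the base case $l=1$ directly via $\mathbf{e}_1=-\alpha\mathbf{u}_1$ (which shaves the factor $64$ at $j=0$) and then unrolling yields $h''_{i,l}$. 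The hard part throughout is the Rosenthal moment computation for $\mathbf{u}_l$; once that is in hand, the rest is a disciplined application of smoothness, Young-type inequalities, and the tower property.
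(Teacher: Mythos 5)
Your proposal is correct and follows essentially the same route as the paper: the same error recursion obtained by adding and subtracting $\nabla f_i(\tilde{\mathbf{w}}^i_{l-1})$, the same induction with the same recursion coefficients for all three moments, and the same combinatorial fourth-moment computation for the batch noise (which the paper isolates as Lemma~\ref{lemma_intermediate}), including the $\mathrm{tr}(\Sigma^2)\le(\mathrm{tr}\,\Sigma)^2$ step for the crossed pairings. The only cosmetic difference is in the second-moment step, where you kill the cross term via the conditional mean-zero property and then relax $(1+\alpha L)^2$ to $2+2\alpha^2L^2$, whereas the paper applies $(a+b)^2\le 2a^2+2b^2$ directly; both yield a bound dominated by $h'_{i,l}$.
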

\begin{proof}
	See Appendix \ref{App_B0}.
\end{proof}

The convergence analysis of our algorithm requires the characterization of the upper bounds on the bias and variance of gradient estimation of  meta functions $F_i$'s, which are provided in the following lemma.
\begin{lemma}\label{lemma_err_moment}
	Recall  the definition of the gradient estimate $\tnabla F_i(\mathbf{w})$ in \eqref{grad_F_stoch}, computed using
	$2\nu+1$ independent batches $\left\{\D_{i,0},\cdots,\D_{i,\nu},\D'_{i,0},\cdots,\D'_{i,\nu-1}\right\}$ of size $\left\{D_{i,0},\cdots,D_{i,\nu},D'_{i,0},\cdots,D'_{i,\nu-1}\right\}$. If Assumptions~\ref{asm_grad}-\ref{asm_bounded_var_i} hold, then for any $\alpha$, $\nu>1$, and $\mathbf{w} \in \R^d$ we have
	\begin{align}
		\left \| \E \left [ \tnabla F_i(\mathbf{w}) - \nabla F_i(\mathbf{w}) \right ] \right \| & \leq  
		\mu_{F_i}:=(1+\alpha L)^{\nu}\left(\frac{\sigma_G}{\sqrt{D_{i,\nu}}}+Lh_{i,\nu}\right)+B\alpha\rho(1+\alpha L)^{\nu-1}\sum_{j=1}^{\nu-1}h_{i,j}\nonumber\\
		&\hspace{1.7cm}+\sqrt{d_{i,\nu}\left(\frac{\sigma^2_G}{D_{i,\nu}} +
			L^2 h'_{i,\nu}\right)},\\
		\E \left [ \left \| \tnabla F_i(\mathbf{w}) - \nabla F_i(\mathbf{w})\right \|^2 \right ] & \leq \sigma_{F_i}^2:=3(1+\alpha L)^{2\nu}\left(\frac{\sigma^2_G}{D_{i,\nu}} +
		L^2 h'_{i,\nu}\right) +3B^2d_{i,\nu}\nonumber\\
		&\hspace{1.7cm}+6\sqrt{2d'_{i,\nu}\left(\frac{\kappa_G+3(D_{i,\nu}-1)\sigma^4_G}{D_{i,\nu}^3}+L^4 h''_{i,\nu}\right)},	
	\end{align}
	where $h_{i,\nu}$ and $h'_{i,\nu}$ are defined in Lemma \ref{lemma_w_tilde_difference}. Moreover, $d_{i,\nu}$ and $d'_{i,\nu}$ are the solutions to the following recursive relations
	\begin{align}\label{d_nu_rec_lemma} 
		d_{i,\nu}=&2d_{i,\nu-1}\left(1+\alpha L + \alpha \frac{\sigma_H}{\sqrt{D'_{i,\nu-1}}}\right)^{\!\!2}+ 
		2\alpha^2 (1+\alpha L)^{2\nu-2}\left(\frac{\sigma^2_H}{D'_{i,\nu-1}} +
		\rho^2 h'_{i,\nu-1}\right),\\
		d'_{i,\nu} = &  {~} 64d'_{i,\nu-1}\left((1+\alpha L)^4+\alpha^4\frac{\kappa_H+3(D'_{i,\nu-1}-1)\sigma^4_H}{D'^{~\!\!3}_{i,\nu-1}}\right)\nonumber\\
		&+64\alpha^4(1+\alpha L)^{4\nu-4}\left(\rho^4 h''_{i,\nu-1}+\frac{\kappa_H+3(D'_{i,\nu-1}-1)\sigma^4_H}{D'^{~\!\!3}_{i,\nu-1}}\right),
	\end{align}
	with the initial values $d_{i,1}=\alpha^2{\sigma_H^2}/{D'_{i,0}}$ and $d'_{i,1}=\alpha^4 ({\kappa_H}+3(D'_{i,0}-1)\sigma^4_H)/{D'^{~\!\!3}_{i,0}}$, respectively.
\end{lemma}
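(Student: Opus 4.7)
The plan is to decompose the meta-gradient estimation error via a product-rule identity and reduce everything to moments of per-factor gradient/Hessian errors and to the intermediate-point deviations controlled by Lemma~\ref{lemma_w_tilde_difference}. Writing $\tnabla F_i=\tilde M\tilde g$ and $\nabla F_i=Mg$, where $\tilde M$ and $M$ are the $\nu$-fold Hessian products appearing in \eqref{grad_F_stoch} and \eqref{grad_Fit_2_last}, and $\tilde g$, $g$ are the innermost stochastic and exact gradients evaluated at $\tilde{\mathbf{w}}^i_\nu$ and $\mathbf{w}^i_\nu$, I would use the identity
\[
\tnabla F_i-\nabla F_i \;=\; M(\tilde g-g) \;+\; (\tilde M-M)\,g \;+\; (\tilde M-M)(\tilde g-g),
\]
which cleanly separates a gradient-only, a Hessian-only, and a cross contribution.

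For the bias $\mu_{F_i}$ I would take norms and expectations of the three-term decomposition, bounding $\|M\|\le(1+\alpha L)^\nu$ and $\|g\|\le B$. The first term reduces to $(1+\alpha L)^\nu\,\E\|\tilde g-g\|$, which I split through $\nabla f_i(\tilde{\mathbf{w}}^i_\nu)$ and control by unbiasedness of the stochastic gradient together with the bound $\E\|\tilde{\mathbf{w}}^i_\nu-\mathbf{w}^i_\nu\|\le h_{i,\nu}$ from Lemma~\ref{lemma_w_tilde_difference}. The second term is handled by telescoping $\tilde M-M$ over the $\nu$ Hessian factors and taking conditional expectations over each Hessian batch $\D'_{i,l}$; unbiasedness eliminates the sampling noise inside every factor and leaves only the Hessian-Lipschitz bias $\rho\,\E\|\tilde{\mathbf{w}}^i_l-\mathbf{w}^i_l\|\le\rho h_{i,l}$, with the $l=0$ summand vanishing because $\tilde{\mathbf{w}}^i_0=\mathbf{w}$. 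The cross term is absorbed via Cauchy--Schwarz into $\sqrt{d_{i,\nu}(\sigma_G^2/D_{i,\nu}+L^2h'_{i,\nu})}$ once $d_{i,\nu}$ is identified as an upper bound on $\E\|\tilde M-M\|^2$.

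For the variance $\sigma_{F_i}^2$ I would apply $(a+b+c)^2\le 3(a^2+b^2+c^2)$ to the same decomposition and take expectations. The first two terms give $3(1+\alpha L)^{2\nu}\E\|\tilde g-g\|^2$ and $3B^2\, d_{i,\nu}$; the former is split as before using unbiasedness and Lemma~\ref{lemma_w_tilde_difference}. The cross term is handled at the fourth-moment level via Cauchy--Schwarz, using a Marcinkiewicz--Zygmund-type bound to control $\E\|\tilde g-g\|^4$ by $(\kappa_G+3(D_{i,\nu}-1)\sigma_G^4)/D_{i,\nu}^3+L^4h''_{i,\nu}$ (up to an absolute constant producing the $6\sqrt{2}$ prefactor), and absorbing $\E\|\tilde M-M\|^4$ into $d'_{i,\nu}$.

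The recursions for $d_{i,\nu}$ and $d'_{i,\nu}$ are then derived by induction on $\nu$ from the outermost factorization $\tilde M-M=\tilde A_{\nu-1}(\tilde M_{\nu-1}-M_{\nu-1})+(\tilde A_{\nu-1}-A_{\nu-1})M_{\nu-1}$, where $\tilde A_l=\mathbf{I}-\alpha\tnabla^2 f_i(\tilde{\mathbf{w}}^i_l,\D'_{i,l})$: conditioning on the batches defining the inner product decouples it from the outer factor, and $(a+b)^2\le 2(a^2+b^2)$ (respectively $(a+b)^4\le 8(a^4+b^4)$), the triangle bound $\|\tilde A_{\nu-1}\|\le 1+\alpha L+\alpha\|\tnabla^2 f_i-\nabla^2 f_i\|$, and single-factor second/fourth-moment bounds on the Hessian discrepancy---which pick up the Lipschitz contributions $\rho^2 h'_{i,\nu-1}$ and $\rho^4 h''_{i,\nu-1}$ because $\tilde{\mathbf{w}}^i_{\nu-1}\neq\mathbf{w}^i_{\nu-1}$---yield the stated recursions. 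The base cases are immediate from $\tilde{\mathbf{w}}^i_0=\mathbf{w}$, which makes the single-step comparison a pure sampling error. The main obstacle I anticipate is the dependence bookkeeping: $\tilde M$ and $\tilde g$ share the $\D_{i,l}$ batches through the intermediate points, so the three pieces of the decomposition are not independent, and the argument must be organized as iterated conditional expectations---over the Hessian batches first, then over the gradient batches---while keeping careful track of which Lipschitz term feeds into which moment.
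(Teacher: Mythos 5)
Your proposal follows essentially the same route as the paper's proof: the identity $\tnabla F_i-\nabla F_i=\mathbf{B}^{(\nu)}_{\mathbf{w}}\mathbf{e}_2+\mathbf{E}^{(\nu)}_1\nabla f_i(\mathbf{w}^i_{\nu})+\mathbf{E}^{(\nu)}_1\mathbf{e}_2$ is exactly the paper's decomposition, the bias of the Hessian-product factor is handled by the same tower-rule/unbiasedness argument reducing to the Hessian-Lipschitz terms $\rho h_{i,l}$ (with the $l=0$ term vanishing), the cross terms are absorbed by Cauchy--Schwarz at the second and fourth moment levels, the fourth-moment control is the paper's Lemma~\ref{lemma_intermediate}, and the recursions for $d_{i,\nu}$ and $d'_{i,\nu}$ are obtained by the same outer-factor induction with conditional decoupling. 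The proposal is correct and matches the paper's argument.
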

\begin{proof}
Refer to Appendix \ref{App_B}.
\end{proof}

Next, we use the similarity conditions for the functions $f_i$ in Assumption~\ref{asm_similarity} to study the similarity between gradients of the functions $F_i$.

\begin{lemma}\label{lemma_F_similar}
	Recall the definition of $F_i(\mathbf{w})$ in \eqref{Meta_FLt2}, and suppose that the conditions in Assumptions~\ref{asm_grad}, \ref{asm_Hesian_Lip}, and \ref{asm_similarity} are satisfied. Then, for any $\mathbf{w} \in \R^d$, we have
	\begin{align}
		\frac{1}{N} \sum_{i=1}^N \| \nabla F_i(\mathbf{w}) - \nabla F(\mathbf{w})\|^2 \leq \gamma_F^2:= 15B^2g_{\nu}+6\gamma_G^2(1+\alpha L)^{2\nu} \left (1+ \alpha^2 L^2 \right ),
	\end{align}
	where 
	\begin{align}
		g_{\nu}:=\alpha^2 \gamma_H^2 (1+\alpha L)^{2\nu-2}\left(2^{\nu-1}+\sum_{l=1}^{\nu-1}2^l\right).
	\end{align}
\end{lemma}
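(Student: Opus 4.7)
The plan is to compare each $\nabla F_i(\mathbf{w})$ to a common reference obtained by running the $\nu$-step fine-tuning trajectory on the average loss $f = \frac{1}{N}\sum_j f_j$, and then to control every deviation through Assumptions \ref{asm_grad}, \ref{asm_Hesian_Lip}, and \ref{asm_similarity}. Concretely, I would set $\hat{\mathbf{w}}_0 = \mathbf{w}$, $\hat{\mathbf{w}}_l = \hat{\mathbf{w}}_{l-1} - \alpha \nabla f(\hat{\mathbf{w}}_{l-1})$ for $l = 1,\dots,\nu$, and define
$$
\hat A := \prod_{l=0}^{\nu-1}\bigl(\mathbf{I} - \alpha \nabla^2 f(\hat{\mathbf{w}}_l)\bigr), \qquad \hat b := \nabla f(\hat{\mathbf{w}}_\nu).
$$
Since the sample variance about the empirical mean is bounded by the sample variance about any fixed point,
$$
\frac{1}{N}\sum_{i=1}^N \|\nabla F_i(\mathbf{w}) - \nabla F(\mathbf{w})\|^2 \;\leq\; \frac{1}{N}\sum_{i=1}^N \|\nabla F_i(\mathbf{w}) - \hat A \hat b\|^2,
$$
so it suffices to bound the right-hand side.

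Writing $\nabla F_i(\mathbf{w}) = A_i b_i$ with $A_i$ and $b_i$ the obvious $f_i$-analogues of $\hat A$ and $\hat b$ (with $\hat{\mathbf{w}}_l$ replaced by $\mathbf{w}^i_l$), I would use the identity $A_i b_i - \hat A \hat b = A_i(b_i - \hat b) + (A_i - \hat A)\hat b$ together with $\|x+y\|^2 \leq 2\|x\|^2 + 2\|y\|^2$, the operator bound $\|A_i\| \leq (1+\alpha L)^\nu$ from $L$-smoothness, and $\|\hat b\| \leq B$, to obtain
$$
\|\nabla F_i(\mathbf{w}) - \hat A \hat b\|^2 \;\leq\; 2(1+\alpha L)^{2\nu}\|b_i - \hat b\|^2 + 2B^2 \|A_i - \hat A\|^2.
$$
The first summand is then controlled by adding and subtracting $\nabla f(\mathbf{w}^i_\nu)$: the piece $\nabla f_i(\mathbf{w}^i_\nu) - \nabla f(\mathbf{w}^i_\nu)$ averages to $\gamma_G^2$ by Assumption \ref{asm_similarity}, while $\|\nabla f(\mathbf{w}^i_\nu) - \nabla f(\hat{\mathbf{w}}_\nu)\|^2 \leq L^2 \|\mathbf{w}^i_\nu - \hat{\mathbf{w}}_\nu\|^2$ by smoothness, yielding
$$
\frac{1}{N}\sum_{i=1}^N \|b_i - \hat b\|^2 \;\leq\; 2\gamma_G^2 + 2 L^2 u_\nu, \qquad u_l := \frac{1}{N}\sum_{i=1}^N \|\mathbf{w}^i_l - \hat{\mathbf{w}}_l\|^2.
$$

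For the Hessian-product term I would use the one-step recursion
$$
A_i^{(l)} - \hat A^{(l)} = \bigl(A_i^{(l-1)} - \hat A^{(l-1)}\bigr)\bigl(\mathbf{I} - \alpha \nabla^2 f(\hat{\mathbf{w}}_{l-1})\bigr) + A_i^{(l-1)} \alpha\bigl(\nabla^2 f(\hat{\mathbf{w}}_{l-1}) - \nabla^2 f_i(\mathbf{w}^i_{l-1})\bigr),
$$
apply Young's inequality, and split each Hessian difference into a statistical part (bounded on average by $\gamma_H^2$ via Assumption \ref{asm_similarity}) and a displacement part (bounded by $\rho\|\mathbf{w}^i_{l-1} - \hat{\mathbf{w}}_{l-1}\|$ via Assumption \ref{asm_Hesian_Lip}). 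Unrolling produces a geometric sum with base $2(1+\alpha L)^2$, which is precisely where the $2^{l}$ factors inside $g_\nu$ come from. Simultaneously, writing $\mathbf{w}^i_l - \hat{\mathbf{w}}_l = [(\mathbf{w}^i_{l-1} - \hat{\mathbf{w}}_{l-1}) - \alpha(\nabla f(\mathbf{w}^i_{l-1}) - \nabla f(\hat{\mathbf{w}}_{l-1}))] - \alpha(\nabla f_i(\mathbf{w}^i_{l-1}) - \nabla f(\mathbf{w}^i_{l-1}))$ and averaging gives the analogous Young-type recursion $u_l \leq 2(1+\alpha L)^2 u_{l-1} + 2\alpha^2 \gamma_G^2$ with $u_0 = 0$, which is then substituted back into the Hessian bound.

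The main obstacle is careful bookkeeping of the constants: both the $A_i - \hat A$ recursion and the $u_l$ recursion contribute geometric-in-$l$ factors of the form $2^{l}(1+\alpha L)^{2l}$, and these have to be combined so that after multiplication by $B^2$ the Hessian-similarity contribution collapses exactly into $15 B^2 g_\nu$ with the prescribed coefficient $(1+\alpha L)^{2\nu-2}\bigl(2^{\nu-1} + \sum_{l=1}^{\nu-1} 2^l\bigr)$, while the gradient-similarity contribution compacts into $6 \gamma_G^2 (1+\alpha L)^{2\nu}(1+\alpha^2 L^2)$. Aligning the exponent $2\nu-2$ in the Hessian term with $2\nu$ in the gradient term requires folding the $u_l$ bound back into the Hessian recursion one level at a time rather than using a loose uniform-in-$l$ bound, and it is this constant-matching step—rather than any deep inequality—that I expect to be the most delicate part of the argument.
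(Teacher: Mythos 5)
Your proposal takes a genuinely different decomposition from the paper's, and that difference is precisely where it breaks down. You compare each agent's quantities to a reference trajectory $\hat{\mathbf{w}}_l$ generated by gradient descent on the \emph{average} function $f$, which forces you to control the trajectory divergence $u_l = \frac{1}{N}\sum_i\|\mathbf{w}^i_l-\hat{\mathbf{w}}_l\|^2$. The paper never changes the evaluation points: it writes $\nabla F_i(\mathbf{w}) = \big(\mathbf{H}^{(\nu)}_{\mathbf{w}}+\mathbf{M}_i^{(\nu)}\big)\big(\nabla f(\mathbf{w}^i_{\nu-1}-\alpha\nabla f(\mathbf{w}^i_{\nu-1}))+\mathbf{r}_i\big)$, where $\mathbf{H}^{(\nu)}_{\mathbf{w}}=\prod_{l=0}^{\nu-1}(\mathbf{I}-\alpha\nabla^2 f(\mathbf{w}^i_l))$ and the reference gradient are built from $f$ but evaluated at the \emph{agent's own} iterates $\mathbf{w}^i_l$. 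Consequently every difference that appears is a pure function-similarity difference at a common point: $\frac{1}{N}\sum_i\|\mathbf{r}_i\|^2\le 2\gamma_G^2(1+\alpha^2L^2)$ directly from Assumption 5, and the recursion for $\frac{1}{N}\sum_i\|\mathbf{M}_i^{(\nu)}\|^2$ is $g_\nu = 2(1+\alpha L)^2 g_{\nu-1}+2\alpha^2(1+\alpha L)^{2\nu-2}\gamma_H^2$, involving only $\gamma_H$ — the Hessian-Lipschitz constant $\rho$ never enters the $\mathbf{M}_i$ bound at all. The coefficient $15$ then comes from a three-term Cauchy--Schwarz split plus the cross term $\frac{1}{N}\sum_i\|\mathbf{M}_i^{(\nu)}\|^2\|\mathbf{r}_i\|^2\le 4B^2g_\nu$.

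The concrete gap in your route is that it cannot reproduce the stated constant. Your one-step recursion for $A_i^{(l)}-\hat A^{(l)}$ must split $\nabla^2 f_i(\mathbf{w}^i_{l-1})-\nabla^2 f(\hat{\mathbf{w}}_{l-1})$ into a similarity part ($\gamma_H$) \emph{and} a displacement part $\rho\|\mathbf{w}^i_{l-1}-\hat{\mathbf{w}}_{l-1}\|$, and likewise $b_i-\hat b$ picks up $L^2 u_\nu$. Since $u_l = \mathcal{O}\big(\alpha^2\gamma_G^2\sum_j 2^j(1+\alpha L)^{2j}\big)$ is nonzero for $l\ge 1$, your Hessian-product bound necessarily contains terms proportional to $\rho^2\alpha^2\gamma_G^2$ and your gradient bound contains $L^2 u_\nu$ rather than $\alpha^2L^2\gamma_G^2$ (already a factor of $2$ too large at $\nu=1$). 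The target $g_\nu = \alpha^2\gamma_H^2(1+\alpha L)^{2\nu-2}\big(2^{\nu-1}+\sum_{l=1}^{\nu-1}2^l\big)$ contains neither $\rho$ nor $\gamma_G$, so no amount of careful bookkeeping will make your extra terms "collapse exactly into $15B^2 g_\nu$" as you claim; they are structurally absent from the lemma's bound. Your approach would prove a valid but strictly different (and messier) similarity bound. To prove the lemma as stated, keep the agent's own trajectory fixed and only swap $f_i\to f$ inside the Hessians and the final gradient, as the paper does. (Your opening step — variance about the mean is at most the second moment about an $i$-independent reference — is, to your credit, cleaner than the paper's corresponding step, whose "constant" still depends on $i$ through $\mathbf{w}^i_l$; but that does not rescue the constants downstream.)
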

\begin{proof}
	See Appendix \ref{App_C}.
\end{proof}


\subsection{Convergence Result}
	Recall that at any communication round $k \geq 1$, and for any agent $i \in \{1,\cdots,N\}$, we can define a sequence of local updates  
\begin{equation}\label{local_update_2}
	\mathbf{w}_{k,t}^{i} = \mathbf{w}_{k,t-1}^{i} -  \beta \tnabla F_i(\mathbf{w}_{k,t-1}^{i}),
\end{equation}
for $1 \leq t \leq \tau$, with 
$\mathbf{w}_{k,0}^{i}:= \mathbf{w}_{k-1}$. Also, let $\mu_{F}:= \max_{i} ~\!\mu_{F_i}$ and $\sigma_{F}:= \max_{i}~\! \sigma_{F_i}$. 
We then have the following theorem for the  convergence result.
\begin{theorem}\label{theorem_main} 
	Suppose that the conditions in Assumptions~\ref{asm:boundedness}-\ref{asm_similarity} are satisfied, and recall the definitions of $L_F$, $\mu_F$, $\sigma_F$, and $\gamma_F$ from Lemmas \ref{lemma_F_smooth}-\ref{lemma_F_similar}. Consider running Algorithm \ref{Alg_MetaFL} for $K$ rounds with $\tau$ local updates in each round and with $\beta \leq 1/(10 \tau L_F)$.
	Then, the following first-order stationary condition holds
	\begin{align}\label{eq_thrm}
	&\hspace{-0.5cm}	\frac{1}{\tau K} \sum_{k=0}^{K-1} \sum_{t=0}^{\tau-1} \E \left [ \| \nabla F(\bar{\mathbf{w}}_{k+1,t})\|^2 \right ] \leq 	\frac{4 (F(\mathbf{w}_0)-F^*)}{\beta \tau K}\nonumber \\
		& \hspace{0cm} +  \bigO(1) \left ( \beta L_F \left (1+ \beta L_F \tau(\tau\!-\!1) \right ) \sigma_F^2 + \beta L_F \gamma_F^2 \left (\frac{1-r}{r(n-1)} + \beta L_F \tau(\tau\!-\!1) \right) + 
		\mu^2_{F}
		\right ),
	\end{align}
	where $F^*$ is the value of $F$ at a (hypothetical) optimal solution (minimizing $F$), and  $\bar{\mathbf{w}}_{k+1,t}$ is the average of iterates of agents in $\A_k$ at time $t$, i.e., $
	\bar{\mathbf{w}}_{k+1,t} := \frac{1}{rn}  	\sum_{i \in \A_k} \mathbf{w}_{k+1,t}^i$, with $\bar{\mathbf{w}}_{k+1,0} := \mathbf{w}_k$ and $\bar{\mathbf{w}}_{k+1,\tau} := \mathbf{w}_{k+1}$.
\end{theorem}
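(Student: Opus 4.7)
The plan is to run a standard FedAvg-style descent analysis on the \emph{averaged} iterates $\bar{\mathbf{w}}_{k+1,t}$, invoking the $L_F$-smoothness of $F$ from Lemma \ref{lemma_F_smooth} together with the bias/variance bounds of the stochastic meta-gradient $\tnabla F_i$ from Lemma \ref{lemma_err_moment} and the heterogeneity bound from Lemma \ref{lemma_F_similar}. The novelty relative to an unbiased local-SGD analysis is that $\tnabla F_i$ carries a nonzero bias $\mu_F$, so the error budget must track this bias alongside the familiar variance $\sigma_F^2$ and the client-drift quantity $e^i_{k,t}:=\E[\|\mathbf{w}^i_{k+1,t}-\bar{\mathbf{w}}_{k+1,t}\|^2]$.

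The first concrete step is to apply $L_F$-smoothness,
\[
F(\bar{\mathbf{w}}_{k+1,t+1}) \leq F(\bar{\mathbf{w}}_{k+1,t}) - \beta\big\langle \nabla F(\bar{\mathbf{w}}_{k+1,t}),\bar{\mathbf{g}}_{k+1,t}\big\rangle + \tfrac{1}{2}\beta^2 L_F\|\bar{\mathbf{g}}_{k+1,t}\|^2,
\]
with $\bar{\mathbf{g}}_{k+1,t}:=\frac{1}{rN}\sum_{i\in\A_k}\tnabla F_i(\mathbf{w}^i_{k+1,t})$, take conditional expectation over $\A_k$ and the fresh batches, and decompose each $\tnabla F_i(\mathbf{w}^i_{k+1,t})$ as $\nabla F(\bar{\mathbf{w}}_{k+1,t})$ plus three residuals: a zero-average heterogeneity term $\nabla F_i(\bar{\mathbf{w}}_{k+1,t})-\nabla F(\bar{\mathbf{w}}_{k+1,t})$, a local-drift term $\nabla F_i(\mathbf{w}^i_{k+1,t})-\nabla F_i(\bar{\mathbf{w}}_{k+1,t})$ bounded in norm by $L_F\|\mathbf{w}^i_{k+1,t}-\bar{\mathbf{w}}_{k+1,t}\|$, and a stochastic error $\tnabla F_i(\mathbf{w}^i_{k+1,t})-\nabla F_i(\mathbf{w}^i_{k+1,t})$ with bias $\mu_F$ and variance $\sigma_F^2$. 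Young's inequality combined with Lemmas \ref{lemma_err_moment}-\ref{lemma_F_similar} then lets the cross term dominate: a $-\tfrac{\beta}{4}\|\nabla F(\bar{\mathbf{w}}_{k+1,t})\|^2$ contribution survives, with perturbations in $L_F^2 e^i_{k,t}$, $\mu_F^2$, and (through $\|\bar{\mathbf{g}}_{k+1,t}\|^2$) $\sigma_F^2$ and $\gamma_F^2$.

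The second step is to bound $\|\bar{\mathbf{g}}_{k+1,t}\|^2$ and the aggregate drift $E_t:=\frac{1}{N}\sum_i e^i_{k,t}$. For the former I would split the variance into stochastic-batch noise, contributing $\sigma_F^2/(rN)$ after averaging, and uniform-without-replacement sampling of $\A_k$, which by an exchangeability computation contributes $\frac{1-r}{r(N-1)}\cdot\frac{1}{N}\sum_i\|\nabla F_i(\bar{\mathbf{w}}_{k+1,t})-\nabla F(\bar{\mathbf{w}}_{k+1,t})\|^2 \leq \frac{1-r}{r(N-1)}\gamma_F^2$; the mean part of $\bar{\mathbf{g}}_{k+1,t}$ contributes $\|\nabla F(\bar{\mathbf{w}}_{k+1,t})\|^2$ plus drift and bias corrections. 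For $E_t$ I would set up a one-step recursion: since $\mathbf{w}^i_{k+1,t+1}-\bar{\mathbf{w}}_{k+1,t+1}$ is driven by $-\beta(\tnabla F_i(\mathbf{w}^i_{k+1,t})-\bar{\mathbf{g}}_{k+1,t})$, each local step inflates $E_t$ by a factor $(1+c_1\beta L_F)$ and adds $\bigO(\beta^2)(\sigma_F^2+\mu_F^2+\gamma_F^2+\|\nabla F(\bar{\mathbf{w}}_{k+1,t})\|^2)$. Under $\beta\leq 1/(10\tau L_F)$, unrolling over the inner loop gives $E_t\leq \bigO(\beta^2\tau(\tau-1))\cdot(\sigma_F^2+\mu_F^2+\gamma_F^2+\|\nabla F\|^2)$, producing the $\beta L_F\tau(\tau-1)$ multiplicative factor visible in \eqref{eq_thrm}.

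Plugging these bounds into the descent inequality and rearranging so that $-\tfrac{\beta}{4}\|\nabla F\|^2$ absorbs the resulting $\bigO(\beta^3 L_F\tau(\tau-1))\|\nabla F\|^2$ and $\bigO(\beta^3 L_F)\|\nabla F\|^2$ corrections (valid under the stepsize condition), I would then sum over $t=0,\dots,\tau-1$, which telescopes to $F(\mathbf{w}_k)-F(\mathbf{w}_{k+1})$ since $\bar{\mathbf{w}}_{k+1,0}=\mathbf{w}_k$ and $\bar{\mathbf{w}}_{k+1,\tau}=\mathbf{w}_{k+1}$, and a further sum over $k=0,\dots,K-1$ is bounded by $F(\mathbf{w}_0)-F^*$ via Assumption \ref{asm:boundedness}. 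Dividing by $\beta\tau K$ delivers \eqref{eq_thrm}. The main obstacle will be the \emph{drift recursion in the presence of bias}: unlike the unbiased Per-FedAvg analysis of \cite{fallah2020personalized}, $\mu_F$ enters both the cross term and the drift recursion, and one must verify that no cross contribution of order $\mu_F\|\nabla F\|$ survives the combination of Young's inequality and the stepsize choice, so that the bias appears only as an additive $\mu_F^2$ in \eqref{eq_thrm}. A secondary but still delicate point is the exchangeability calculation that pins down the exact $(1-r)/(r(N-1))$ factor for sampling without replacement.
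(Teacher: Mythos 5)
Your proposal is correct and follows essentially the same route as the paper: the paper's own proof of Theorem~\ref{theorem_main} (Appendix~\ref{App_thrm}) simply defers to the Per-FedAvg analysis of \cite{fallah2020personalized}, noting that the local update rule \eqref{local_update_2} is identical and only the constants $L_F$, $\mu_F$, $\sigma_F$, $\gamma_F$ change, and your sketch is a faithful reconstruction of exactly that argument (smoothness descent on the averaged iterate, bias/variance/heterogeneity decomposition of the stochastic meta-gradient, drift recursion under $\beta \leq 1/(10\tau L_F)$, without-replacement sampling variance giving the $(1-r)/(r(N-1))$ factor, and telescoping). You in fact supply more detail than the paper does, and you correctly flag the one genuinely delicate point, namely that the bias $\mu_F$ must be shown to enter only additively as $\mu_F^2$.
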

\begin{proof}
See Appendix \ref{App_thrm}.
\end{proof}
Note that with proper choice of parameters one can attain an $\epsilon$-approximate FOSP for our algorithm (with $\epsilon\to 0$). To achieve that, one needs to satisfy $1/\beta\tau K=\mathcal{O}(\epsilon)$, $\beta=\mathcal{O}(\epsilon)$, and $\beta^2\tau(\tau-1)\approx \beta^2\tau^2=\mathcal{O}(\epsilon)$. The resulting choice of parameters is then summarized in the following remark. To quantify the behavior of the additional term $\mu_F^2$ in \eqref{eq_thrm}, it can be verified that, under the regime of large batch sizes $D\to\infty$ and small stepsizes $\alpha\to 0$, we have $\mu_F^2=\mathcal{O}(1/D)$.
\begin{remark}
Suppose that the conditions in Theorem~\ref{theorem_main} are satisfied. If we set the number of local updates as $\tau =\mathcal{O}(\epsilon^{-1/2})$, number of communication rounds with the server as $K =\mathcal{O}(\epsilon^{-3/2})$, and stepsize as $\beta =\mathcal{O}(\epsilon) $, then we find an $\mathcal{O}(\epsilon+1/{D})$-first-order stationary point of $F$.
As a result, to achieve an $\mathcal{O}(\epsilon+1/{D})$-approximate FOSP, our algorithm requires $K =\mathcal{O}(\epsilon^{-3/2})$ rounds of communication between the server and agents. Indeed, by setting $D=\mathcal{O}(\epsilon^{-1})$ the algorithm can find an $\epsilon$-first-order stationary point of $F$ for arbitrary small $\epsilon>0$.
\end{remark}

It is worth mentioning that the bounds presented in Lemmas \ref{lemma_F_smooth}-\ref{lemma_F_similar} increase with $\nu$, leading to a larger value for the right-hand side of \eqref{eq_thrm} in Theorem \ref{theorem_main}. Note that a similar observation holds for the Per-FedAvg algorithm (i.e., the case of $\nu=1$) in \cite{fallah2020personalized} compared to the FedAvg algorithm. This should not be interpreted as a slower convergence for Per-FedAvg compared to FedAvg or for larger values of $\nu$ in our algorithm. Rather, the goal of Theorem \ref{theorem_main} was to prove the convergence of our algorithm with a same \textit{asymptotic} speed $K =\mathcal{O}(\epsilon^{-3/2})$ as Per-FedAvg algorithm, thus guaranteeing the achievability of an $\epsilon$-approximate FOSP for our algorithm under any arbitrary small $\epsilon>0$. In practice, one may be more interested in the performance over finite number of communication rounds, especially given the stringent communication-efficiency requirement of FL systems. As shown in Section \ref{experiments}, our experiments reveal achieving better accuracy and faster convergence for our algorithm compared to Per-FedAvg under finite number of global epochs. 

\section{First-Order Approximations}\label{sec_1stOrder}
As seen, our algorithm requires computing Hessian-vector products which are computationally costly in some applications. In this section, we apply the first-order approximations \cite{fallah2020personalized} to the update rule and present the convergence results under these approximations.

Note that the approximations are applied to the gradient estimations to approximately calculate $\tnabla F_i(\mathbf{w})$. The true (theoretical) gradient $\nabla F_i(\mathbf{w})$ is still defined as \eqref{grad_Fit_2_last}. Therefore, the results of Lemmas \ref{lemma_F_smooth} and \ref{lemma_F_similar} remain unchanged, and thus the characterization of the convergence theorem only requires extending Lemma \ref{lemma_err_moment}, i.e., deriving $\mu_{F_i}$ and $\sigma_{F_i}^2$, under these approximation. The convergence theorem is then characterized under these approximations by substituting the new $\mu_{F_i}$ and $\sigma_{F_i}^2$ in Theorem \ref{theorem_main}.

\subsection{First-Order (FO) Approximation}
The first-order (FO) approximation is obtained by simply ignoring the second-order
terms in the update rule. As a result, using \eqref{grad_F_stoch}, the stochastic gradient is approximated as
\begin{align}
	\tnabla F_i^{\rm FO}(\mathbf{w}) := & \tnabla f_i(\tilde{\mathbf{w}}^i_{\nu-1} - \alpha \tnabla f_i(\tilde{\mathbf{w}}^i_{\nu-1},\D_{i,\nu-1}),\D_{i,\nu}).\label{grad_tilde_FO}
\end{align}

\begin{algorithm2e}[t]
	\caption{Generalized Meta First-Order Local Update;
		$\mathsf{gMeta\_FO\_Local{\_}Update}(.)$}
	\label{Alg_gMeta_FO}
	\DontPrintSemicolon
	\LinesNumbered
	\KwIn{Initial model $\mathbf{w}^{i}_{\rm in}$, local loss function $f_i(.)$, fine-tuning steps $\nu$, and  batches $\left\{\D_{i,j}\right\}_{j=0}^{\nu}$.}
	\KwOut{Updated local model $\mathbf{w}^{i}_{\rm out}$}
	$\tilde{\mathbf{w}}\leftarrow\mathbf{w}^{i}_{\rm in}$\;
	\For{$l\leftarrow 1$ \KwTo $\nu$}{
		$\tilde{\mathbf{w}}\leftarrow\tilde{\mathbf{w}} - \alpha \tnabla f_i(\tilde{\mathbf{w}},\D_{i,l-1})$\;
	}
	$\mathbf{w}_{\rm out}^{i} \leftarrow \mathbf{w}_{\rm in}^{i} - \beta \tnabla f_i(\tilde{\mathbf{w}},\D_{i,\nu})$\;
\end{algorithm2e}

The corresponding algorithm for performing the local updates under the FO approximation is 
presented in Algorithm \ref{Alg_gMeta_FO}. By substituting the function $\mathsf{gMeta\_FO\_Local{\_}Update}(.)$ into line 7 of Algorithm \ref{Alg_MetaFL}, we obtain the meta FL algorithm under this approximation. 
Moreover, to characterize the convergence behavior, we have the following result for $\mu_{F_i}$ and $\sigma_{F_i}^2$ under the FO approximation.

\begin{lemma}\label{lemma_err_moment_FO}
	Recall  the definition of the gradient estimate $\tnabla F^{\rm FO}_i(\mathbf{w})$ in \eqref{grad_tilde_FO} for the FO approximation, computed using
	$\nu+1$ independent batches $\{\D_{i,0},\cdots,\D_{i,\nu}\}$ of size $\{D_{i,0},\cdots,D_{i,\nu}\}$. If Assumptions~\ref{asm_grad}-\ref{asm_bounded_var_i} hold, then for any $\alpha$ and $\mathbf{w} \in \R^d$ we have
	\begin{align}
		\left \| \E \left [ \tnabla F_i^{\rm FO}(\mathbf{w}) - \nabla F_i(\mathbf{w}) \right ] \right \| & \leq  
		\mu^{\rm FO}_{F_i}:=\frac{\sigma_G}{\sqrt{D_{i,\nu}}}+Lh_{i,\nu}+B\left((1+\alpha L)^{\nu} + 1\right),\\
		\E \left [ \left \| \tnabla F_i^{\rm FO}(\mathbf{w}) - \nabla F_i(\mathbf{w})\right \|^2 \right ] & \leq (\sigma^{\rm FO}_{F_i})^2:=\frac{2\sigma^2_G}{D_{i,\nu}} +2
		L^2 h'_{i,\nu} + 2 B^2\left((1+\alpha L)^{\nu} + 1\right)^2,	
	\end{align}
	where $h_{i,\nu}$ and $h'_{i,\nu}$ are defined in Lemma \ref{lemma_w_tilde_difference}.
\end{lemma}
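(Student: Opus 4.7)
The plan is to split the error $\tnabla F_i^{\rm FO}(\mathbf{w}) - \nabla F_i(\mathbf{w})$ into two pieces: the \emph{bias} introduced by dropping the Hessian factors $\prod_l (\mathbf{I} - \alpha \nabla^2 f_i(\mathbf{w}^i_l))$ from the exact expression in \eqref{grad_Fit_2_last}, and the \emph{sampling error} coming from replacing exact gradients by mini-batch estimates both in the intermediate iterates and in the outermost gradient evaluation. Let $\mathbf{w}^i_\nu := \mathbf{w}^i_{\nu-1} - \alpha \nabla f_i(\mathbf{w}^i_{\nu-1})$ and $\tilde{\mathbf{w}}^i_\nu := \tilde{\mathbf{w}}^i_{\nu-1} - \alpha \tnabla f_i(\tilde{\mathbf{w}}^i_{\nu-1},\D_{i,\nu-1})$, so that $\tnabla F_i^{\rm FO}(\mathbf{w}) = \tnabla f_i(\tilde{\mathbf{w}}^i_\nu, \D_{i,\nu})$. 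Write
\begin{align*}
\tnabla F_i^{\rm FO}(\mathbf{w}) - \nabla F_i(\mathbf{w}) = \underbrace{\bigl[\tnabla f_i(\tilde{\mathbf{w}}^i_\nu,\D_{i,\nu}) - \nabla f_i(\mathbf{w}^i_\nu)\bigr]}_{=:E_{\rm samp}} + \underbrace{\bigl[\nabla f_i(\mathbf{w}^i_\nu) - \nabla F_i(\mathbf{w})\bigr]}_{=:E_{\rm bias}}.
\end{align*}

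For $E_{\rm bias}$, I would factor out $\nabla f_i(\mathbf{w}^i_\nu)$ from \eqref{grad_Fit_2_last}, writing $E_{\rm bias} = \bigl(\mathbf{I} - \prod_{l=0}^{\nu-1}(\mathbf{I} - \alpha \nabla^2 f_i(\mathbf{w}^i_l))\bigr) \nabla f_i(\mathbf{w}^i_\nu)$. Since each factor has operator norm at most $1+\alpha L$ under Assumption~\ref{asm_grad}, the bracket has norm at most $1+(1+\alpha L)^\nu$, and $\|\nabla f_i(\mathbf{w}^i_\nu)\|\le B$, giving the deterministic bound $\|E_{\rm bias}\| \le B\bigl((1+\alpha L)^\nu + 1\bigr)$. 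For $E_{\rm samp}$, I would further decompose it as
\begin{align*}
E_{\rm samp} = \bigl[\tnabla f_i(\tilde{\mathbf{w}}^i_\nu,\D_{i,\nu}) - \nabla f_i(\tilde{\mathbf{w}}^i_\nu)\bigr] + \bigl[\nabla f_i(\tilde{\mathbf{w}}^i_\nu) - \nabla f_i(\mathbf{w}^i_\nu)\bigr].
\end{align*}
Conditioning on $\tilde{\mathbf{w}}^i_\nu$, the first bracket is a zero-mean mini-batch error over $\D_{i,\nu}$ with variance at most $\sigma_G^2/D_{i,\nu}$ by Assumption~\ref{asm_bounded_var_i}; the second bracket is controlled by $L$-smoothness and Lemma~\ref{lemma_w_tilde_difference}, giving $\E[\|\nabla f_i(\tilde{\mathbf{w}}^i_\nu)-\nabla f_i(\mathbf{w}^i_\nu)\|] \le L h_{i,\nu}$ and $\E[\|\nabla f_i(\tilde{\mathbf{w}}^i_\nu)-\nabla f_i(\mathbf{w}^i_\nu)\|^2] \le L^2 h'_{i,\nu}$.

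The first claim then follows from Jensen's inequality: $\|\E[\tnabla F_i^{\rm FO}-\nabla F_i]\| \le \E[\|E_{\rm samp}\|] + \|E_{\rm bias}\|$, where $\E[\|E_{\rm samp}\|]$ is bounded above using a second Jensen step on the mini-batch variance, yielding $\sigma_G/\sqrt{D_{i,\nu}} + L h_{i,\nu}$. For the second claim, I would use $\|a+b\|^2 \le 2\|a\|^2 + 2\|b\|^2$ to split $\|\tnabla F_i^{\rm FO}-\nabla F_i\|^2 \le 2\|E_{\rm samp}\|^2 + 2\|E_{\rm bias}\|^2$; the key observation is that the cross-term inside $\E[\|E_{\rm samp}\|^2]$ vanishes because $\D_{i,\nu}$ is independent of $\tilde{\mathbf{w}}^i_\nu$ and $\tnabla f_i(\cdot,\D_{i,\nu})$ is conditionally unbiased, so the two sub-errors inside $E_{\rm samp}$ combine additively to $\sigma_G^2/D_{i,\nu} + L^2 h'_{i,\nu}$, matching the stated coefficients exactly.

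There is no real technical obstacle here; the argument is a straightforward add-and-subtract followed by the conditional independence observation needed to prevent the factor-of-two blow-up in $\sigma_G^2/D_{i,\nu}$. The only bookkeeping point to watch is that the sample noise of the inner update $\tilde{\mathbf{w}}^i_{\nu-1}$ propagates only through Lemma~\ref{lemma_w_tilde_difference} and must not be double-counted with the final mini-batch noise on $\D_{i,\nu}$; using the decomposition above keeps these contributions cleanly separated.
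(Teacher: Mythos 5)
Your proposal is correct and follows essentially the same route as the paper: your $E_{\rm samp}$ is exactly the paper's $\mathbf{e}_2$ with the same conditional-unbiasedness argument (their \eqref{h'_l_rec_stepB}) yielding $\sigma_G^2/D_{i,\nu} + L^2 h'_{i,\nu}$, and your $E_{\rm bias}$ is their $\mathbf{g}_i^{\rm FO}(\mathbf{w}) - \nabla F_i(\mathbf{w}) = (\mathbf{I} - \mathbf{B}^{(\nu)}_{\mathbf{w}})\nabla f_i(\mathbf{w}^i_\nu)$ bounded identically by $B((1+\alpha L)^{\nu}+1)$. The final assembly via the triangle inequality and $\|a+b\|^2 \le 2\|a\|^2 + 2\|b\|^2$ also matches the paper's proof in Appendix~\ref{App_FO}.
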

\begin{proof}
	See Appendix \ref{App_FO}.
\end{proof}

\subsection{Hessian-Free (HF) Approximation}
The idea behind the HF approximation \cite{fallah2020personalized,fallah2020convergence} is that for any function $g$, the product of the Hessian $\nabla^2 g (\mathbf{w}) $ by any vector $\mathbf{v}$ can be approximated as 
\begin{align}\label{HF_approx}
	\nabla^2 g (\mathbf{w})\mathbf{v} \approx	\frac{\nabla g(\mathbf{w}+\delta \mathbf{v})-\nabla g(\mathbf{w}-\delta \mathbf{v})}{2\delta},
\end{align}
with an error of at most $\rho \delta \|\mathbf{v}\|^2$, where $\rho $ is the Lipschitz continuity parameter of the Hessian of $g$.

Using \eqref{grad_F_stoch} and \eqref{HF_approx}, we can approximate the stochastic gradient of the meta  functions as
\begin{align}\label{grad_tilde_Fi_HF}
	\tnabla F^{\rm HF}_i(\mathbf{w}) =  \tilde{\mathbf{d}}_{\mathbf{w},i}^{(\nu-1)} - \alpha \frac{\tnabla f_i(\mathbf{w}+\delta \tilde{\mathbf{d}}_{\mathbf{w},i}^{(\nu-1)},\D'_{i,0})-\tnabla f_i(\mathbf{w}-\delta \tilde{\mathbf{d}}_{\mathbf{w},i}^{(\nu-1)},\D'_{i,0})}{2\delta},
\end{align}
where the vector $\tilde{\mathbf{d}}_{\mathbf{w},i}^{(l)}$, for $l=1,\cdots,\nu$, is recursively defined as
\begin{align}\label{d_tilde_HF}
	\tilde{\mathbf{d}}_{\mathbf{w},i}^{(l)} =  \tilde{\mathbf{d}}_{\mathbf{w},i}^{(l-1)} - \alpha \frac{\tnabla f_i(\tilde{\mathbf{w}}^i_{\nu-l}+\delta \tilde{\mathbf{d}}_{\mathbf{w},i}^{(l-1)},\D'_{i,\nu-l})-\tnabla f_i(\tilde{\mathbf{w}}^i_{\nu-l}-\delta \tilde{\mathbf{d}}_{\mathbf{w},i}^{(l-1)},\D'_{i,\nu-l})}{2\delta},
\end{align}
with the initial value of $\tilde{\mathbf{d}}_{\mathbf{w},i}^{(0)}:=\tnabla f_i(\tilde{\mathbf{w}}^i_{\nu-1} - \alpha \tnabla f_i(\tilde{\mathbf{w}}^i_{\nu-1},\D_{i,\nu-1}),\D_{i,\nu})$.
Note that $\tnabla F^{\rm HF}_i(\mathbf{w}) =  \tilde{\mathbf{d}}_{\mathbf{w},i}^{(\nu)}$ since $\tilde{\mathbf{w}}^i_0:=\mathbf{w}$. 
Algorithm \ref{Alg_gMeta_HF} 
 summarizes the local update procedure under the HF approximation.
Substituting the function $\mathsf{gMeta\_HF\_Local{\_}Update}(.)$ in line 7 of Algorithm \ref{Alg_MetaFL} results in the meta FL algorithm under the HF approximation.
The following lemma presents the bounds on $\mu_{F_i}$ and $\sigma_{F_i}^2$, required for the convergence analysis under this approximation.

\begin{algorithm2e}[t]
	\caption{Generalized Meta Hessian-Free Local Update;
		$\mathsf{gMeta\_HF\_Local{\_}Update}(.)$}
	\label{Alg_gMeta_HF}
	\DontPrintSemicolon
	\LinesNumbered
	\KwIn{Initial local model $\mathbf{w}^{i}_{\rm in}$, local loss function $f_i(.)$, number of fine-tuning steps $\nu$, HF approximation parameter $\delta$, and data batches $\left\{\D_{i,j}\right\}_{j=0}^{\nu},\{\D'_{i,j'}\}_{j'=0}^{\nu-1}$ for  agent $i$.}
	\KwOut{Updated local model $\mathbf{w}^{i}_{\rm out}$}
	$\tilde{\mathbf{w}}^{i}_{0}\leftarrow\mathbf{w}^{i}_{\rm in}$\;
	\For{$l\leftarrow 1$ \KwTo $\nu$}{
		Compute  stochastic gradient $\tnabla f_i(\tilde{\mathbf{w}}^{i}_{l-1},\D_{i,l-1})$ using dataset  $\D_{i,l-1}$\;
		Calculate and store $\tilde{\mathbf{w}}_{l}^{i}\leftarrow\tilde{\mathbf{w}}_{l-1}^{i} - \alpha \tnabla f_i(\tilde{\mathbf{w}}_{l-1}^{i},\D_{i,l-1})$\;
	}
	${\mathbf{d}}\leftarrow\tnabla f_i(\tilde{\mathbf{w}}_{\nu}^{i},\D_{i,\nu})$\;
	\For{$l'\leftarrow 1$ \KwTo $\nu$}{
		$\mathbf{g}^+_{i}\leftarrow\tnabla f_i(\tilde{\mathbf{w}}^i_{\nu-l'}+\delta {\mathbf{d}},\D'_{i,\nu-l'})$\;
		$\mathbf{g}^-_{i}\leftarrow\tnabla f_i(\tilde{\mathbf{w}}^i_{\nu-l'}-\delta {\mathbf{d}},\D'_{i,\nu-l'})$\;
		${\mathbf{d}}\leftarrow{\mathbf{d}}-\frac{\alpha}{2\delta}(\mathbf{g}^+_{i}-\mathbf{g}^-_{i})$\;
	}
	$\mathbf{w}_{\rm out}^{i} \leftarrow \mathbf{w}_{\rm in}^{i} - \beta {\mathbf{d}}$\;
\end{algorithm2e}

	\begin{lemma}\label{lemma_err_moment_HF}
		Recall  the definition of the gradient estimate $\tnabla F^{\rm HF}_i(\mathbf{w})$ in \eqref{grad_tilde_Fi_HF} for the HF approximation, computed using
		$2\nu+1$ independent batches $\{\D_{i,0},\cdots,\D_{i,\nu},\D'_{i,0},\cdots,\D'_{i,\nu-1}\}$ of size $\{D_{i,0},\cdots,D_{i,\nu},D'_{i,0},\cdots,D'_{i,\nu-1}\}$. If Assumptions~\ref{asm_grad}-\ref{asm_bounded_var_i} hold, then for any $\alpha$ and $\mathbf{w} \in \R^d$ we have
		\begin{align}
			\left \| \E \left [ \tnabla F^{\rm HF}_i(\mathbf{w}) - \nabla F^{\rm HF}_i(\mathbf{w}) \right ] \right \| & \leq  
			\mu^{\rm HF}_{F_i}:=p_{i,\nu}+q_{i,\nu},\\
			\E \left [ \left \| \tnabla F^{\rm HF}_i(\mathbf{w}) - \nabla F^{\rm HF}_i(\mathbf{w})\right \|^2 \right ] & \leq (\sigma^{\rm HF}_{F_i})^2:=2p'_{i,\nu}+2q^2_{i,\nu},	
		\end{align}
		where $p_{i,\nu}$ and $p'_{i,\nu}$ are the solutions to the recursive expressions
		\begin{align}
			p_{i,l} &:=\left(1+ \alpha L\right) p_{i,l-1}+  \frac{\alpha}{\delta}\left(\frac{\sigma_G}{\sqrt{\D'_{i,\nu-l}}} +
			L h_{i,\nu-l} \right),\\
			p'_{i,l} &:=3p'_{i,l-1}\left(1+ \alpha^2 L^2\right) +  \frac{3\alpha^2}{2\delta^2}\left( \frac{\sigma^2_G}{\D'_{i,\nu-l}} +
			2L^2 h'_{i,\nu-l}\right),
		\end{align}
		with the initial values of $p_{i,0}:={\sigma_G}/{\sqrt{D_{i,\nu}}} +
		L h_{i,\nu}$ and $p'_{i,0}:= {\sigma^2_G}/{D_{i,\nu}} +
		L^2 h'_{i,\nu}$, respectively. Moreover, $h_{i,l}$ and $h'_{i,l}$ are defined in Lemma \ref{lemma_w_tilde_difference}, and $q_{i,\nu}$ is given by
		\begin{align}\label{q_nu}
			q_{i,\nu}=\alpha \rho \delta B^2\sum_{i=0}^{\nu-1}(1+\alpha L)^{\nu+i-1}.
		\end{align}
	\end{lemma}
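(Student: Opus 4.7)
The plan is to isolate the two distinct sources of error in $\tnabla F^{\rm HF}_i(\mathbf{w})$ by introducing an auxiliary deterministic HF sequence $\mathbf{d}^{{\rm HF},(l)}_{\mathbf{w},i}$ obtained from the recursion \eqref{d_tilde_HF} after replacing every stochastic gradient $\tnabla f_i(\cdot,\cdot)$ with its population counterpart $\nabla f_i(\cdot)$ and every approximate model $\tilde{\mathbf{w}}^i_j$ with the exact $\mathbf{w}^i_j$. The total error then decomposes as
\begin{align*}
\tnabla F^{\rm HF}_i(\mathbf{w}) - \nabla F^{\rm HF}_i(\mathbf{w}) = \bigl(\tilde{\mathbf{d}}^{(\nu)}_{\mathbf{w},i} - \mathbf{d}^{{\rm HF},(\nu)}_{\mathbf{w},i}\bigr) + \bigl(\mathbf{d}^{{\rm HF},(\nu)}_{\mathbf{w},i} - \nabla F^{\rm HF}_i(\mathbf{w})\bigr),
\end{align*}
where the first parenthesis collects the stochastic error (to be controlled by $p_{i,\nu}$ and $p'_{i,\nu}$) and the second collects the deterministic finite-difference bias relative to the exact recursion \eqref{grad_Fit_2_last} (to be controlled by $q_{i,\nu}$). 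The triangle inequality will then give the first claim and $\|a+b\|^2\le 2\|a\|^2+2\|b\|^2$ will give the second.

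For the deterministic gap I would invoke the classical central-difference identity
\[
\Bigl\|\nabla^2 f_i(\mathbf{u})\mathbf{v} - \tfrac{\nabla f_i(\mathbf{u}+\delta\mathbf{v})-\nabla f_i(\mathbf{u}-\delta\mathbf{v})}{2\delta}\Bigr\|\le \rho\delta\|\mathbf{v}\|^2,
\]
a direct consequence of Assumption~\ref{asm_Hesian_Lip}. A short induction using Assumption~\ref{asm_grad} gives the norm estimate $\|\mathbf{d}^{{\rm HF},(l)}_{\mathbf{w},i}\|\le B(1+\alpha L)^l$, since each HF step lies within a correction of order $\alpha\rho\delta\|\cdot\|^2$ of the linear map $\mathbf{I}-\alpha\nabla^2 f_i(\mathbf{w}^i_{\nu-l})$, whose spectral norm is at most $1+\alpha L$. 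Consequently the $l$-th level of the HF recursion introduces an additional deterministic error of at most $\alpha\rho\delta B^2(1+\alpha L)^{2(l-1)}$, which is amplified by the remaining $\nu-l$ exact multiplications by $\mathbf{I}-\alpha\nabla^2 f_i$. Summing these contributions over $l=1,\ldots,\nu$ and re-indexing reproduces exactly the closed form \eqref{q_nu} for $q_{i,\nu}$.

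For the stochastic gap I would induct on $l$. Subtracting the two recursions yields
\begin{align*}
\tilde{\mathbf{d}}^{(l)}-\mathbf{d}^{{\rm HF},(l)} = \bigl(\tilde{\mathbf{d}}^{(l-1)}-\mathbf{d}^{{\rm HF},(l-1)}\bigr) -\tfrac{\alpha}{2\delta}\bigl[(G^+-H^+)-(G^--H^-)\bigr],
\end{align*}
with $G^{\pm}=\tnabla f_i(\tilde{\mathbf{w}}^i_{\nu-l}\pm\delta\tilde{\mathbf{d}}^{(l-1)},\D'_{i,\nu-l})$ and $H^{\pm}=\nabla f_i(\mathbf{w}^i_{\nu-l}\pm\delta\mathbf{d}^{{\rm HF},(l-1)})$. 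I would split each $G^{\pm}-H^{\pm}$ into (i) a fresh mini-batch noise term $G^{\pm}-\nabla f_i(\tilde{\mathbf{w}}^i_{\nu-l}\pm\delta\tilde{\mathbf{d}}^{(l-1)})$, which is mean-zero conditional on the history (by independence of $\D'_{i,\nu-l}$) and controlled by Assumption~\ref{asm_bounded_var_i}; (ii) an $L\delta$-Lipschitz drift in the previous error (producing the $\alpha L$ amplification factor); and (iii) an $L$-Lipschitz drift in $\tilde{\mathbf{w}}^i_{\nu-l}-\mathbf{w}^i_{\nu-l}$, controlled through Lemma~\ref{lemma_w_tilde_difference} by $h_{i,\nu-l}$ and $h'_{i,\nu-l}$. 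Taking norms in expectation directly produces the first-moment recursion for $p_{i,l}$, while applying $\|a+b+c\|^2\le 3(\|a\|^2+\|b\|^2+\|c\|^2)$ to the same three pieces, together with conditioning on the history before $\D'_{i,\nu-l}$, produces the second-moment recursion for $p'_{i,l}$. The base case $l=0$ follows by the same sampling-plus-Lipschitz decomposition applied directly to $\tilde{\mathbf{d}}^{(0)}-\mathbf{d}^{{\rm HF},(0)}=\tnabla f_i(\tilde{\mathbf{w}}^i_\nu,\D_{i,\nu})-\nabla f_i(\mathbf{w}^i_\nu)$, invoking Lemma~\ref{lemma_w_tilde_difference} with $l=\nu$.

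The hard part will be the bookkeeping in the second-moment induction: the three contributions must be squared simultaneously and the mean-zero property of the fresh mini-batch gradient must be invoked at exactly the right moment so that the cross term between the accumulated previous error and the new noise disappears in expectation, leaving the clean $3(1+\alpha^2 L^2)\,p'_{i,l-1}$ amplification that matches the stated recursion. Once the recursions for $p_{i,l}$ and $p'_{i,l}$ are verified at $l=\nu$, combining them with the deterministic $q_{i,\nu}$ through the triangle inequality (first moment) and $\|a+b\|^2\le 2\|a\|^2+2\|b\|^2$ (second moment) yields the two claimed bounds on $\mu^{\rm HF}_{F_i}$ and $(\sigma^{\rm HF}_{F_i})^2$.
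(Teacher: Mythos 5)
Your proposal follows essentially the same route as the paper's proof in Appendix~\ref{App_HF}: the same auxiliary deterministic HF sequence, the same split into a stochastic error (bounded by the $p_{i,l}$, $p'_{i,l}$ inductions, with the conditional mean-zero/Lipschitz decomposition of each finite-difference gradient) and a deterministic finite-difference bias (bounded via $\|\mathbf{d}_{\mathbf{w},i}^{(l)}\|\le B(1+\alpha L)^l$ and the $\rho\delta\|\mathbf{v}\|^2$ central-difference error, summed and re-indexed to give $q_{i,\nu}$), combined at the end by the triangle inequality and $\|a+b\|^2\le 2\|a\|^2+2\|b\|^2$. The only minor difference is that the paper does not need the cross term between the accumulated error and the fresh noise to vanish --- it simply absorbs all three pieces via $(a+b+c)^2\le 3(a^2+b^2+c^2)$, reserving the conditional mean-zero argument for the inner sampling-noise-versus-drift split --- so the bookkeeping you flag as the hard part is in fact routine.
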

	\begin{proof}
		See Appendix \ref{App_HF}.
	\end{proof}

\section{Experiments}\label{experiments}
In this section, we present some experimental results to evaluate the performance of our scheme. 
Additional experiments and discussions are provided in Appendix \ref{Appendix_experiments}. 
We conduct our experiments over two widely-used datasets: CIFAR-10 and CIFAR-100 \cite{krizhevsky2009learning}. 
Both datasets consist of $60000$ $32\!\times\!32$ colour images, with
50000 training images and 10000 test images.
CIFAR-10 has $10$ classes, with $6000$ images per class, while CIFAR-100 has $100$ classes each containing $600$ images.

To heterogeneously split the data among clients, we apply the Dirichlet distribution with parameter $\alpha_d>0$ that controls the data heterogeneity among clients. The smaller values of $\alpha_d$ correspond to a more heterogeneous data splitting. Specifically, with $\alpha_d\to \infty$  all clients have identical distributions while with $\alpha_d\to 0$ each client has samples from only one class chosen at random. 

Throughout our experiments, we consider $N=50$ agents, out of which a fraction of $r=0.2$ agents are selected randomly to participate in the FL algorithm per global epoch. To do the data splitting, each agent is first assigned a probability mass function (PMF) of the same length as the number of classes (i.e., a class distribution) according to the Dirichlet distribution. Each agent is then assigned $1000$ samples, out of the training set of size $50000$, according to its specific class PMF. 
Unless explicitly mentioned, we 
 consider $\alpha=0.01$, $\beta=0.001$, $\tau=4$, $\nu=3$, and $\alpha_d=0.01$ for our experiments.
The models of all agents consist of simple fully-connected neural networks (FCNNs) with two hidden layers of size 
$n_{L_1}=80$
and $n_{L_2}=60$.
The input layer is of size $n_{\rm in}=32\times32\times3$ (corresponding to pixels of 3 color channels) and the the output layer has the same size as the number of classes (i.e., $10$) for all models. 

All agents use the same batch size of $D=40$ across all iterations. Each agent randomly splits its dataset of size $1000$ into a training set of size $800$ and a testing set of size $200$. We consider three algorithms: FedAvg, Per-FedAvg, and our generalized meta FL algorithm. At each global epoch, selected agents apply their training set of size $800$ to update the global (meta) model. We then evaluate and record the accuracy of the global model over all $N$ agents using the testing set of agents. The resulting evaluation accuracy across epochs is then plotted for different algorithms. Note that, to be fair, we evaluate the performance of all algorithms after $\nu$ fine-tuning steps (only during the evaluation and inference). 

	\begin{figure}[t]
		\centering
		\includegraphics[trim=0.3cm 1.2cm 0 0,width=6in]{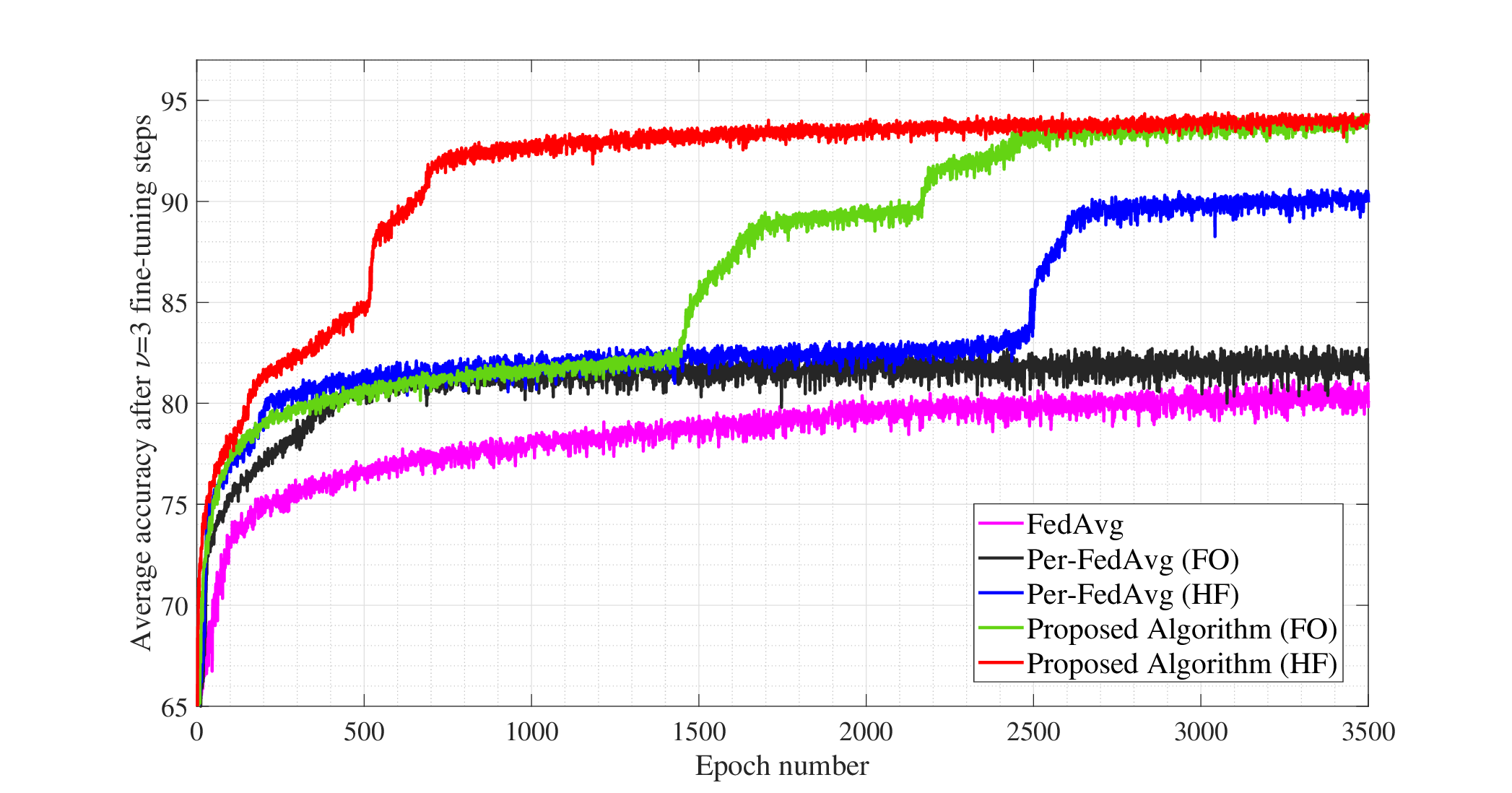}
		\caption{Accuracy $\%$ of various schemes over the CIFAR-10 dataset with $\alpha_d=0.01$.
		}
		\label{fig1}
	\end{figure}

Figure \ref{fig1} illustrates the average accuracy percentage of various algorithms over the CIFAR-10 dataset. As seen, our scheme achieves both a better accuracy and a faster convergence compared to the conventional meta FL (i.e., Per-FedAvg) and FedAvg algorithms. 
The reader is referred to Appendix \ref{Appendix_experiments} for additional experiments and discussions.	
%
%

\section{Conclusion}
In this paper, we presented a generalized framework for the meta FL by minimizing the average loss of agents on their local model after any arbitrary number $\nu$ of fine-tuning steps. In addition to analyzing the exact version of our algorithm (involving Hessian matrices), we also analyzed the convergence of our algorithm under first-order and Hessian-free approximations. Our comprehensive theoretical convergence analysis confirmed the convergence of our proposed algorithm for arbitrary $\nu$. Additionally,
our experiments on real-world datasets revealed the superior accuracy and faster convergence for the proposed scheme compared to conventional Per-FedAVg and FedAvg algorithms (corresponding to $\nu=1$ and $\nu=0$, respectively).

	\bibliographystyle{IEEEtran}
	\bibliography{Refs}

	%
	\newpage
	\appendix
	
	\section{
		Closed Form of $\nabla F_i(\mathbf{w})$  for Arbitrary $\nu\in\{1,2,\cdots\}$
	}\label{Appendix_gradF}
Throughout our proofs in this appendix and the following appendices, to simplify the notation, we drop the superscript $i$ from (intermediate) models, as it suffices to present the proofs for a given agent $i$.

   Recall the recursive relation $\mathbf{w}_{l}:=\mathbf{w}_{l-1} - \alpha \nabla f_i(\mathbf{w}_{l-1})$ for the model after ${l}$ fine-tuning steps, ${l}=1,\cdots,\nu$, with $\mathbf{w}_0:=\mathbf{w}$, and the definition of the meta functions $F_i(\mathbf{w}):=f_i(\mathbf{w}_{\nu-1} - \alpha \nabla f_i(\mathbf{w}_{\nu-1}))$ as in \eqref{Meta_FLt2}.
We first recall the following two well-known chain rule identities from matrix calculus:
\begin{itemize}
	\item 
	Let $\phi:\R^n\to \R$ be a function from vectors to scalars, and $\mathbf{A}:\R^m\to\R^n$ be a vector field. Let ${x_1,\ldots,x_m}$ denote the variables upon which the gradients are calculated. Then, we have the following chain rule.
	\begin{align}\label{rule1}
		\nabla (\phi ~\! {\rm o} ~\! \mathbf{A}) = (\nabla \mathbf{A}). (\nabla \phi ~\! {\rm o} ~\! \mathbf{A}),
	\end{align}
	where ``${\rm o}$'' denotes the function composition,  $\nabla \mathbf{A}=\mathbf{J}_{\mathbf{A}}^{\rm \mathsf{T}}$, and $\mathbf{J}_{\mathbf{A}}:=(\partial A_i/\partial x_j)_{ij}$ is the Jacobian matrix of the vector field $ \mathbf{A} =(A_{1},\ldots ,A_{n})$.
	\item  If $\mathbf{g}:\R^m\to \R^n$ is differentiable at $\mathbf{a}\in\R^m$, and  $\mathbf{h}:\R^n\to \R^l$ is differentiable at $g(\mathbf{a})\in\R^n$, then  
	\begin{align}\label{rule2}
	\mathbf{J}_{\mathbf{h}~\! {\rm o} ~\! \mathbf{g}}(\mathbf{a})=\mathbf{J}_{\mathbf{h}}(\mathbf{g}(\mathbf{a})). \mathbf{J}_{\mathbf{g}}(\mathbf{a}),
	\end{align}
	where $\mathbf{J}_{\mathbf{g}}(\mathbf{a})$ denotes the Jacobian of $\mathbf{g}$ at $\mathbf{a}$. 
\end{itemize}

Now to prove the result, note that $F_i(\mathbf{w}):=f_i(\mathbf{w}_{\nu-1} - \alpha \nabla f_i(\mathbf{w}_{\nu-1}))$ can be written as $F_i(\mathbf{w})=f_i(\mathbf{h}~\! {\rm o} ~\! \mathbf{g}(\mathbf{w}))$ with the choice of $\mathbf{g}(\mathbf{w})=\mathbf{w}_{\nu-1}$ and $\mathbf{h}(\mathbf{x})=\mathbf{x} - \alpha \nabla f_i(\mathbf{x})$. By \eqref{rule1} we have
\begin{align}\label{gF_111}
\nabla F_i(\mathbf{w}) = \nabla \mathbf{h}~\! {\rm o} ~\! \mathbf{g}(\mathbf{w}). \nabla f_i(\mathbf{w}_{\nu-1} - \alpha \nabla f_i(\mathbf{w}_{\nu-1})),
\end{align}
where, by \eqref{rule2}, we have
\begin{align}\label{gF_222}
\nabla \mathbf{h}~\! {\rm o} ~\! \mathbf{g}(\mathbf{w}) &=	\mathbf{J}^{\mathsf{T}}_{\mathbf{h}~\! {\rm o} ~\! \mathbf{g}}(\mathbf{w})\nonumber\\
&=\mathbf{J}^{\mathsf{T}}_{\mathbf{w}_{\nu-1}}\mathbf{J}^{\mathsf{T}}_{\mathbf{h}}(\mathbf{w}_{\nu-1})\nonumber\\
&=\mathbf{J}^{\mathsf{T}}_{\mathbf{w}_{\nu-1}}\left(\mathbf{I} - \alpha \nabla^2 f_i(\mathbf{w}_{\nu-1})\right),
\end{align}
where the last inequality is obtained by applying the definition of the Jacobian matrix to show that $\mathbf{J}^{\mathsf{T}}_{\mathbf{h}}(\mathbf{x})=\mathbf{I} - \alpha \nabla^2 f_i(\mathbf{x})$.

Next, we apply induction to find the closed form $\mathbf{S}_{l}$ of $\mathbf{J}^{\mathsf{T}}_{\mathbf{w}_{l}}$, for $l\in\{1,2,\ldots\}$. Given that $\mathbf{w}_{1}:=\mathbf{w} - \alpha \nabla f_i(\mathbf{w})$, for the base case of $\nu=1$ we have
\begin{align}
\mathbf{J}^{\mathsf{T}}_{\mathbf{w}_{1}} &=\mathbf{J}^{\mathsf{T}}_{\mathbf{h}}(\mathbf{w})\nonumber\\
&=\mathbf{I} - \alpha \nabla^2 f_i(\mathbf{w}) := \mathbf{S}_{1}.
\end{align}
Note that it is also possible to consider the case of $\nu=0$ as the base case, which results in $\mathbf{S}_{0}=\mathbf{J}^{\mathsf{T}}_{\mathbf{w}}=\mathbf{I}$. Now, assuming $\mathbf{J}^{\mathsf{T}}_{\mathbf{w}_{l-1}}=\mathbf{S}_{l-1}$ as the induction hypothesis, we have 
\begin{align}\label{S_rec}
	\mathbf{J}^{\mathsf{T}}_{\mathbf{w}_{l}}&\stackrel{(a)}{=}\mathbf{J}^{\mathsf{T}}_{\mathbf{w}_{l-1}}\mathbf{J}^{\mathsf{T}}_{\mathbf{h}}(\mathbf{w}_{l-1})\nonumber\\
	&\stackrel{(b)}{=}\mathbf{S}_{l-1}\left(\mathbf{I} - \alpha \nabla^2 f_i(\mathbf{w}_{l-1})\right):=\mathbf{S}_{l},
\end{align}
where step $(a)$ is by the definition of $\mathbf{w}_{l}:=\mathbf{w}_{l-1} - \alpha \nabla f_i(\mathbf{w}_{l-1})$ and then applying a similar procedure to \eqref{gF_222}, and step $(b)$ is by the induction hypothesis and $\mathbf{J}^{\mathsf{T}}_{\mathbf{h}}(\mathbf{x})=\mathbf{I} - \alpha \nabla^2 f_i(\mathbf{x})$. By solving the recursive equation in \eqref{S_rec}, we find
\begin{align}\label{S_l}
	\mathbf{J}^{\mathsf{T}}_{\mathbf{w}_{l}}:&=\mathbf{S}_{l}\nonumber\\
	&=\left(\mathbf{I} - \alpha \nabla^2 f_i(\mathbf{w})\right)\left(\mathbf{I} - \alpha \nabla^2 f_i(\mathbf{w}_{1})\right)\cdots\left(\mathbf{I} - \alpha \nabla^2 f_i(\mathbf{w}_{l-1})\right).
\end{align}
Finally, by replacing $\mathbf{J}^{\mathsf{T}}_{\mathbf{w}_{\nu-1}}$ from \eqref{S_l} in \eqref{gF_222} and then in \eqref{gF_111}, we obtain the desired result in \eqref{grad_Fit_2_last}. 	\endproof

	\section{Proof of Lemma \ref{lemma_F_smooth}}\label{App_A}
	Recall that 
	\begin{align}
		\nabla F_i(\mathbf{w}) = & \left(\mathbf{I} - \alpha \nabla^2 f_i(\mathbf{w})\right) \left(\mathbf{I} - \alpha \nabla^2 f_i(\mathbf{w}_1)\right) \left(\mathbf{I} - \alpha \nabla^2 f_i(\mathbf{w}_2)\right) \nonumber\\
		& {\hspace{1.5in}} \cdots \left(\mathbf{I} - \alpha \nabla^2 f_i(\mathbf{w}_{\nu-1})\right) \nabla f_i(\mathbf{w}_{\nu-1} - \alpha \nabla f_i(\mathbf{w}_{\nu-1})). 
	\end{align}
	Let us define the matrix $\mathbf{B}^{(\nu)}_{\mathbf{w}}$, for $\nu\geq 1$, as
	\begin{align}\label{B_w}
		\mathbf{B}^{(\nu)}_{\mathbf{w}} := & \left(\mathbf{I} - \alpha \nabla^2 f_i(\mathbf{w})\right) \left(\mathbf{I} - \alpha \nabla^2 f_i(\mathbf{w}_1)\right) \left(\mathbf{I} - \alpha \nabla^2 f_i(\mathbf{w}_2)\right) \cdots \left(\mathbf{I} - \alpha \nabla^2 f_i(\mathbf{w}_{\nu-1})\right). 
	\end{align}
	Obviously, $\mathbf{B}^{(\nu)}_{\mathbf{w}}$ can be recursively expressed as
	\begin{align}\label{B_w_rec}
		\mathbf{B}^{(\nu)}_{\mathbf{w}} &=  \mathbf{B}^{(\nu-1)}_{\mathbf{w}} \left(\mathbf{I} - \alpha \nabla^2 f_i(\mathbf{w}_{\nu-1})\right),\nonumber\\
		\mathbf{B}^{(1)}_{\mathbf{w}} &=
		\mathbf{I} - \alpha \nabla^2 f_i(\mathbf{w}). 
	\end{align}
	We can then rewrite  $\nabla F_i(\mathbf{w})$ for our algorithm with $\nu$ fine-tuning steps as
	\begin{align}\label{grad_Fi_rec}
		\nabla F_i(\mathbf{w}) =  \mathbf{B}^{(\nu)}_{\mathbf{w}} \nabla f_i(\mathbf{w}_{\nu-1} - \alpha \nabla f_i(\mathbf{w}_{\nu-1})). 
	\end{align}
	The objective then is to upper bound $\|\nabla F_i(\mathbf{w})-\nabla F_i(\mathbf{u})\|$ as a factor of $\|\mathbf{w}-\mathbf{u}\|$. Using \eqref{grad_Fi_rec} we have
	\begin{align}
		\|\nabla F_i(\mathbf{w})-\nabla F_i(\mathbf{u})\| & = \|\mathbf{B}^{(\nu)}_{\mathbf{w}} \nabla f_i(\mathbf{w}_{\nu-1} - \alpha \nabla f_i(\mathbf{w}_{\nu-1})) - \mathbf{B}^{(\nu)}_{\mathbf{u}} \nabla f_i(\mathbf{u}_{\nu-1} - \alpha \nabla f_i(\mathbf{u}_{\nu-1})) \| \nonumber\\
		& = \|\mathbf{B}^{(\nu)}_{\mathbf{w}} \nabla f_i(\mathbf{w}_{\nu-1} - \alpha \nabla f_i(\mathbf{w}_{\nu-1})) - \mathbf{B}^{(\nu)}_{\mathbf{w}} \nabla f_i(\mathbf{u}_{\nu-1} - \alpha \nabla f_i(\mathbf{u}_{\nu-1})) \nonumber\\
		& ~~~~~~\!\! + \mathbf{B}^{(\nu)}_{\mathbf{w}} \nabla f_i(\mathbf{u}_{\nu-1} - \alpha \nabla f_i(\mathbf{u}_{\nu-1})) -  \mathbf{B}^{(\nu)}_{\mathbf{u}} \nabla f_i(\mathbf{u}_{\nu-1} - \alpha \nabla f_i(\mathbf{u}_{\nu-1})) \|\nonumber\\
		& \leq \|\mathbf{B}^{(\nu)}_{\mathbf{w}} \left(\nabla f_i(\mathbf{w}_{\nu-1} - \alpha \nabla f_i(\mathbf{w}_{\nu-1})) - \nabla f_i(\mathbf{u}_{\nu-1} - \alpha \nabla f_i(\mathbf{u}_{\nu-1}))\right) \| \nonumber\\
		& ~~~~ + \|\left(\mathbf{B}^{(\nu)}_{\mathbf{w}} - \mathbf{B}^{(\nu)}_{\mathbf{u}}\right) \nabla f_i(\mathbf{u}_{\nu-1} - \alpha \nabla f_i(\mathbf{u}_{\nu-1})) \|.\label{eq51}
	\end{align}
	The first norm in \eqref{eq51} can be upper bounded as
	\begin{align}\label{eq51_norm1}
		& \|\mathbf{B}^{(\nu)}_{\mathbf{w}} \left(\nabla f_i(\mathbf{w}_{\nu-1} - \alpha \nabla f_i(\mathbf{w}_{\nu-1})) - \nabla f_i(\mathbf{u}_{\nu-1} - \alpha \nabla f_i(\mathbf{u}_{\nu-1}))\right) \| \nonumber\\
		& \leq \|\mathbf{B}^{(\nu)}_{\mathbf{w}}\| \|\nabla f_i(\mathbf{w}_{\nu-1} - \alpha \nabla f_i(\mathbf{w}_{\nu-1})) - \nabla f_i(\mathbf{u}_{\nu-1} - \alpha \nabla f_i(\mathbf{u}_{\nu-1})) \| \nonumber\\
		& = \|\prod_{l=0}^{\nu-1}\left(\mathbf{I} - \alpha \nabla^2 f_i(\mathbf{w}_{l})\right)\| \|\nabla f_i(\mathbf{w}_{\nu-1} - \alpha \nabla f_i(\mathbf{w}_{\nu-1})) - \nabla f_i(\mathbf{u}_{\nu-1} - \alpha \nabla f_i(\mathbf{u}_{\nu-1})) \|.
	\end{align}
	The first norm in \eqref{eq51_norm1} can be upper bounded as
	\begin{align}\label{eq51_norm1_1}
		\big\|\prod_{l=0}^{\nu-1}\left(\mathbf{I} - \alpha \nabla^2 f_i(\mathbf{w}_{l})\right)\big\| &\leq \prod_{l=0}^{\nu-1}\|\left(\mathbf{I} - \alpha \nabla^2 f_i(\mathbf{w}_{l})\right)\| \nonumber\\ &\leq
		\prod_{l=0}^{\nu-1}\left(\|\mathbf{I}\| + \alpha \|\nabla^2 f_i(\mathbf{w}_{l})\|\right) \nonumber\\ &\leq
		\prod_{l=0}^{\nu-1}\left(1+\alpha L\right)\nonumber\\
		& = (1+\alpha L)^{\nu}.
	\end{align}
	Additionally, for the second norm in \eqref{eq51_norm1} we have 
	\begin{align}
		&\| \left(\nabla f_i(\mathbf{w}_{\nu-1} - \alpha \nabla f_i(\mathbf{w}_{\nu-1})) - \nabla f_i(\mathbf{u}_{\nu-1} - \alpha \nabla f_i(\mathbf{u}_{\nu-1}))\right)\|\nonumber\\
		&\leq L\|\mathbf{w}_{\nu-1} - \alpha \nabla f_i(\mathbf{w}_{\nu-1}) - \mathbf{u}_{\nu-1} + \alpha \nabla f_i(\mathbf{u}_{\nu-1})\|\nonumber\\
		&\leq L\|\mathbf{w}_{\nu-1} - \mathbf{u}_{\nu-1}\| + \alpha L \|\nabla f_i(\mathbf{w}_{\nu-1}) -  \nabla f_i(\mathbf{u}_{\nu-1})\|
		\nonumber\\
		&\leq L(1+\alpha L) \|\mathbf{w}_{\nu-1} - \mathbf{u}_{\nu-1}\|\\
		&= L(1+\alpha L) \|\mathbf{w}_{\nu-2} - \alpha \nabla f_i(\mathbf{w}_{\nu-2})  - \mathbf{u}_{\nu-2} + \alpha \nabla f_i(\mathbf{u}_{\nu-2}) \|\|
		\nonumber\\
		&\leq L(1+\alpha L)^2 \|\mathbf{w}_{\nu-2} - \mathbf{u}_{\nu-2}\|\nonumber\\
		&~~~~~~~~~~~~\vdots\nonumber\\
		&\leq L(1+\alpha L)^{\nu} \|\mathbf{w}_0 - \mathbf{u}_0\| = L(1+\alpha L)^{\nu} \|\mathbf{w} - \mathbf{u}\|.\label{eq51_norm1_2}
	\end{align}
	Replacing \eqref{eq51_norm1_1} and \eqref{eq51_norm1_2} in \eqref{eq51_norm1} then gives
	\begin{align}\label{eq51_norm1_tot}
		\|\mathbf{B}^{(\nu)}_{\mathbf{w}} \left(\nabla f_i(\mathbf{w}_{\nu-1} - \alpha \nabla f_i(\mathbf{w}_{\nu-1})) - \nabla f_i(\mathbf{u}_{\nu-1} - \alpha \nabla f_i(\mathbf{u}_{\nu-1}))\right) \| \leq L(1+\alpha L)^{2\nu} \|\mathbf{w} - \mathbf{u}\|. 
	\end{align}
	
	On the other hand, the second norm in \eqref{eq51} can be bounded as
	\begin{align} \label{eq51_norm2}
		&\|\left(\mathbf{B}^{(\nu)}_{\mathbf{w}} - \mathbf{B}^{(\nu)}_{\mathbf{u}}\right) \nabla f_i(\mathbf{u}_{\nu-1} - \alpha \nabla f_i(\mathbf{u}_{\nu-1})) \| \leq B \|\mathbf{B}^{(\nu)}_{\mathbf{w}} - \mathbf{B}^{(\nu)}_{\mathbf{u}}\|.
	\end{align}
	We now apply the recursive relation in \eqref{B_w_rec} to prove by induction that $\|\mathbf{B}^{(\nu)}_{\mathbf{w}} - \mathbf{B}^{(\nu)}_{\mathbf{u}}\|$ can be upper bounded by a factor of $\|\mathbf{w}-\mathbf{u}\|$. Note that the base case for $\nu=1$ clearly holds as we have
	\begin{align}\label{eq58}
		\|\mathbf{B}^{(1)}_{\mathbf{w}} - \mathbf{B}^{(1)}_{\mathbf{u}}\| &= \|\left(\mathbf{I} - \alpha \nabla^2 f_i(\mathbf{w})\right) - \left(\mathbf{I} - \alpha \nabla^2 f_i(\mathbf{u})\right)\|\nonumber\\
		& = \alpha \|\nabla^2 f_i(\mathbf{w})-\nabla^2 f_i(\mathbf{u})\|\nonumber\\
		& \leq \alpha \rho \|\mathbf{w}-\mathbf{u}\|.
	\end{align}
	Now assuming that step $\nu-1$ holds with some factor $c_{\nu -1}$, i.e., 
	\begin{align}\label{lemma4_2_rec_hypot}
		\|\mathbf{B}^{(\nu -1)}_{\mathbf{w}} - \mathbf{B}^{(\nu -1)}_{\mathbf{u}}\| \leq c_{\nu -1} \|\mathbf{w}-\mathbf{u}\|,
	\end{align}
	we want to show that step $\nu$ holds with some factor $c_{\nu}$. Using \eqref{B_w_rec} we have
	\begin{align}\label{lemma4_2_rec_ind}
		\|\mathbf{B}^{(\nu)}_{\mathbf{w}} - \mathbf{B}^{(\nu)}_{\mathbf{u}}\| &= \|\mathbf{B}^{(\nu-1)}_{\mathbf{w}} \left(\mathbf{I} - \alpha \nabla^2 f_i(\mathbf{w}_{\nu-1})\right)- \mathbf{B}^{(\nu-1)}_{\mathbf{u}} \left(\mathbf{I} - \alpha \nabla^2 f_i(\mathbf{u}_{\nu-1})\right)\| \nonumber\\
		& = \|\mathbf{B}^{(\nu-1)}_{\mathbf{w}} \left(\mathbf{I} - \alpha \nabla^2 f_i(\mathbf{w}_{\nu-1})\right)-
		\mathbf{B}^{(\nu-1)}_{\mathbf{u}} \left(\mathbf{I} - \alpha \nabla^2 f_i(\mathbf{w}_{\nu-1})\right) \nonumber\\
		& ~~~~ + \mathbf{B}^{(\nu-1)}_{\mathbf{u}} \left(\mathbf{I} - \alpha \nabla^2 f_i(\mathbf{w}_{\nu-1})\right) -
		\mathbf{B}^{(\nu-1)}_{\mathbf{u}} \left(\mathbf{I} - \alpha \nabla^2 f_i(\mathbf{u}_{\nu-1})\right)\| \nonumber\\
		& \hspace{-0.5in}\leq \|\left(\mathbf{B}^{(\nu-1)}_{\mathbf{w}} - \mathbf{B}^{(\nu-1)}_{\mathbf{u}}\right)\left(\mathbf{I} - \alpha \nabla^2 f_i(\mathbf{w}_{\nu-1})\right)\|+ \alpha\|\mathbf{B}^{(\nu-1)}_{\mathbf{u}} \left(\nabla^2 f_i(\mathbf{u}_{\nu-1})-\nabla^2 f_i(\mathbf{w}_{\nu-1})\right)\| \nonumber\\
		& \hspace{-0.5in}\leq (1+\alpha L)\|\mathbf{B}^{(\nu-1)}_{\mathbf{w}} - \mathbf{B}^{(\nu-1)}_{\mathbf{u}}\|+ \alpha \rho\|\mathbf{B}^{(\nu-1)}_{\mathbf{u}}\| \|\mathbf{w}_{\nu-1}-\mathbf{u}_{\nu-1}\| \nonumber\\
		& \hspace{-0.5in}\leq (1+\alpha L)c_{\nu -1} \|\mathbf{w}-\mathbf{u}\|+ \alpha \rho(1+\alpha L)^{2\nu -2}\|\mathbf{w}-\mathbf{u}\| \nonumber\\
		& \hspace{-0.5in} = (1+\alpha L)\left(c_{\nu-1}+\alpha\rho(1+\alpha L)^{2\nu-3}\right)\|\mathbf{w}-\mathbf{u}\|,
	\end{align}
	where the last inequality follows \eqref{lemma4_2_rec_hypot}, $\|\mathbf{B}^{(\nu-1)}_{\mathbf{u}}\|\leq (1+\alpha L)^{\nu -1}$ (as shown in \eqref{eq51_norm1_1}), and $\|\mathbf{w}_{\nu-1}-\mathbf{u}_{\nu-1}\|\leq (1+\alpha L)^{\nu -1}\|\mathbf{w}-\mathbf{u}\|$ (as seen in \eqref{eq51_norm1_2}). Eq. \eqref{lemma4_2_rec_ind} thus completes the induction prove by establishing that step $\nu$ holds with factor
	\begin{align}\label{c_rec}
		c_{\nu}=(1+\alpha L)\left(c_{\nu-1}+\alpha\rho(1+\alpha L)^{2\nu-3}\right).
	\end{align}
	We can also find the closed form for $c_{\nu}$ using the recursive relation in \eqref{c_rec} and the initial value $c_1 = \alpha\rho$ (from \eqref{eq58}) as
	\begin{align}\label{c_nu}
		c_{\nu}=\alpha\rho(1+\alpha L)^{\nu -1}\sum_{l=0}^{\nu-1}(1+\alpha L)^l.
	\end{align}
	
	Putting all together, we have
	\begin{align}\label{smooth}
		\|\nabla F_i(\mathbf{w})-\nabla F_i(\mathbf{u})\| \leq 
		\left(L(1+\alpha L)^{2\nu} + B\alpha\rho(1+\alpha L)^{\nu -1}\sum_{l=0}^{\nu-1}(1+\alpha L)^l\right)  \|\mathbf{w} - \mathbf{u}\|,
	\end{align}
	which shows that the meta loss functions $F_i$'s are $L_F$-smooth, where
	\begin{align}\label{L_F}
		L_F := L(1+\alpha L)^{2\nu} + B\alpha\rho(1+\alpha L)^{\nu -1}\sum_{l=0}^{\nu-1}(1+\alpha L)^l.
	\end{align}
	
	Finally, note that for the average function $F(\mathbf{w})$ we have
	\begin{align}\label{smoothness_F}
		\|\nabla F(\mathbf{w}) - \nabla F(\mathbf{u})\| &= \frac{1}{N} \|\sum_{i=1}^{N}\left(\nabla F_i(\mathbf{w}) - \nabla F_i(\mathbf{u})\right)\|\nonumber\\
		& \leq \frac{1}{N} \sum_{i=1}^{N}\|\nabla F_i(\mathbf{w}) - \nabla F_i(\mathbf{u})\| \leq L_F\|\mathbf{w} - \mathbf{u}\|,
	\end{align}
	i.e., the average function $F(\mathbf{w})$ is also $L_F$-smooth. 
	\endproof
	
	\section{An Intermediate Result}\label{Appendix_intermediate_Lemma}
	The following lemma is useful in the proof of Lemmas \ref{lemma_w_tilde_difference} and \ref{lemma_err_moment}.
	
	\begin{lemma}\label{lemma_intermediate}
		The fourth central moments of the gradient estimate $\tnabla f_i({\mathbf{w}},\D)$ and Hessian estimate $\tnabla^2 f_i({\mathbf{w}},\D)$, calculated using a batch $\D$ of size $D$ at an arbitrary model $\mathbf{w}$, are upper bounded as
		\begin{align}
			&\E \left [ \left \| \tnabla f_i({\mathbf{w}},\D) - \nabla f_i({\mathbf{w}})\right \|^4 \right ]\leq 
			\frac{\kappa_G+3(D-1)\sigma^4_G}{D^3},\label{4th_moment}\\
			&\E \left [ \left \| \tnabla^2 f_i({\mathbf{w}},\D) - \nabla^2 f_i({\mathbf{w}})\right \|^4 \right ]\leq 
			\frac{\kappa_H+3(D-1)\sigma^4_H}{D^3}.\label{4th_moment_kappaH}
		\end{align}
	\end{lemma}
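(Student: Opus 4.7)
My plan is to prove the gradient bound \eqref{4th_moment} by direct expansion of the fourth moment of a sum of i.i.d.\ zero-mean vectors; the Hessian bound \eqref{4th_moment_kappaH} will follow by the identical argument with $\sigma_G,\kappa_G$ replaced by $\sigma_H,\kappa_H$. Writing $\D=\{(\mathbf{x}_j,y_j)\}_{j=1}^D$ and $X_j:=\nabla l_i(\mathbf{w};\mathbf{x}_j,y_j)-\nabla f_i(\mathbf{w})$, the samples are i.i.d.\ with $\E[X_j]=\zero$, $\E[\|X_j\|^2]\leq \sigma_G^2$, and $\E[\|X_j\|^4]\leq \kappa_G$ (Assumption~\ref{asm_bounded_var_i}). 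Then
\begin{equation}
\tnabla f_i(\mathbf{w},\D)-\nabla f_i(\mathbf{w}) \;=\; \frac{1}{D}\sum_{j=1}^{D} X_j \;=:\; \frac{1}{D}S,
\end{equation}
so it suffices to show $\E[\|S\|^4]\leq D\kappa_G + 3D(D-1)\sigma_G^4$ and then divide by $D^4$.

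The key step will be to expand $\|S\|^4=(\sum_{j,k}\langle X_j,X_k\rangle)^2=\sum_{j,k,l,m}\langle X_j,X_k\rangle\langle X_l,X_m\rangle$ and classify the $D^4$ index quadruples by how $\{j,k,l,m\}$ partition into equivalence classes. By independence and the zero-mean property, any quadruple in which some index appears exactly once contributes zero to the expectation; this eliminates the quadruples with four distinct indices, those with one pair and two singletons, and those with a triple and a singleton. The surviving configurations are the $D$ ``all equal'' quadruples and the three families of ``two-pair'' quadruples ($j{=}k,l{=}m$; $j{=}l,k{=}m$; $j{=}m,k{=}l$, each with the two distinct values ranging over $D(D-1)$ ordered pairs).

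For the all-equal terms I will use $\E[\|X_j\|^4]\leq \kappa_G$ directly, contributing at most $D\kappa_G$. For each of the three two-pair families, I will bound the summand by $\sigma_G^4$: in the case $j=k,\,l=m,\,j\neq l$ by independence we get $\E[\|X_j\|^2]\E[\|X_l\|^2]\leq \sigma_G^4$; in the cases $j=l,\,k=m$ and $j=m,\,k=l$ (with $j\neq k$) the summand is $\E[\langle X_j,X_k\rangle^2]$, which Cauchy--Schwarz bounds by $\E[\|X_j\|^2\|X_k\|^2]=\E[\|X_j\|^2]\E[\|X_k\|^2]\leq \sigma_G^4$ after using independence. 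Summing gives $\E[\|S\|^4]\leq D\kappa_G + 3D(D-1)\sigma_G^4$, and dividing by $D^4$ yields \eqref{4th_moment}. I do not anticipate a significant obstacle; the main care is in the combinatorial bookkeeping of the two-pair cases (in particular, correctly invoking Cauchy--Schwarz in the two ``crossed'' families so that all three two-pair contributions yield the same $\sigma_G^4$ bound). The Hessian claim \eqref{4th_moment_kappaH} is then an immediate restatement with $\nabla^2 l_i$ replacing $\nabla l_i$ and $(\sigma_H,\kappa_H)$ replacing $(\sigma_G,\kappa_G)$.
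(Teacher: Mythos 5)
Your proposal is correct and follows essentially the same route as the paper: expand $\E[\|S\|^4]$ as a sum over index quadruples, kill every configuration containing a singleton index by independence and zero mean, bound the $D$ all-equal terms by $\kappa_G$ and the $3D(D-1)$ two-pair terms by $\sigma_G^4$, then divide by $D^4$. The only cosmetic difference is that you bound the two crossed-pair families via the pointwise Cauchy--Schwarz inequality $\langle X_j,X_k\rangle^2\leq\|X_j\|^2\|X_k\|^2$, whereas the paper uses a trace identity ($\mathrm{tr}(\mathbf{A}^2)\leq(\mathrm{tr}\,\mathbf{A})^2$ for the PSD matrix $\E[\mathbf{g}_{i,k}\mathbf{g}_{i,k}^{\mathsf{T}}]$); both yield the same $\sigma_G^4$ bound.
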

	\proof 
	To prove the first result, using \eqref{unbiased_grad} we have
	\begin{align}\label{lemB1_1}
		\left \| \tnabla f_i({\mathbf{w}},\D) - \nabla f_i({\mathbf{w}})\right \|^4= 	\frac{1}{D^4} \left \|  \sum_{(\mathbf{x},y) \in \D} \left[\nabla l_i(\mathbf{w};\mathbf{x},y)- \nabla f_i({\mathbf{w}})\right]\right \|^4.
	\end{align}
	For the simplicity of notation, let us enumerate the data samples in $\D$ by an index set $\mathcal{I}=\{1,\cdots,D\}$, that is $\D=\{(\mathbf{x}_k,y_k), k\in \mathcal{I}\}$. We additionally define the random vectors $\mathbf{g}_{i,k}:=\nabla l_i(\mathbf{w};\mathbf{x}_k,y_k)- \nabla f_i({\mathbf{w}})$ and $\mathbf{s_i}:=\sum_{(\mathbf{x},y) \in \D} \left[\nabla l_i(\mathbf{w};\mathbf{x},y)- \nabla f_i({\mathbf{w}})\right]=\sum_{k=1}^D \mathbf{g}_{i,k}$. Note that the samples of the batch are independently selected, i.e., the data pairs $(\mathbf{x}_k,y_k)$'s are independent. Accordingly, $\mathbf{g}_{i,k}$'s can be regarded as independent random vectors for different values of $k$. Additionally, \eqref{supervised_ML} implies that $\mathbf{g}_{i,k}$'s are zero mean.
	Using the definition of the vector norm, i.e., $\left \| \mathbf{s_i} \right \|^2 = \mathbf{s_i}^{\mathsf{T}}\mathbf{s_i}$, where $(.)^{\mathsf{T}}$ denotes the transpose operation, we have
	\begin{align}\label{lemB1_2}
		\E \left [ \left \| \tnabla f_i({\mathbf{w}},\D) - \nabla f_i({\mathbf{w}})\right \|^4 \right ] & = \frac{1}{D^4} \E \left[ \left \| \mathbf{s_i} \right \|^4 \right]\nonumber\\
		& 
		= \frac{1}{D^4} \E \left[ \mathbf{s_i}^{\mathsf{T}}\mathbf{s_i}\mathbf{s_i}^{\mathsf{T}}\mathbf{s_i}  \right]\nonumber\\
		& = \frac{1}{D^4} \E \left[ \left(\sum_{k_1=1}^D \mathbf{g}^{\mathsf{T}}_{i,k_1}\right)\left(\sum_{k_2=1}^D \mathbf{g}_{i,k_2}\right)\left(\sum_{k_3=1}^D \mathbf{g}^{\mathsf{T}}_{i,k_3}\right)\left(\sum_{k_4=1}^D \mathbf{g}_{i,k_4}\right) \right] \nonumber\\
		& = \frac{1}{D^4} \E \left[ \sum_{k_1,k_2,k_3,k_4} \mathbf{g}^{\mathsf{T}}_{i,k_1}\mathbf{g}_{i,k_2}\mathbf{g}^{\mathsf{T}}_{i,k_3}\mathbf{g}_{i,k_4}\right]
\nonumber\\
& \leq \frac{\kappa_G+3(D-1)\sigma^4_G}{D^3},
	\end{align}
where the last inequity is obtained by dividing the $D^4$ terms inside the summation into the following five categories.
\begin{itemize}
	\item \textbf{Case 1:} All four indices are identical, i.e., $k_1=k_2=k_3=k_4=k$ for some $k\in \mathcal{I}$. Using Assumption \ref{asm_bounded_var_i}, the expected value of such terms can be bounded as
	\begin{align}\label{E_1_identical}
		\E\left[\mathbf{g}^{\mathsf{T}}_{i,k}\mathbf{g}_{i,k}\mathbf{g}^{\mathsf{T}}_{i,k}\mathbf{g}_{i,k}\right]=\E\left[\left\|\mathbf{g}_{i,k}\right\|^4\right]\leq \kappa_G.
	\end{align}
Note that we have $D$ such terms, which contribute to $\kappa_G/D^3$ in the upper bound of \eqref{lemB1_2}.
\item \textbf{Case 2:} Three out of four indices are identical. These are terms of the form $\mathbf{g}^{\mathsf{T}}_{i,k'}\mathbf{g}_{i,k}\mathbf{g}^{\mathsf{T}}_{i,k}\mathbf{g}_{i,k}$, $\mathbf{g}^{\mathsf{T}}_{i,k}\mathbf{g}_{i,k'}\mathbf{g}^{\mathsf{T}}_{i,k}\mathbf{g}_{i,k}$, $\mathbf{g}^{\mathsf{T}}_{i,k}\mathbf{g}_{i,k}\mathbf{g}^{\mathsf{T}}_{i,k'}\mathbf{g}_{i,k}$, and $\mathbf{g}^{\mathsf{T}}_{i,k}\mathbf{g}_{i,k}\mathbf{g}^{\mathsf{T}}_{i,k}\mathbf{g}_{i,k'}$, for $k\neq k'$. By applying the tower rule, together with the fact that $\mathbf{g}_{i,k}$'s are zero mean and independent, it is easy to verify that the expected value of all these $4D(D-1)$ terms is zero.
\item \textbf{Case 3:} There are two pairs of identical indices. Depending on the distance of identical terms, we have the following three cases:
\begin{itemize}
	\item \textbf{Case 3.1:} When the identical indices are adjacent, i.e., for terms of the form $\mathbf{g}^{\mathsf{T}}_{i,k}\mathbf{g}_{i,k}\mathbf{g}^{\mathsf{T}}_{i,k'}\mathbf{g}_{i,k'}$, with $k\neq k'$, the expected value can be upper bounded (using Assumption \ref{asm_bounded_var_i} and independence of $\mathbf{g}_{i,k}$ from $\mathbf{g}_{i,k'}$) as
	\begin{align}
		\E\left[\mathbf{g}^{\mathsf{T}}_{i,k}\mathbf{g}_{i,k}\mathbf{g}^{\mathsf{T}}_{i,k'}\mathbf{g}_{i,k'}\right]&=\E\left[\left\|\mathbf{g}_{i,k}\right\|^2\left\|\mathbf{g}_{i,k'}\right\|^2\right]\nonumber\\
		&=\E\left[\left\|\mathbf{g}_{i,k}\right\|^2\right]\E\left[\left\|\mathbf{g}_{i,k'}\right\|^2\right]\nonumber\\
		&\leq \sigma^4_G.
	\end{align}
	\item \textbf{Case 3.2:} When the two middle indices are identical, i.e., for terms of the form $\mathbf{g}^{\mathsf{T}}_{i,k}\mathbf{g}_{i,k'}\mathbf{g}^{\mathsf{T}}_{i,k'}\mathbf{g}_{i,k}$, with $k\neq k'$, the expected value can be upper bounded as
\begin{align}\label{case3_2}
	\E\left[\mathbf{g}^{\mathsf{T}}_{i,k}\mathbf{g}_{i,k'}\mathbf{g}^{\mathsf{T}}_{i,k'}\mathbf{g}_{i,k}\right]&\stackrel{(a)}{=}\E\left[{\rm \mathsf{tr}}\left(\mathbf{g}^{\mathsf{T}}_{i,k}\mathbf{g}_{i,k'}\mathbf{g}^{\mathsf{T}}_{i,k'}\mathbf{g}_{i,k}\right)\right]\nonumber\\
	&\stackrel{(b)}{=}\E\left[{\rm \mathsf{tr}}\left(\mathbf{g}_{i,k}\mathbf{g}^{\mathsf{T}}_{i,k}\mathbf{g}_{i,k'}\mathbf{g}^{\mathsf{T}}_{i,k'}\right)\right]\nonumber\\
	&\stackrel{(c)}{=}{\rm \mathsf{tr}}\left(\E\left[\mathbf{g}_{i,k}\mathbf{g}^{\mathsf{T}}_{i,k}\mathbf{g}_{i,k'}\mathbf{g}^{\mathsf{T}}_{i,k'}\right]\right)\nonumber\\
	&={\rm \mathsf{tr}}\left(\left(\E\left[\mathbf{g}_{i,k}\mathbf{g}^{\mathsf{T}}_{i,k}\right]\right)^2\right)\nonumber\\
	&\leq \left({\rm \mathsf{tr}}\left(\E\left[\mathbf{g}_{i,k}\mathbf{g}^{\mathsf{T}}_{i,k}\right]\right)\right)^2\nonumber\\
	&= \left(\E\left[{\rm \mathsf{tr}}\left(\mathbf{g}_{i,k}\mathbf{g}^{\mathsf{T}}_{i,k}\right)\right]\right)^2\nonumber\\
	&= \left(\E\left[\left\|\mathbf{g}_{i,k}\right\|^2\right]\right)^2\nonumber\\
	&\leq \sigma^4_G,
\end{align}
where step $(a)$ is by the fact that the term inside the expectation is a scalar and the trace ${\rm \mathsf{tr}}(c)$ of a scalar $c$ is equal to $c$. Step $(b)$ is by ${\rm \mathsf{tr}}(\mathbf{AB})={\rm \mathsf{tr}}(\mathbf{BA})$ for matrices $\mathbf{A}=\mathbf{g}^{\mathsf{T}}_{i,k}\mathbf{g}_{i,k'}\mathbf{g}^{\mathsf{T}}_{i,k'}$ and $\mathbf{B}=\mathbf{g}_{i,k}$. Moreover, step $(c)$ follows $\E\left[{\rm \mathsf{tr}}(\mathbf{A})\right]={\rm \mathsf{tr}}\left(\E\left[\mathbf{A}\right]\right)$ for arbitrary square matrix $\mathbf{A}$.
\item \textbf{Case 3.3:} When the identical indices are alternating, i.e., for terms of the form $\mathbf{g}^{\mathsf{T}}_{i,k}\mathbf{g}_{i,k'}\mathbf{g}^{\mathsf{T}}_{i,k}\mathbf{g}_{i,k'}$, with $k\neq k'$, the expected value can be upper bounded by $\sigma^4_G$. This is because $\mathbf{g}^{\mathsf{T}}_{i,k}\mathbf{g}_{i,k'}=\mathbf{g}^{\mathsf{T}}_{i,k'}\mathbf{g}_{i,k}$, and thus the derivation in \eqref{case3_2} readily applies.
\end{itemize}
Note that we have $D(D-1)$ terms of each of the above three forms, which contribute to the term $3(D-1)\sigma^4_G/D^3$ in the upper bound of \eqref{lemB1_2}. 
\item \textbf{Case 4:} Two out of four indices are identical and  the other two indices are distinct. This includes $6D(D-1)(D-2)$ terms of the form $\mathbf{g}^{\mathsf{T}}_{i,k}\mathbf{g}_{i,k}\mathbf{g}^{\mathsf{T}}_{i,k'}\mathbf{g}_{i,k''}$, $\mathbf{g}^{\mathsf{T}}_{i,k}\mathbf{g}_{i,k'}\mathbf{g}^{\mathsf{T}}_{i,k}\mathbf{g}_{i,k''}$, $\mathbf{g}^{\mathsf{T}}_{i,k}\mathbf{g}_{i,k'}\mathbf{g}^{\mathsf{T}}_{i,k''}\mathbf{g}_{i,k}$, $\mathbf{g}^{\mathsf{T}}_{i,k'}\mathbf{g}_{i,k}\mathbf{g}^{\mathsf{T}}_{i,k}\mathbf{g}_{i,k''}$, $\mathbf{g}^{\mathsf{T}}_{i,k'}\mathbf{g}_{i,k}\mathbf{g}^{\mathsf{T}}_{i,k''}\mathbf{g}_{i,k}$, and $\mathbf{g}^{\mathsf{T}}_{i,k'}\mathbf{g}_{i,k''}\mathbf{g}^{\mathsf{T}}_{i,k}\mathbf{g}_{i,k}$, with distinct $k,k',k''$. Similar to Case 2, the application of tower rule shows that all such terms have a zero mean.
\item \textbf{Case 5:} All four indices are distinct. This includes $D(D-1)(D-2)(D-3)$ terms of the form $\mathbf{g}^{\mathsf{T}}_{i,k_1}\mathbf{g}_{i,k_2}\mathbf{g}^{\mathsf{T}}_{i,k_3}\mathbf{g}_{i,k_4}$, with distinct $k_1,k_2,k_3,k_4$. The expectation of all such terms is zero due to the fact that $\mathbf{g}_{i,k}$'s are zero mean and independent.
\end{itemize}
Note that the sum of the number of terms in the above five cases is equal to $D^4$, i.e., the above partition provides a disjoint partition covering all cases.
\begin{align}
	&D(D-1)(D-2)(D-3)+6D(D-1)(D-2)+4D(D-1)+3D(D-1)+D\nonumber\\
	&=D\left[1+(D-1)(D^2+D+1)\right]\nonumber\\
	&=D^4.
\end{align}

The proof of the second result \eqref{4th_moment_kappaH} follows similarly by considering the above cases and using various definitions of the matrix norm (e.g., the Frobenius norm). \endproof

	\section{Proof of Lemma \ref{lemma_w_tilde_difference}}\label{App_B0}
	We use induction to characterize the bounds $h_l$, $h'_l$, and $h''_l$ on $\E\left[\|\tilde{\mathbf{w}}_l-{\mathbf{w}}_l\|\right]$, $\E\left[\|\tilde{\mathbf{w}}_l-{\mathbf{w}}_l\|^2\right]$, and $\E\left[\|\tilde{\mathbf{w}}_l-{\mathbf{w}}_l\|^4\right]$, respectively, for $l=1,2,\cdots, \nu$. 
	To get the first result, note that for the base case of $l=1$ we have
	\begin{align}\label{h_1}
		\E\left[\|\tilde{\mathbf{w}}_1-{\mathbf{w}}_1\|\right]&=\E\left[{\big\|}\left(\tilde{\mathbf{w}}_{0} -{\mathbf{w}}_{0} \right) + \alpha \left(\nabla f_i({\mathbf{w}}_{0})- \tnabla f_i(\tilde{\mathbf{w}}_{0},\D_{i,0})\right){\big\|}\right]\nonumber\\
		&\stackrel{(a)}{=}  \alpha \E\left[{\big\|} \tnabla f_i({\mathbf{w}},\D_{i,0})-\nabla f_i({\mathbf{w}}){\big\|}\right]\nonumber\\
		&\stackrel{(b)}{\leq}   \frac{\alpha\sigma_G}{\sqrt{D_{i,0}}}:=h_1,
	\end{align}
	where step $(a)$ is by $\tilde{\mathbf{w}}_{0}={\mathbf{w}}_{0}={\mathbf{w}}$, and step $(b)$ is by \cite[Eq. (34)]{fallah2020personalized}.
	
	Next, as the induction hypothesis, we assume that a bound $h_{l-1}$ holds for $\E\left[\|\tilde{\mathbf{w}}_{l-1}-{\mathbf{w}}_{l-1}\|\right]$, for some $l>1$, and we recursively derive the bound $h_{l}$ on $\E\left[\|\tilde{\mathbf{w}}_l-{\mathbf{w}}_l\|\right]$ as a function of $h_{l-1}$.
	\begin{align}\label{h_l_rec}
		\E\left[\|\tilde{\mathbf{w}}_l-{\mathbf{w}}_l\|\right]&=\E\left[{\big\|}\left(\tilde{\mathbf{w}}_{l-1} -{\mathbf{w}}_{l-1} \right) + \alpha \left(\nabla f_i({\mathbf{w}}_{l-1})- \tnabla f_i(\tilde{\mathbf{w}}_{l-1},\D_{i,l-1})\right){\big\|}\right]\nonumber\\
		&\stackrel{(a)}{\leq} h_{l-1} + \alpha \E\left[{\big\|} \tnabla f_i(\tilde{\mathbf{w}}_{l-1},\D_{i,l-1})-\nabla f_i({\mathbf{w}}_{l-1}){\big\|}\right]\nonumber\\
		&\stackrel{(b)}{\leq}   (1+\alpha L)h_{l-1}+\frac{\alpha\sigma_G}{\sqrt{D_{i,l-1}}}:=h_l,
	\end{align}
	where step $(a)$ is by the convexity of norm together with the induction hypothesis, and step $(b)$ is by \begin{align}\label{h_l_rec_stepB}
		&\E\left[{\big\|} \tnabla f_i(\tilde{\mathbf{w}}_{l-1},\D_{i,l-1})-\nabla f_i({\mathbf{w}}_{l-1}){\big\|}\right]\nonumber\\
		&=\E\left[{\big\|} \left(\tnabla f_i(\tilde{\mathbf{w}}_{l-1},\D_{i,l-1})-\nabla f_i(\tilde{\mathbf{w}}_{l-1})\right)+\left(\nabla f_i(\tilde{\mathbf{w}}_{l-1})-\nabla f_i({\mathbf{w}}_{l-1})\right){\big\|}\right]\nonumber\\
		&\leq\E\left[\E\left[{\big\|} \tnabla f_i(\tilde{\mathbf{w}}_{l-1},\D_{i,l-1})-\nabla f_i(\tilde{\mathbf{w}}_{l-1}){\big\|}~\!{\big|}\tilde{\mathbf{w}}_{l-1}\right]\right]+\E\left[{\big\|}\nabla f_i(\tilde{\mathbf{w}}_{l-1})-\nabla f_i({\mathbf{w}}_{l-1}){\big\|}\right]\nonumber\\
		&\leq \frac{\sigma_G}{\sqrt{D_{i,l-1}}}+ L\E\left[{\|}\tilde{\mathbf{w}}_{l-1}-{\mathbf{w}}_{l-1}{\|}\right],
	\end{align}
	where the last inequality is by \cite[Eq. (34)]{fallah2020personalized} together with Assumption \ref{asm_grad}.
	
	Now, by solving the recursive relation in \eqref{h_l_rec} with the initial value in \eqref{h_1}, we have
	\begin{align}\label{h_l}
		h_l=\alpha \sigma_G\sum_{j=0}^{l-1} \frac{(1+\alpha L)^j}{\sqrt{D_{i,l-1-j}}}.
	\end{align}
	Note that, by the Jensen's inequality together with the convexity of norm, we have
	\begin{align}
		\|\E\left[\tilde{\mathbf{w}}_l-{\mathbf{w}}_l\right]\|\leq \E\left[\|\tilde{\mathbf{w}}_l-{\mathbf{w}}_l\|\right] \leq h_l.
	\end{align}

	To get the second result, for the base case of $l=1$ we have
	\begin{align}\label{h'1}
		\E\left[\|\tilde{\mathbf{w}}_1-{\mathbf{w}}_1\|^2\right]&=\alpha^2 \E\left[{\big\|} \tnabla f_i({\mathbf{w}},\D_{i,0})-\nabla f_i({\mathbf{w}}){\big\|}^2\right]\nonumber\\
		&{\leq}   \frac{\alpha^2\sigma^2_G}{{D_{i,0}}}:=h'_1,
	\end{align}
	where last inequality is by \cite[Eq. (34)]{fallah2020personalized}.
	
	Next, assuming a bound $h'_{l-1}$ on $\E\left[\|\tilde{\mathbf{w}}_{l-1}-{\mathbf{w}}_{l-1}\|^2\right]$, for some $l>1$, we recursively derive the bound $h'_{l}$ on $\E\left[\|\tilde{\mathbf{w}}_l-{\mathbf{w}}_l\|^2\right]$.
	\begin{align}\label{h'_l_rec}
		\E\left[\|\tilde{\mathbf{w}}_l-{\mathbf{w}}_l\|^2\right]&=\E\left[{\big\|}\left(\tilde{\mathbf{w}}_{l-1} -{\mathbf{w}}_{l-1} \right) + \alpha \left(\nabla f_i({\mathbf{w}}_{l-1})- \tnabla f_i(\tilde{\mathbf{w}}_{l-1},\D_{i,l-1})\right){\big\|^2}\right]\nonumber\\
		&\stackrel{(a)}{\leq}2\E\left[{\|}\tilde{\mathbf{w}}_{l-1} -{\mathbf{w}}_{l-1}{\|^2}\right]+2\alpha^2\E\left[{\big\|}\nabla f_i({\mathbf{w}}_{l-1})- \tnabla f_i(\tilde{\mathbf{w}}_{l-1},\D_{i,l-1}){\big\|^2}\right]\nonumber\\
		&\stackrel{(b)}{\leq} 2h'_{l-1} + 2\alpha^2 \E\left[{\big\|} \tnabla f_i(\tilde{\mathbf{w}}_{l-1},\D_{i,l-1})-\nabla f_i({\mathbf{w}}_{l-1}){\big\|}^2\right]\nonumber\\
		&\stackrel{(c)}{\leq}   2(1+\alpha^2 L^2)h'_{l-1}+\frac{2\alpha^2\sigma^2_G}{{D_{i,l-1}}}:=h'_l,
	\end{align}
	where step $(a)$ is by the norm triangle inequality and then inequality $(a+b)^2\leq 2a^2 + 2b^2$, step $(b)$ is by the induction hypothesis, and step $(c)$ is by
	\begin{align}\label{h'_l_rec_stepB}
		&\E\left[{\big\|} \tnabla f_i(\tilde{\mathbf{w}}_{l-1},\D_{i,l-1})-\nabla f_i({\mathbf{w}}_{l-1}){\big\|}^2\right] \nonumber\\
		& = \E \left [\E \left [\left \| \left(\tnabla f_i(\tilde{\mathbf{w}}_{l-1},\D_{i,l-1}) - \nabla f_i(\tilde{\mathbf{w}}_{l-1})\right) + \left(\nabla f_i(\tilde{\mathbf{w}}_{l-1})-\nabla f_i(\mathbf{w}_{l-1})\right) \right \|^2 {\big |} ~\! \tilde{\mathbf{w}}_{l-1} \right ]  \right ]
		\nonumber\\
		& \stackrel{(a)}{=} \E \left [ \E \left [ \left \| \tnabla f_i(\tilde{\mathbf{w}}_{l-1},\D_{i,l-1}) - \nabla f_i(\tilde{\mathbf{w}}_{l-1})\right \|^2 {\big |} ~\! \tilde{\mathbf{w}}_{l-1} \right ] \right ] +
		\E \left [\left \| \nabla f_i(\tilde{\mathbf{w}}_{l-1})-\nabla f_i(\mathbf{w}_{l-1})\right \|^2  \right ] 
		\nonumber\\
		& \stackrel{(b)}{\leq} \frac{\sigma^2_G}{D_{i,l-1}} +
		L^2 \E \left [\left \| \tilde{\mathbf{w}}_{l-1}-\mathbf{w}_{l-1}\right \|^2  \right ],
	\end{align}
	where step $(a)$ follows the fact that conditioned on $\tilde{\mathbf{w}}_{l-1}$ the term $\tnabla f_i(\tilde{\mathbf{w}}_{l-1},\D_{i,l-1}) - \nabla f_i(\tilde{\mathbf{w}}_{l-1})$ is zero mean, and the term $\nabla f_i(\tilde{\mathbf{w}}_{l-1})-\nabla f_i(\mathbf{w}_{l-1})$ is deterministic.\footnote{It is easy to verify that if $\mathbf{X}$ is zero mean and $\mathbf{Y}$ is deterministic, then $\E\left[\left\|\mathbf{X}+\mathbf{Y}\right\|^2\right]=\E\left[\left\|\mathbf{X}\right\|^2\right]+\left\|\mathbf{Y}\right\|^2$.} Also, step $(b)$ is obtained by \cite[Eq. (34)]{fallah2020personalized} together with Assumption \ref{asm_grad}.

	Note that the recursive expression for $h'_l$ in  \eqref{h'_l_rec} only holds for $l>1$. That is, one cannot derive $h'_1$ from this expression by setting $l=1$ and $h'_0:=0$. By solving the recursive relation in \eqref{h'_l_rec} with the initial value in \eqref{h'1}, we have
	\begin{align}\label{h'_l}
		h'_l=\frac{\alpha^2 \sigma^2_G(2+2\alpha^2 L^2)^{l-1}}{{D_{i,0}}} + 2\alpha^2 \sigma^2_G\sum_{j=1}^{l-1} \frac{(2+2\alpha^2 L^2)^{l-j-1}}{{D_{i,j}}}.
	\end{align}
	
	To derive the third result of the lemma, we apply induction together with Lemma \ref{lemma_intermediate} to upper bound $\E\left[\|\tilde{\mathbf{w}}_l-{\mathbf{w}}_l\|^4\right]$ by $h''_l$, for $l=1,\cdots,\nu$. For the base case of $l=1$ we have
	\begin{align}\label{h''1}
		\E\left[\|\tilde{\mathbf{w}}_1-{\mathbf{w}}_1\|^4\right]&=\alpha^4 \E\left[{\big\|} \tnabla f_i({\mathbf{w}},\D_{i,0})-\nabla f_i({\mathbf{w}}){\big\|}^4\right]\nonumber\\
		&{\leq}   \frac{\kappa_G+3(D_{i,0}-1)\sigma^4_G}{D_{i,0}^3}\alpha^4:=h''_1,
	\end{align}
	where the last inequality is by Lemma \ref{lemma_intermediate}.
	
	Next, assuming a bound $h''_{l-1}$ on $\E\left[\|\tilde{\mathbf{w}}_{l-1}-{\mathbf{w}}_{l-1}\|^4\right]$, for some $l>1$, we recursively derive the bound $h''_{l}$ on $\E\left[\|\tilde{\mathbf{w}}_l-{\mathbf{w}}_l\|^4\right]$.
	\begin{align}\label{h''_l_rec}
		\E\left[\|\tilde{\mathbf{w}}_l-{\mathbf{w}}_l\|^4\right]&=\E\left[{\big\|}\left(\tilde{\mathbf{w}}_{l-1} -{\mathbf{w}}_{l-1} \right) + \alpha \left(\nabla f_i({\mathbf{w}}_{l-1})- \tnabla f_i(\tilde{\mathbf{w}}_{l-1},\D_{i,l-1})\right){\big\|^4}\right]\nonumber\\
		&\stackrel{(a)}{\leq}\E\left[8\left\|\tilde{\mathbf{w}}_{l-1} -{\mathbf{w}}_{l-1}\right\|^4+8\alpha^4{\big\|}\nabla f_i({\mathbf{w}}_{l-1})- \tnabla f_i(\tilde{\mathbf{w}}_{l-1},\D_{i,l-1}){\big\|^4}\right]
		\nonumber\\
		&\stackrel{(b)}{\leq} 8h''_{l-1} + 8\alpha^4 \E\left[{\big\|} \tnabla f_i(\tilde{\mathbf{w}}_{l-1},\D_{i,l-1})-\nabla f_i({\mathbf{w}}_{l-1}){\big\|}^4\right]\nonumber\\
		&\stackrel{(c)}{\leq}   8(1+8\alpha^4 L^4)h''_{l-1}+64\alpha^4\frac{\kappa_G+3(D_{i,l-1}-1)\sigma^4_G}{D_{i,l-1}^3}:=h''_l,
	\end{align}
	where step $(a)$ is by the norm triangle inequality and then applying the inequality $(a+b)^2\leq 2a^2 + 2b^2$ twice to get $\|\mathbf{x}+\mathbf{y}\|^4\leq 8\|\mathbf{x}\|^4 + 8\|\mathbf{y}\|^4$, step $(b)$ is by the induction hypothesis, and step $(c)$ is by
	\begin{align}\label{h''_l_rec_stepB}
		&\E\left[{\big\|} \tnabla f_i(\tilde{\mathbf{w}}_{l-1},\D_{i,l-1})-\nabla f_i({\mathbf{w}}_{l-1}){\big\|}^4\right]\nonumber\\
		& = \E \left [\left \| \left(\tnabla f_i(\tilde{\mathbf{w}}_{l-1},\D_{i,l-1}) - \nabla f_i(\tilde{\mathbf{w}}_{l-1})\right) + \left(\nabla f_i(\tilde{\mathbf{w}}_{l-1})-\nabla f_i(\mathbf{w}_{l-1})\right) \right \|^4\right ] \nonumber\\
		& \leq 8\E \left [ \E \left [ \left \| \tnabla f_i(\tilde{\mathbf{w}}_{l-1},\D_{i,l-1}) - \nabla f_i(\tilde{\mathbf{w}}_{l-1})\right \|^4 {\big |} ~\! \tilde{\mathbf{w}}_{l-1} \right ] \right ] + 8
		\E \left [\left \| \nabla f_i(\tilde{\mathbf{w}}_{l-1})-\nabla f_i(\mathbf{w}_{l-1})\right \|^4  \right ] 
		\nonumber\\
		& \leq 8\frac{\kappa_G+3(D_{i,l-1}-1)\sigma^4_G}{D_{i,l-1}^3} +
		8L^4 \E \left [\left \| \tilde{\mathbf{w}}_{l-1}-\mathbf{w}_{l-1}\right \|^4  \right ],
	\end{align}
	where the last inequality is by Lemma \ref{lemma_intermediate} and Assumption \ref{asm_grad}.

	Finally, by solving the recursive relation in \eqref{h''_l_rec} with the initial value in \eqref{h''1}, we have
	\begin{align}\label{h''_l}
		h''_l=\frac{\alpha^4\left(\kappa_G+3(D_{i,0}-1)\sigma^4_G\right) (8+64\alpha^4 L^4)^{l-1}}{{D^3_{i,0}}} 
		+ 64\alpha^4 \sum_{j=1}^{l-1} \frac{\left(\kappa_G+3(D_{i,j}-1)\sigma^4_G\right)(8+64\alpha^4 L^4)^{l-j-1}}{{D^3_{i,j}}}.
	\end{align}
	
	\endproof

	\section{Proof of Lemma \ref{lemma_err_moment}}\label{App_B}

	Let us define the matrix  $\mathbf{\tilde{B}}_{\tilde{\mathbf{w}}}^{(\nu)}$ as
	\begin{align}\label{B_tilde}
		\mathbf{\tilde{B}}_{\tilde{\mathbf{w}}}^{(\nu)}:=& \left(\mathbf{I} - \alpha \tnabla^2 f_i({\mathbf{w}},\D'_{i,0})\right) \left(\mathbf{I} - \alpha \tnabla^2 f_i(\tilde{\mathbf{w}}_1,\D'_{i,1})\right) \cdots \left(\mathbf{I} - \alpha \tnabla^2 f_i(\tilde{\mathbf{w}}_{\nu-1},\D'_{i,\nu-1})\right),
	\end{align}
	which can be recursively expressed as
	\begin{align}\label{B_tilde_rec}
		\mathbf{\tilde{B}}^{(\nu)}_{\tilde{\mathbf{w}}} &=  \mathbf{\tilde{B}}^{(\nu-1)}_{\tilde{\mathbf{w}}} \left(\mathbf{I} - \alpha \tnabla^2 f_i(\tilde{\mathbf{w}}_{\nu-1},\D'_{i,\nu-1})\right),\nonumber\\
		\mathbf{\tilde{B}}^{(1)}_{\tilde{\mathbf{w}}} &=
		\mathbf{I} - \alpha \tnabla^2 f_i(\mathbf{w},\D'_{i,0}). 
	\end{align}
	We can then rewrite the expression for the stochastic gradient $\tnabla F_i(\mathbf{w})$ in \eqref{grad_F_stoch} 
	as
	\begin{align}\label{F_tilde_rewrite1}
		\tnabla F_i(\mathbf{w}) &= \mathbf{\tilde{B}}^{(\nu)}_{\tilde{\mathbf{w}}} \tnabla f_i(\tilde{\mathbf{w}}_{\nu-1} - \alpha \tnabla f_i(\tilde{\mathbf{w}}_{\nu-1},\D_{i,\nu-1}),\D_{i,\nu}),
	\end{align}
	which can be further rewritten as
	\begin{align}\label{F_tilde_rewrite2}
		\tnabla F_i(\mathbf{w}) &=\left(\mathbf{{B}}^{(\nu)}_{\mathbf{w}}+\mathbf{E}^{(\nu)}_1\right) \left(\nabla f_i(\mathbf{w}_{\nu-1} - \alpha \nabla f_i(\mathbf{w}_{\nu-1}))+\mathbf{e}_2\right),
	\end{align}
	where $\mathbf{{B}}^{(\nu)}_{\mathbf{w}}$ is defined in \eqref{B_w}, and $\mathbf{E}^{(\nu)}_1$ and $\mathbf{e}_2$ are defined as
	\begin{align}
		\mathbf{E}^{(\nu)}_1&=\mathbf{\tilde{B}}^{(\nu)}_{\tilde{\mathbf{w}}}-\mathbf{{B}}^{(\nu)}_{\mathbf{w}}\label{E1}\\
		\mathbf{e}_2&=\tnabla f_i(\tilde{\mathbf{w}}_{\nu-1} - \alpha \tnabla f_i(\tilde{\mathbf{w}}_{\nu-1},\D_{i,\nu-1}),\D_{i,\nu}) - \nabla f_i(\mathbf{w}_{\nu-1} - \alpha \nabla f_i(\mathbf{w}_{\nu-1}))\nonumber\\
		&=\tnabla f_i(\tilde{\mathbf{w}}_{\nu},\D_{i,\nu}) - \nabla f_i(\mathbf{w}_{\nu}).
		\label{e2}
	\end{align}
	
	We first bound the moments of $\mathbf{E}^{(\nu)}_1$ and $\mathbf{e}_2$. Throughout the analysis in this appendix we assume that $\nu>1$. For $\left \| \E \left [ \mathbf{e}_2 \right ] \right \|$ we have
	\begin{align}\label{moment1_e2}
		\left \| \E \left [ \mathbf{e}_2 \right ] \right \| & \leq  \E \left [ \left \|\tnabla f_i(\tilde{\mathbf{w}}_{\nu},\D_{i,\nu}) - \nabla f_i(\mathbf{w}_{\nu}) \right \| \right ] \nonumber\\
		& \stackrel{(a)}{\leq} \frac{\sigma_G}{\sqrt{D_{i,\nu}}}+ L\E\left[{\|}\tilde{\mathbf{w}}_{\nu}-{\mathbf{w}}_{\nu}{\|}\right] \nonumber\\
		& \stackrel{(b)}{\leq} \frac{\sigma_G}{\sqrt{D_{i,\nu}}}+Lh_{\nu},
	\end{align}
	where step $(a)$ is by \eqref{h_l_rec_stepB}, and step $(b)$ is by Lemma \ref{lemma_w_tilde_difference}.
	Moreover, for the second moment of $\mathbf{e}_2$ we have 
	\begin{align}\label{moment2_e2}
		\E \left [\left \| \mathbf{e}_2 \right \|^2\right ]  &  \stackrel{(a)}{\leq} \frac{\sigma^2_G}{D_{i,\nu}} +
		L^2 \E \left [\left \| \tilde{\mathbf{w}}_{\nu}-\mathbf{w}_{\nu}\right \|^2  \right ]\nonumber\\
		&  \stackrel{(b)}{\leq} \frac{\sigma^2_G}{D_{i,\nu}} +
		L^2 h'_{\nu},
	\end{align}
	where step $(a)$ is by \eqref{h'_l_rec_stepB}, and step $(b)$ follows Lemma \ref{lemma_w_tilde_difference}. Additionally, using \eqref{h''_l_rec_stepB} and Lemma \ref{lemma_w_tilde_difference}, for the fourth moment of $\mathbf{e}_2$ we have 
	\begin{align}\label{moment4_e2}
		\E \left [\left \| \mathbf{e}_2 \right \|^4\right ]  &  \leq 8\left(\frac{\kappa_G+3(D_{i,\nu}-1)\sigma^4_G}{D_{i,\nu}^3}+L^4 h''_{\nu}\right).
	\end{align}
	
	On the other hand, for $\E \left [ \mathbf{E}^{(\nu)}_1 \right ]$ we have
	\begin{align}\label{E1_mean}
		\E \left [ \mathbf{E}^{(\nu)}_1 \right ] &\stackrel{(a)}{=}\E\left[\mathbf{\tilde{B}}^{(\nu)}_{\tilde{\mathbf{w}}}\right]-\mathbf{{B}}^{(\nu)}_{\mathbf{w}}\nonumber\\
		& \stackrel{(b)}{=} \E \left [ \E \left[\prod_{l=0}^{\nu-1}\left(\mathbf{I} - \alpha \tnabla^2 f_i(\tilde{\mathbf{w}}_l,\D'_{i,l})\right){\big|}\left\{\tilde{\mathbf{w}}_j\right\}_{j=1}^{\nu-1}\right]\right] -\mathbf{{B}}^{(\nu)}_{\mathbf{w}}\nonumber\\
		& \stackrel{(c)}{=} \E \left[\prod_{l=0}^{\nu-1}\E\left[\mathbf{I} - \alpha \tnabla^2 f_i(\tilde{\mathbf{w}}_l,\D'_{i,l}){\big|}\tilde{\mathbf{w}}_l\right]\right] -\mathbf{{B}}^{(\nu)}_{\mathbf{w}}\nonumber\\
		& \stackrel{(d)}{=} \E \left[\prod_{l=0}^{\nu-1}\left(\mathbf{I} - \alpha \nabla^2 f_i(\tilde{\mathbf{w}}_l)\right)\right] -\mathbf{{B}}^{(\nu)}_{\mathbf{w}},
	\end{align}
	where step $(a)$ follows the fact that $\mathbf{{B}}^{(\nu)}_{\mathbf{w}}$ is deterministic, step $(b)$ is by the law of total expectation (tower rule) and \eqref{B_tilde}, 
	step $(c)$ follows the independence of $\D'_{i,l}$'s, and step $(d)$ is by applying \eqref{supervised_ML} to show that $\E\left[\tnabla^2 f_i(\tilde{\mathbf{w}}_l,\D'_{i,l})|\tilde{\mathbf{w}}_l\right]=\nabla^2 f_i(\tilde{\mathbf{w}}_l)$.

	To find the norm of the expression in \eqref{E1_mean}, let us define the matrix  $\mathbf{{B}}_{\tilde{\mathbf{w}}}^{(\nu)}$ as
	\begin{align}\label{B_wt}
		\mathbf{{B}}_{\tilde{\mathbf{w}}}^{(\nu)}:=& \left(\mathbf{I} - \alpha \nabla^2 f_i({\mathbf{w}})\right) \left(\mathbf{I} - \alpha \nabla^2 f_i(\tilde{\mathbf{w}}_1)\right) \cdots \left(\mathbf{I} - \alpha \nabla^2 f_i(\tilde{\mathbf{w}}_{\nu-1})\right),
	\end{align}
	which can be recursively expressed as
	\begin{align}\label{B_wt_rec}
		\mathbf{{B}}^{(\nu)}_{\tilde{\mathbf{w}}} &=  \mathbf{{B}}^{(\nu-1)}_{\tilde{\mathbf{w}}} \left(\mathbf{I} - \alpha \nabla^2 f_i(\tilde{\mathbf{w}}_{\nu-1})\right),\nonumber\\
		\mathbf{{B}}^{(1)}_{\tilde{\mathbf{w}}} &=
		\mathbf{I} - \alpha \nabla^2 f_i(\mathbf{w}). 
	\end{align}
	Then, using \eqref{E1_mean}, we have (recall that $\tilde{\mathbf{w}}_0:={\mathbf{w}}_0:={\mathbf{w}}$)
	\begin{align}\label{E1_mean_part2}
		\E \left [ \mathbf{E}^{(\nu)}_1 \right ] &= \E \left[\mathbf{{B}}^{(\nu)}_{\tilde{\mathbf{w}}}-\mathbf{{B}}^{(\nu)}_{\mathbf{w}}\right],
	\end{align}
	Now, we apply induction to find  the upper bound $t_{\nu}$ of $\E [ \| \mathbf{{B}}^{(\nu)}_{\tilde{\mathbf{w}}}-\mathbf{{B}}^{(\nu)}_{\mathbf{w}} \|]$. Note that, for the base case of $\nu=1$, we have
	\begin{align}\label{E1_mean_part2_base}
		\E\left[\left\| \mathbf{{B}}^{(1)}_{\tilde{\mathbf{w}}}-\mathbf{{B}}^{(1)}_{\mathbf{w}} \right\|\right] = \alpha \E \left[\left\|\nabla^2 f_i(\tilde{\mathbf{w}}_{0})-\nabla^2 f_i({\mathbf{w}}_{0})\right\|\right]=0,
	\end{align}
	which implies $t_1=0$. Next, assuming $\E [ \| \mathbf{{B}}^{(\nu-1)}_{\tilde{\mathbf{w}}}-\mathbf{{B}}^{(\nu-1)}_{\mathbf{w}} \|] \leq t_{\nu-1}$ (i.e., the induction hypothesis), we find the upper bound $t_{\nu}$ on $\E [ \| \mathbf{{B}}^{(\nu)}_{\tilde{\mathbf{w}}}-\mathbf{{B}}^{(\nu)}_{\mathbf{w}} \|]$.
	\begin{align}\label{E1_mean_ind}
		\E \left[\left\|\mathbf{{B}}^{(\nu)}_{\tilde{\mathbf{w}}}-\mathbf{{B}}^{(\nu)}_{\mathbf{w}}\right\|\right]
		&  \stackrel{(a)}{=} \E \left[\left\|\mathbf{{B}}^{(\nu-1)}_{\tilde{\mathbf{w}}} \left(\mathbf{I} - \alpha \nabla^2 f_i(\tilde{\mathbf{w}}_{\nu-1})\right)-\mathbf{{B}}^{(\nu-1)}_{{\mathbf{w}}} \left(\mathbf{I} - \alpha \nabla^2 f_i({\mathbf{w}}_{\nu-1})\right)\right\|\right]\nonumber\\
		& \stackrel{(b)}{\leq} 
		\E \left[\left\|\mathbf{{B}}^{(\nu-1)}_{\tilde{\mathbf{w}}} -\mathbf{{B}}^{(\nu-1)}_{{\mathbf{w}}}\right\|\right]
		+\alpha \E \left[\left\|\mathbf{{B}}^{(\nu-1)}_{\tilde{\mathbf{w}}} \nabla^2 f_i(\tilde{\mathbf{w}}_{\nu-1})-\mathbf{{B}}^{(\nu-1)}_{{\mathbf{w}}} \nabla^2 f_i({\mathbf{w}}_{\nu-1})\right\|\right]\nonumber\\
		& \stackrel{(c)}{\leq} (1+\alpha L) t_{\nu-1}
		+ \alpha\rho(1+\alpha L)^{\nu-1} h_{\nu-1}:=t_{\nu},
	\end{align}
	where step $(a)$ is by \eqref{B_w_rec} and \eqref{B_wt_rec}, step $(b)$ is by the convexity of norm, and step $(c)$ is by the induction hypothesis and
	\begin{align}\label{E1_mean_ind_part2}
		&\E \left[\left\|\mathbf{{B}}^{(\nu-1)}_{\tilde{\mathbf{w}}} \nabla^2 f_i(\tilde{\mathbf{w}}_{\nu-1})-\mathbf{{B}}^{(\nu-1)}_{{\mathbf{w}}} \nabla^2 f_i({\mathbf{w}}_{\nu-1})\right\|\right]\nonumber\\
		& = \E \left[\left\|\left(\mathbf{{B}}^{(\nu-1)}_{\tilde{\mathbf{w}}} \nabla^2 f_i(\tilde{\mathbf{w}}_{\nu-1})-\mathbf{{B}}^{(\nu-1)}_{{\mathbf{w}}} \nabla^2 f_i({\tilde{\mathbf{w}}}_{\nu-1})\right)+\big(\mathbf{{B}}^{(\nu-1)}_{{\mathbf{w}}} \nabla^2 f_i({\tilde{\mathbf{w}}}_{\nu-1})-\mathbf{{B}}^{(\nu-1)}_{{\mathbf{w}}} \nabla^2 f_i({\mathbf{w}}_{\nu-1})\big)\right\|\right]\nonumber\\& \leq
		\E \left[\left\|\left(\mathbf{{B}}^{(\nu-1)}_{\tilde{\mathbf{w}}} -\mathbf{{B}}^{(\nu-1)}_{{\mathbf{w}}}\right)\nabla^2 f_i(\tilde{\mathbf{w}}_{\nu-1})\right\|\right]
		+ \E \left[\left\|\mathbf{{B}}^{(\nu-1)}_{{\mathbf{w}}} \big(\nabla^2 f_i({\tilde{\mathbf{w}}}_{\nu-1})-\nabla^2 f_i({\mathbf{w}}_{\nu-1})\big)\right\|\right]
		\nonumber\\
		& \stackrel{(a)}{\leq}
		L \E \left[\left\|\mathbf{{B}}^{(\nu-1)}_{\tilde{\mathbf{w}}} -\mathbf{{B}}^{(\nu-1)}_{{\mathbf{w}}}\right\|\right]
		+ \rho(1+\alpha L)^{\nu-1} \E \left[\left\|{\tilde{\mathbf{w}}}_{\nu-1}-{\mathbf{w}}_{\nu-1}\right\|\right]
		\nonumber\\
		& \stackrel{(b)}{\leq}
		L t_{\nu-1}
		+ \rho(1+\alpha L)^{\nu-1} h_{\nu-1},
	\end{align}
	where step $(a)$ is by $\|\nabla^2 f_i(\mathbf{w})\|\leq L$ for any arbitrary model $\mathbf{w}$, Assumption \ref{asm_Hesian_Lip}, and applying \eqref{eq51_norm1_1} on top of $\mathbf{{B}}^{(\nu-1)}_{{\mathbf{w}}}$ being deterministic. Moreover, step $(b)$ follows the induction hypothesis and Lemma \ref{lemma_w_tilde_difference}.
	
	Finally, by solving the recursive expression in \eqref{E1_mean_ind} with the initial value $d_1=0$, we have
	\begin{align}\label{E1_mean_d1}
		\E \left[\left\|\mathbf{{B}}^{(\nu)}_{\tilde{\mathbf{w}}}-\mathbf{{B}}^{(\nu)}_{\mathbf{w}}\right\|\right]\leq t_{\nu}:=\alpha\rho(1+\alpha L)^{\nu-1}\sum_{j=1}^{\nu-1}h_j, ~~~~ \nu > 1.
	\end{align}
	This together with \eqref{E1_mean_part2} and the Jensen's inequality  gives
	\begin{align}\label{E1_mean_new}
		\left\|\E \left [ \mathbf{E}^{(\nu)}_1 \right ]\right\| & {\leq} \E \left[\left\|\mathbf{{B}}^{(\nu)}_{\tilde{\mathbf{w}}}-\mathbf{{B}}^{(\nu)}_{\mathbf{w}}\right\|\right]\nonumber\\
		& \leq t_{\nu}:=\alpha\rho(1+\alpha L)^{\nu-1}\sum_{j=1}^{\nu-1}h_j, ~~~~ \nu > 1.
	\end{align}

	Next, we apply a proof by induction to upper bound $\E \left [ \|\mathbf{E}^{(\nu)}_1\|^2 \right ]$ by some quantity $d_{\nu}$. Note that for the base case of $\nu=1$ we have
	\begin{align}\label{norm_E1_1}
		\E \left [ \|\mathbf{E}^{(1)}_1\|^2 \right ] & = \E \left [ \|\mathbf{\tilde{B}}^{(1)}_{\tilde{\mathbf{w}}}-\mathbf{{B}}^{(1)}_{\mathbf{w}}\|^2 \right ]\nonumber\\
		& = \alpha^2\E \left [ \| \nabla^2 f_i(\mathbf{w})-\tnabla^2 f_i(\mathbf{w},\D'_{i,0}) \|^2 \right ] \nonumber\\
		& \leq  \alpha^2 \frac{\sigma_H^2}{D'_{i,0}}:=d_1,
	\end{align}
	where the last inequality is by \cite[Eq. (30b)]{fallah2020personalized}. Now, as the induction hypothesis, we assume that step $\nu-1$ holds with some bound $d_{\nu -1}$, i.e., $\E \left [ \|\mathbf{E}^{(\nu-1)}_1\|^2 \right ]\leq d_{\nu -1}$,
	and we show that step $\nu$ holds with some factor $d_{\nu}$. Using \eqref{B_w_rec} and \eqref{B_tilde_rec} we have
	\begin{align}\label{norm_E1_nu}
		\E \left [ \|\mathbf{E}^{(\nu)}_1\|^2 \right ] & = \E \left [ \|\mathbf{\tilde{B}}^{(\nu)}_{\tilde{\mathbf{w}}}-\mathbf{{B}}^{(\nu)}_{\mathbf{w}}\|^2 \right ]\nonumber\\
		& = \E \left [ \left\| \mathbf{\tilde{B}}^{(\nu-1)}_{\tilde{\mathbf{w}}} \left(\mathbf{I} - \alpha \tnabla^2 f_i(\tilde{\mathbf{w}}_{\nu-1},\D'_{i,\nu-1})\right) - \mathbf{B}^{(\nu-1)}_{\mathbf{w}} \left(\mathbf{I} - \alpha \nabla^2 f_i(\mathbf{w}_{\nu-1})\right) \right\|^2 \right ] \nonumber\\
		& = \E \Big [ \big\| \mathbf{\tilde{B}}^{(\nu-1)}_{\tilde{\mathbf{w}}} \left(\mathbf{I} - \alpha \tnabla^2 f_i(\tilde{\mathbf{w}}_{\nu-1},\D'_{i,\nu-1})\right) -\mathbf{B}^{(\nu-1)}_{\mathbf{w}} \left(\mathbf{I} - \alpha \tnabla^2 f_i(\tilde{\mathbf{w}}_{\nu-1},\D'_{i,\nu-1})\right)
		\nonumber\\
		&~~~~~~~~~~+ \mathbf{B}^{(\nu-1)}_{\mathbf{w}} \left(\mathbf{I} - \alpha \tnabla^2 f_i(\tilde{\mathbf{w}}_{\nu-1},\D'_{i,\nu-1})\right) -\mathbf{B}^{(\nu-1)}_{\mathbf{w}} \left(\mathbf{I} 
		- \alpha \nabla^2 f_i(\mathbf{w}_{\nu-1})\right) \big\|^2 \Big ] \nonumber\\
		& = \E \Big [ \big\| \left(\mathbf{\tilde{B}}^{(\nu-1)}_{\tilde{\mathbf{w}}}-\mathbf{B}^{(\nu-1)}_{\mathbf{w}}\right) \left(\mathbf{I} - \alpha \tnabla^2 f_i(\tilde{\mathbf{w}}_{\nu-1},\D'_{i,\nu-1})\right)
		\nonumber\\
		&~~~~~~~~~~+ \alpha\mathbf{B}^{(\nu-1)}_{\mathbf{w}} \left(\nabla^2 f_i(\mathbf{w}_{\nu-1}) -  \tnabla^2 f_i(\tilde{\mathbf{w}}_{\nu-1},\D'_{i,\nu-1})\right) \big\|^2 \Big ] \nonumber\\
		& \leq \E \Big [ \Big(\big\| \left(\mathbf{\tilde{B}}^{(\nu-1)}_{\tilde{\mathbf{w}}}-\mathbf{B}^{(\nu-1)}_{\mathbf{w}}\right) \left(\mathbf{I} - \alpha \tnabla^2 f_i(\tilde{\mathbf{w}}_{\nu-1},\D'_{i,\nu-1})\right)\big\|
		\nonumber\\
		&~~~~~~~~~~+ \alpha\big\|\mathbf{B}^{(\nu-1)}_{\mathbf{w}} \left(\nabla^2 f_i(\mathbf{w}_{\nu-1}) -  \tnabla^2 f_i(\tilde{\mathbf{w}}_{\nu-1},\D'_{i,\nu-1})\right) \big\| \Big)^{\!\!2}~\Big ]\nonumber\\
		& \leq 2\E \Big [\big\| \left(\mathbf{\tilde{B}}^{(\nu-1)}_{\tilde{\mathbf{w}}}-\mathbf{B}^{(\nu-1)}_{\mathbf{w}}\right) \left(\mathbf{I} - \alpha \tnabla^2 f_i(\tilde{\mathbf{w}}_{\nu-1},\D'_{i,\nu-1})\right)\big\|^2\Big]
		\nonumber\\
		&~~~+ 2\alpha^2\E \Big [\big\|\mathbf{B}^{(\nu-1)}_{\mathbf{w}} \left(\nabla^2 f_i(\mathbf{w}_{\nu-1}) -  \tnabla^2 f_i(\tilde{\mathbf{w}}_{\nu-1},\D'_{i,\nu-1})\right) \big\|^2\Big ],
	\end{align}
	where the last inequality is by applying the Cauchy-Schwarz inequality, $(a + b)^2 \leq 2(a^2 + b^2)$ for $a,b\geq0$.
	
	Next, we bound each of the two terms in \eqref{norm_E1_nu}. For the first term we have
	\begin{align}\label{norm_E1_nu_term1}
		&\E \Big [\big\| \left(\mathbf{\tilde{B}}^{(\nu-1)}_{\tilde{\mathbf{w}}}-\mathbf{B}^{(\nu-1)}_{\mathbf{w}}\right) \left(\mathbf{I} - \alpha \tnabla^2 f_i(\tilde{\mathbf{w}}_{\nu-1},\D'_{i,\nu-1})\right)\big\|^2\Big]\nonumber\\
		&\leq\E \Big [\big\| \mathbf{\tilde{B}}^{(\nu-1)}_{\tilde{\mathbf{w}}}-\mathbf{B}^{(\nu-1)}_{\mathbf{w}}\big\|^2 .\big\|\mathbf{I} - \alpha \tnabla^2 f_i(\tilde{\mathbf{w}}_{\nu-1},\D'_{i,\nu-1})\big\|^2\Big]\nonumber\\
		&\stackrel{(a)}{=}\E \Big[\E \Big[\big\| \mathbf{\tilde{B}}^{(\nu-1)}_{\tilde{\mathbf{w}}}-\mathbf{B}^{(\nu-1)}_{\mathbf{w}}\big\|^2~\!{\big |}~\!\!\left\{\tilde{\mathbf{w}}_j\right\}_{j=1}^{\nu-2}\Big].\E\Big[\big\|\mathbf{I} - \alpha \tnabla^2 f_i(\tilde{\mathbf{w}}_{\nu-1},\D'_{i,\nu-1})\big\|^2~\!{\big |}~\!\!\left\{\tilde{\mathbf{w}}_j\right\}_{j=1}^{\nu-2}\Big]\Big]\nonumber\\
		&\stackrel{(b)}{\leq}
		\left(1+\alpha L + \alpha \frac{\sigma_H}{\sqrt{D'_{i,\nu-1}}}\right)^{\!2\!}\E \Big[\big\| \mathbf{\tilde{B}}^{(\nu-1)}_{\tilde{\mathbf{w}}}-\mathbf{B}^{(\nu-1)}_{\mathbf{w}}\big\|^2\Big]\nonumber\\
		&\leq
		\left(1+\alpha L + \alpha \frac{\sigma_H}{\sqrt{D'_{i,\nu-1}}}\right)^{\!\!2}d_{\nu-1},
	\end{align}
	where step $(a)$ is by the tower rule and then using the independence of $\D'_{i,\nu-1}$ from $\left\{\D'_{i,l}\right\}_{l=0}^{\nu-2}$, and thus the independence of the two norm terms conditioned on $\left\{\tilde{\mathbf{w}}_j\right\}_{j=1}^{\nu-2}$. Moreover, step $(b)$ is by the following results.
	\begin{align}\label{norm_E1_nu_term1_expct}
		&\E\Big[\big\|\mathbf{I} - \alpha \tnabla^2 f_i(\tilde{\mathbf{w}}_{\nu-1},\D'_{i,\nu-1})\big\|^2~\!{\big |}~\!\!\left\{\tilde{\mathbf{w}}_j\right\}_{j=1}^{\nu-2}\Big]\nonumber\\
		& = \E\Big[\big\|\mathbf{I} - \alpha \tnabla^2 f_i(\tilde{\mathbf{w}}_{\nu-1},\D'_{i,\nu-1}) + \alpha \nabla^2 f_i(\tilde{\mathbf{w}}_{\nu-1}) - \alpha \nabla^2 f_i(\tilde{\mathbf{w}}_{\nu-1})\big\|^2~\!{\big |}~\!\!\left\{\tilde{\mathbf{w}}_j\right\}_{j=1}^{\nu-2}\Big]\nonumber\\
		& \leq \E\Big[\left(\big\|\mathbf{I} - \alpha \nabla^2 f_i(\tilde{\mathbf{w}}_{\nu-1})\big\|+\alpha\big\|\tnabla^2 f_i(\tilde{\mathbf{w}}_{\nu-1},\D'_{i,\nu-1}) - \nabla^2 f_i(\tilde{\mathbf{w}}_{\nu-1})\big\|\right)^{\!2}~\!{\big |}~\!\!\left\{\tilde{\mathbf{w}}_j\right\}_{j=1}^{\nu-2}\Big]\nonumber\\
		&\stackrel{(a)}{\leq} (1+\alpha L)^2+\alpha^2\E\Big[\big\|\tnabla^2 f_i(\tilde{\mathbf{w}}_{\nu-1},\D'_{i,\nu-1}) - \nabla^2 f_i(\tilde{\mathbf{w}}_{\nu-1})\big\|^{2}~\!{\big |}~\!\!\left\{\tilde{\mathbf{w}}_j\right\}_{j=1}^{\nu-2}\Big]\nonumber\\
		&\hspace{1in}\!\!+2\alpha(1+\alpha L)\E\Big[\big\|\tnabla^2 f_i(\tilde{\mathbf{w}}_{\nu-1},\D'_{i,\nu-1}) - \nabla^2 f_i(\tilde{\mathbf{w}}_{\nu-1})\big\|~\!{\big |}~\!\!\left\{\tilde{\mathbf{w}}_j\right\}_{j=1}^{\nu-2}\Big]\nonumber\\
		&\stackrel{(b)}{\leq} (1+\alpha L)^2+\alpha^2\frac{\sigma_H^2}{D'_{i,\nu-1}}+2\alpha(1+\alpha L)\frac{\sigma_H}{\sqrt{D'_{i,\nu-1}}}\nonumber\\
		& =  \left(1+\alpha L + \alpha \frac{\sigma_H}{\sqrt{D'_{i,\nu-1}}}\right)^{\!\!2},
	\end{align}
	where step $(a)$ is by the fact $\|\mathbf{I} - \alpha \nabla^2 f_i(\mathbf{w})\|\leq 1+\alpha L$ at any arbitrary model $\mathbf{w}$, and step $(b)$ is by tower rule (to condition on $\tilde{\mathbf{w}}_{\nu-1}$) and then applying \cite[Eq. (30b)]{fallah2020personalized} together with the inequality $\E[X]\leq\sqrt{\E[X^2]}$ for any random variable $X$. 
	
	Additionally, for the second term in \eqref{norm_E1_nu} we have
	\begin{align}\label{norm_E1_nu_term2}
		&\E \Big [\big\|\mathbf{B}^{(\nu-1)}_{\mathbf{w}} \left(\nabla^2 f_i(\mathbf{w}_{\nu-1}) -  \tnabla^2 f_i(\tilde{\mathbf{w}}_{\nu-1},\D'_{i,\nu-1})\right) \big\|^2\Big ]\nonumber\\
		&\stackrel{(a)}{\leq} \big\|\mathbf{B}^{(\nu-1)}_{\mathbf{w}}\big\|^2.\E \Big [\big\|\tnabla^2 f_i(\tilde{\mathbf{w}}_{\nu-1},\D'_{i,\nu-1})- \nabla^2 f_i(\mathbf{w}_{\nu-1})  \big\|^2\Big ]
		\nonumber\\
		&\stackrel{(b)}{\leq} (1+\alpha L)^{2\nu-2}\left(\frac{\sigma^2_H}{D'_{i,\nu-1}} +
		\rho^2 h'_{\nu-1}\right),
	\end{align}
	where step $(a)$ is by the fact that $\mathbf{B}^{(\nu-1)}_{\mathbf{w}}$ is deterministic, and step $(b)$ is by \eqref{eq51_norm1_1} together with 
	the following result.
	\begin{align}
		&\E \Big [\big\|\tnabla^2 f_i(\tilde{\mathbf{w}}_{\nu-1},\D'_{i,\nu-1})- \nabla^2 f_i(\mathbf{w}_{\nu-1})  \big\|^2\Big ]\nonumber\\
		& = \E \left [\E \left [\left \| \left(\tnabla^2 f_i(\tilde{\mathbf{w}}_{\nu-1},\D'_{i,\nu-1})-\nabla^2 f_i(\tilde{\mathbf{w}}_{\nu-1})\right)+ \big(\nabla^2 f_i(\tilde{\mathbf{w}}_{\nu-1}) -\nabla^2 f_i(\mathbf{w}_{\nu-1})\big) \right \|^2 {\big |} ~\! \tilde{\mathbf{w}}_{l-1} \right ]  \right ]
		\nonumber\\
		& \stackrel{(a)}{=} \E \left [ \E \left [ \left \| \tnabla^2 f_i(\tilde{\mathbf{w}}_{\nu-1},\D'_{i,\nu-1})-\nabla^2 f_i(\tilde{\mathbf{w}}_{\nu-1})\right \|^2 {\big |} ~\! \tilde{\mathbf{w}}_{l-1} \right ] \right ] +
		\E \left [\left \| \nabla^2 f_i(\tilde{\mathbf{w}}_{\nu-1}) -\nabla^2 f_i(\mathbf{w}_{\nu-1})\right \|^2  \right ] 
		\nonumber\\
		& \stackrel{(b)}{\leq} \frac{\sigma^2_H}{D'_{i,\nu-1}} +
		\rho^2 \E \left [\left \| \tilde{\mathbf{w}}_{\nu-1}-\mathbf{w}_{\nu-1}\right \|^2  \right ]
		\nonumber\\
		& \stackrel{(c)}{\leq} \frac{\sigma^2_H}{D'_{i,\nu-1}} +
		\rho^2 h'_{\nu-1},
	\end{align}
	where step $(a)$ follows the fact that conditioned on $\tilde{\mathbf{w}}_{l-1}$ the term $\tnabla^2 f_i(\tilde{\mathbf{w}}_{\nu-1},\D'_{i,\nu-1})-\nabla^2 f_i(\tilde{\mathbf{w}}_{\nu-1})$ is zero mean, and the term $\nabla^2 f_i(\tilde{\mathbf{w}}_{\nu-1}) -\nabla^2 f_i(\mathbf{w}_{\nu-1})$ is deterministic. Also, step $(b)$ is obtained by \cite[Eq. (30b)]{fallah2020personalized} together with Assumption \ref{asm_Hesian_Lip}, and step $(c)$ follows Lemma \ref{lemma_w_tilde_difference}.

	By plugging \eqref{norm_E1_nu_term1} and \eqref{norm_E1_nu_term2} in \eqref{norm_E1_nu} we have
	\begin{align}\label{norm_E1_nu_final}
		\E \left [ \|\mathbf{E}^{(\nu)}_1\|^2 \right ] \leq d_{\nu},
	\end{align}
	where $d_{\nu}$ is the solution to the recursive relation
	\begin{align}\label{d_nu_rec} 
		d_{\nu}&=2d_{\nu-1}\left(1+\alpha L + \alpha \frac{\sigma_H}{\sqrt{D'_{i,\nu-1}}}\right)^{\!\!2}+ 
		2\alpha^2 (1+\alpha L)^{2\nu-2}\left(\frac{\sigma^2_H}{D'_{i,\nu-1}} +
		\rho^2 h'_{\nu-1}\right),~~~ \nu>1,
	\end{align}
	with the initial value $d_1=\alpha^2{\sigma_H^2}/{D'_{i,0}}$. 
	One can work to find the closed form for $d_{\nu}$ from \eqref{d_nu_rec}. However, for the sake of brevity, we leave \eqref{d_nu_rec} as is, and derive the results in terms of $d_{\nu}$ that is the solution to \eqref{d_nu_rec}. 

	Similarly, we apply induction to upper bound $\E \left [ \|\mathbf{E}^{(\nu)}_1\|^4 \right ]$ by some quantity $d'_{\nu}$. Note that for the base case of $\nu=1$ we have
	\begin{align}\label{norm4_E1_1}
		\E \left [ \|\mathbf{E}^{(1)}_1\|^4 \right ] & = \E \left [ \|\mathbf{\tilde{B}}^{(1)}_{\tilde{\mathbf{w}}}-\mathbf{{B}}^{(1)}_{\mathbf{w}}\|^4 \right ]\nonumber\\
		& = \alpha^4\E \left [ \| \nabla^2 f_i(\mathbf{w})-\tnabla^2 f_i(\mathbf{w},\D'_{i,0}) \|^4 \right ] \nonumber\\
		& \leq  \alpha^4\frac{\kappa_H+3(D'_{i,0}-1)\sigma^4_H}{D'^{~\!\!3}_{i,0}}:=d'_1,
	\end{align}
	where the last inequality is by Lemma \ref{lemma_intermediate}. Now, assuming $\E \left [ \|\mathbf{E}^{(\nu-1)}_1\|^4 \right ]\leq d'_{\nu -1}$,
	we derive the upper bound $d'_{\nu}$ on $\E \left [ \|\mathbf{E}^{(\nu)}_1\|^4 \right ]$. By following similar steps to \eqref{norm_E1_nu}, we have
	\begin{align}\label{norm4_E1_nu}
		\E \left [ \|\mathbf{E}^{(\nu)}_1\|^4 \right ] & = \E \left [ \|\mathbf{\tilde{B}}^{(\nu)}_{\tilde{\mathbf{w}}}-\mathbf{{B}}^{(\nu)}_{\mathbf{w}}\|^4 \right ]\nonumber\\
		& \leq \E \Big [ \Big(\big\| \left(\mathbf{\tilde{B}}^{(\nu-1)}_{\tilde{\mathbf{w}}}-\mathbf{B}^{(\nu-1)}_{\mathbf{w}}\right) \left(\mathbf{I} - \alpha \tnabla^2 f_i(\tilde{\mathbf{w}}_{\nu-1},\D'_{i,\nu-1})\right)\big\|
		\nonumber\\
		&~~~~~~~~~~+ \alpha\big\|\mathbf{B}^{(\nu-1)}_{\mathbf{w}} \left(\nabla^2 f_i(\mathbf{w}_{\nu-1}) -  \tnabla^2 f_i(\tilde{\mathbf{w}}_{\nu-1},\D'_{i,\nu-1})\right) \big\| \Big)^{\!\!4}~\Big ]\nonumber\\
		& \leq 8\E \Big [\big\| \left(\mathbf{\tilde{B}}^{(\nu-1)}_{\tilde{\mathbf{w}}}-\mathbf{B}^{(\nu-1)}_{\mathbf{w}}\right) \left(\mathbf{I} - \alpha \tnabla^2 f_i(\tilde{\mathbf{w}}_{\nu-1},\D'_{i,\nu-1})\right)\big\|^4\Big]
		\nonumber\\
		&~~~+ 8\alpha^4\E \Big [\big\|\mathbf{B}^{(\nu-1)}_{\mathbf{w}} \left(\nabla^2 f_i(\mathbf{w}_{\nu-1}) -  \tnabla^2 f_i(\tilde{\mathbf{w}}_{\nu-1},\D'_{i,\nu-1})\right) \big\|^4\Big ],
	\end{align}
	where the last inequality is by twice applying the Cauchy-Schwarz inequality, $(a + b)^2 \leq 2(a^2 + b^2)$ for $a,b\geq0$.
	
	Next, we bound each of the two terms in \eqref{norm4_E1_nu}. For the first term we have
	\begin{align}\label{norm4_E1_nu_term1}
		&\E \Big [\big\| \left(\mathbf{\tilde{B}}^{(\nu-1)}_{\tilde{\mathbf{w}}}-\mathbf{B}^{(\nu-1)}_{\mathbf{w}}\right) \left(\mathbf{I} - \alpha \tnabla^2 f_i(\tilde{\mathbf{w}}_{\nu-1},\D'_{i,\nu-1})\right)\big\|^4\Big]\nonumber\\
		&\leq \E \Big[\E \Big[\big\| \mathbf{\tilde{B}}^{(\nu-1)}_{\tilde{\mathbf{w}}}-\mathbf{B}^{(\nu-1)}_{\mathbf{w}}\big\|^4~\!{\big |}~\!\!\left\{\tilde{\mathbf{w}}_j\right\}_{j=1}^{\nu-2}\Big].\E\Big[\big\|\mathbf{I} - \alpha \tnabla^2 f_i(\tilde{\mathbf{w}}_{\nu-1},\D'_{i,\nu-1})\big\|^4~\!{\big |}~\!\!\left\{\tilde{\mathbf{w}}_j\right\}_{j=1}^{\nu-2}\Big]\Big]\nonumber\\
		&\stackrel{(a)}{\leq}
		\left(8(1+\alpha L)^4+8\alpha^4\frac{\kappa_H+3(D'_{i,\nu-1}-1)\sigma^4_H}{D'^{~\!\!3}_{i,\nu-1}}\right)\E \Big[\big\| \mathbf{\tilde{B}}^{(\nu-1)}_{\tilde{\mathbf{w}}}-\mathbf{B}^{(\nu-1)}_{\mathbf{w}}\big\|^4\Big]\nonumber\\
		&\leq
		8d'_{\nu-1}\left((1+\alpha L)^4+\alpha^4\frac{\kappa_H+3(D'_{i,\nu-1}-1)\sigma^4_H}{D'^{~\!\!3}_{i,\nu-1}}\right),
	\end{align}
	where step $(a)$ is by 
	the following result.
	\begin{align}\label{norm4_E1_nu_term1_expct}
		&\E\Big[\big\|\mathbf{I} - \alpha \tnabla^2 f_i(\tilde{\mathbf{w}}_{\nu-1},\D'_{i,\nu-1})\big\|^4~\!{\big |}~\!\!\left\{\tilde{\mathbf{w}}_j\right\}_{j=1}^{\nu-2}\Big]\nonumber\\
		& \leq \E\Big[\left(\big\|\mathbf{I} - \alpha \nabla^2 f_i(\tilde{\mathbf{w}}_{\nu-1})\big\|+\alpha\big\|\tnabla^2 f_i(\tilde{\mathbf{w}}_{\nu-1},\D'_{i,\nu-1}) - \nabla^2 f_i(\tilde{\mathbf{w}}_{\nu-1})\big\|\right)^{\!4}~\!{\big |}~\!\!\left\{\tilde{\mathbf{w}}_j\right\}_{j=1}^{\nu-2}\Big]\nonumber\\
		&\leq 8(1+\alpha L)^4+8\alpha^4\E\Big[\big\|\tnabla^2 f_i(\tilde{\mathbf{w}}_{\nu-1},\D'_{i,\nu-1}) - \nabla^2 f_i(\tilde{\mathbf{w}}_{\nu-1})\big\|^{4}~\!{\big |}~\!\!\left\{\tilde{\mathbf{w}}_j\right\}_{j=1}^{\nu-2}\Big]\nonumber\\
		&\leq 8(1+\alpha L)^4+8\alpha^4\frac{\kappa_H+3(D'_{i,\nu-1}-1)\sigma^4_H}{D'^{~\!\!3}_{i,\nu-1}},
	\end{align}
	where the last inequality is by Lemma \ref{lemma_intermediate}. 
	
	Additionally, for the second term in \eqref{norm4_E1_nu} we have
	\begin{align}\label{norm4_E1_nu_term2}
		&\E \Big [\big\|\mathbf{B}^{(\nu-1)}_{\mathbf{w}} \left(\nabla^2 f_i(\mathbf{w}_{\nu-1}) -  \tnabla^2 f_i(\tilde{\mathbf{w}}_{\nu-1},\D'_{i,\nu-1})\right) \big\|^4\Big ]\nonumber\\
		&\leq \big\|\mathbf{B}^{(\nu-1)}_{\mathbf{w}}\big\|^4.\E \Big [\big\|\tnabla^2 f_i(\tilde{\mathbf{w}}_{\nu-1},\D'_{i,\nu-1})- \nabla^2 f_i(\mathbf{w}_{\nu-1})  \big\|^4\Big ]
		\nonumber\\
		&\leq 8(1+\alpha L)^{4\nu-4}\left(\rho^4 h''_{\nu-1}+\frac{\kappa_H+3(D'_{i,\nu-1}-1)\sigma^4_H}{D'^{~\!\!3}_{i,\nu-1}}\right),
	\end{align}
	where 
	the last inequality is by \eqref{eq51_norm1_1} together with 
	the following result.
	\begin{align}
		&\E \Big [\big\|\tnabla^2 f_i(\tilde{\mathbf{w}}_{\nu-1},\D'_{i,\nu-1})- \nabla^2 f_i(\mathbf{w}_{\nu-1})  \big\|^4\Big ]\nonumber\\
		& \leq 8\E \left [ \E \left [ \left \| \tnabla^2 f_i(\tilde{\mathbf{w}}_{\nu-1},\D'_{i,\nu-1})-\nabla^2 f_i(\tilde{\mathbf{w}}_{\nu-1})\right \|^4 {\big |} ~\! \tilde{\mathbf{w}}_{l-1} \right ] \right ] +
		8\E \left [\left \| \nabla^2 f_i(\tilde{\mathbf{w}}_{\nu-1}) -\nabla^2 f_i(\mathbf{w}_{\nu-1})\right \|^4  \right ] 
		\nonumber\\
		& \stackrel{(a)}{\leq} 8\frac{\kappa_H+3(D'_{i,\nu-1}-1)\sigma^4_H}{D'^{~\!\!3}_{i,\nu-1}} +
		8\rho^4 \E \left [\left \| \tilde{\mathbf{w}}_{\nu-1}-\mathbf{w}_{\nu-1}\right \|^4  \right ]
		\nonumber\\
		& \stackrel{(b)}{\leq} 8\frac{\kappa_H+3(D'_{i,\nu-1}-1)\sigma^4_H}{D'^{~\!\!3}_{i,\nu-1}} +
		8\rho^4 h''_{\nu-1},
	\end{align}
	where 
	step $(a)$ is Lemma \ref{lemma_intermediate} together with Assumption \ref{asm_Hesian_Lip}, and step $(b)$ follows Lemma \ref{lemma_w_tilde_difference}.
	
	By plugging \eqref{norm4_E1_nu_term1} and \eqref{norm4_E1_nu_term2} in \eqref{norm4_E1_nu} we have
	\begin{align}\label{norm4_E1_nu_final}
		\E \left [ \|\mathbf{E}^{(\nu)}_1\|^4 \right ] \leq d'_{\nu},
	\end{align}
	where $d'_{\nu}$ is the solution to the recursive relation
	\begin{align}\label{d'_nu_rec} 
		d'_{\nu} = &  {~} 64d'_{\nu-1}\left((1+\alpha L)^4+\alpha^4\frac{\kappa_H+3(D'_{i,\nu-1}-1)\sigma^4_H}{D'^{~\!\!3}_{i,\nu-1}}\right)\nonumber\\
		&+64\alpha^4(1+\alpha L)^{4\nu-4}\left(\rho^4 h''_{\nu-1}+\frac{\kappa_H+3(D'_{i,\nu-1}-1)\sigma^4_H}{D'^{~\!\!3}_{i,\nu-1}}\right),
	\end{align}
	with the initial value $d'_1=\alpha^4 ({\kappa_H}+3(D'_{i,0}-1)\sigma^4_H)/{D'^{~\!\!3}_{i,0}}$. 

	Finally, we are ready to apply the statistics of $\mathbf{E}^{(\nu)}_1$ and $\mathbf{e}_2$ to derive the desired results. For the first result in the lemma we have (for $\nu>1$)
	\begin{align}\label{lemma4_3_result1}
		&\big\|\E\left[\tnabla F_i(\mathbf{w}) - \nabla F_i(\mathbf{w})\right]\big\| \nonumber\\
		&\stackrel{(a)}{=}\big\|\E\left[\mathbf{{B}}^{(\nu)}_{\mathbf{w}}\mathbf{e}_2+\mathbf{E}^{(\nu)}_1\left(\nabla f_i(\mathbf{w}_{\nu-1} - \alpha \nabla f_i(\mathbf{w}_{\nu-1}))+\mathbf{e}_2\right)\right]\big\|\nonumber\\
		&{\leq}
		\left\|\E\left[\mathbf{{B}}^{(\nu)}_{\mathbf{w}}\mathbf{e}_2\right]\right\|
		+\left\|\E\left[\mathbf{E}^{(\nu)}_1\nabla f_i(\mathbf{w}_{\nu-1} - \alpha \nabla f_i(\mathbf{w}_{\nu-1}))\right]\right\|
		+\left\|\E\left[\mathbf{E}^{(\nu)}_1\mathbf{e}_2\right]\right\|
		\nonumber\\
		&{=}
		\left\|\mathbf{{B}}^{(\nu)}_{\mathbf{w}}\E\left[\mathbf{e}_2\right]\right\|
		+\left\|\E\left[\mathbf{E}^{(\nu)}_1\right]\nabla f_i(\mathbf{w}_{\nu-1} - \alpha \nabla f_i(\mathbf{w}_{\nu-1}))\right\|
		+\left\|\E\left[\mathbf{E}^{(\nu)}_1\mathbf{e}_2\right]\right\|
		\nonumber\\
		&\stackrel{(b)}{\leq}\mu_{F}:=(1+\alpha L)^{\nu}\left(\frac{\sigma_G}{\sqrt{D_{i,\nu}}}+Lh_{\nu}\right)+B\alpha\rho(1+\alpha L)^{\nu-1}\sum_{j=1}^{\nu-1}h_j+\sqrt{d_{\nu}\left(\frac{\sigma^2_G}{D_{i,\nu}} +
			L^2 h'_{\nu}\right)},
	\end{align}
	where step $(a)$ is by comparing \eqref{grad_Fi_rec} and \eqref{F_tilde_rewrite2}, and step $(b)$ is by \eqref{eq51_norm1_1}, \eqref{moment1_e2}, \eqref{E1_mean_new}, Assumption \eqref{asm:boundedness}, and the following result.
	\begin{align}\label{eq103_last}
		\left\|\E\left[\mathbf{E}^{(\nu)}_1\mathbf{e}_2\right]\right\|&\leq \E\left[\left\|\mathbf{E}^{(\nu)}_1\mathbf{e}_2\right\|\right]\nonumber\\
		&\leq \E\left[\left\|\mathbf{E}^{(\nu)}_1\right\|\left\|\mathbf{e}_2\right\|\right]
		\nonumber\\
		&\stackrel{(a)}{\leq} \sqrt{\E\left[\left\|\mathbf{E}^{(\nu)}_1\right\|^2\right]\E\left[\left\|\mathbf{e}_2\right\|\right]^2}
		\nonumber\\
		&\stackrel{(b)}{\leq} \sqrt{d_{\nu}\left(\frac{\sigma^2_G}{D_{i,\nu}} +
			L^2 h'_{\nu}\right)},
	\end{align}
	where step $(a)$ is by using the Cauchy-Schwarz inequality  $\E[|XY|]\leq\sqrt{\E[X^2]\E[Y^2]}$ for arbitrary random variables $X$ and $Y$, and step $(b)$ is by \eqref{moment2_e2} and \eqref{norm_E1_nu_final}.

	
	To show the second result in the lemma, by comparing \eqref{grad_Fi_rec} and \eqref{F_tilde_rewrite2} we have
	\begin{align}
		\left \| \tnabla F_i(\mathbf{w}) - \nabla F_i(\mathbf{w}) \right\| & = 
		\left\|\mathbf{{B}}^{(\nu)}_{\mathbf{w}}\mathbf{e}_2+\mathbf{E}^{(\nu)}_1\nabla f_i(\mathbf{w}_{\nu-1} - \alpha \nabla f_i(\mathbf{w}_{\nu-1}))+\mathbf{E}^{(\nu)}_1\mathbf{e}_2\right\|
		\nonumber\\
		& \leq
		\left\|\mathbf{{B}}^{(\nu)}_{\mathbf{w}}\right\|\left\|\mathbf{e}_2\right\|+B\left\|\mathbf{E}^{(\nu)}_1\right\|+\left\|\mathbf{E}^{(\nu)}_1\right\|\left\|\mathbf{e}_2\right\|.
	\end{align}
	Next, by the Cauchy-Schwarz inequality $(a+b+c)^2 \leq 3(a^2+b^2+c^2)$ for $a,b,c \geq 0$, we have
	\begin{align}\label{eq77}
		&\left \| \tnabla F_i(\mathbf{w}) - \nabla F_i(\mathbf{w}) \right \|^2  \nonumber\\
		& \leq 3\left\|\mathbf{{B}}^{(\nu)}_{\mathbf{w}}\right\|^2\left\|\mathbf{e}_2\right\|^2+3B^2\left\|\mathbf{E}^{(\nu)}_1\right\|^2+3\left\|\mathbf{E}^{(\nu)}_1\right\|^2\left\|\mathbf{e}_2\right\|^2
		\nonumber\\
		& \leq 3(1+\alpha L)^{2\nu}\left\|\mathbf{e}_2\right\|^2+3B^2\left\|\mathbf{E}^{(\nu)}_1\right\|^2+3\left\|\mathbf{E}^{(\nu)}_1\right\|^2\left\|\mathbf{e}_2\right\|^2.
	\end{align} 
	By taking expectation from both sides of \eqref{eq77}
	we have
	\begin{align}\label{proof2_ineq4}
		&\E \left [ \left \| \tnabla F_i(\mathbf{w}) - \nabla F_i(\mathbf{w}) \right \|^2 \right ]\nonumber\\
		& \leq 3(1+\alpha L)^{2\nu}\E\left[\left\|\mathbf{e}_2\right\|^2\right]+3B^2\E\left[\left\|\mathbf{E}^{(\nu)}_1\right\|^2\right]+3\E\left[\left\|\mathbf{E}^{(\nu)}_1\right\|^2\left\|\mathbf{e}_2\right\|^2\right]
		\nonumber\\
		& \stackrel{(a)}{\leq} 3(1+\alpha L)^{2\nu}\E\left[\left\|\mathbf{e}_2\right\|^2\right]+3B^2\E\left[\left\|\mathbf{E}^{(\nu)}_1\right\|^2\right]+3\sqrt{\E\left[\left\|\mathbf{E}^{(\nu)}_1\right\|^4\right]\E\left[\left\|\mathbf{e}_2\right\|^4\right]}
		\nonumber\\
		& \stackrel{(b)}{\leq} 3(1+\alpha L)^{2\nu}\left(\frac{\sigma^2_G}{D_{i,\nu}} +
		L^2 h'_{\nu}\right) +3B^2d_{\nu}+6\sqrt{2d'_{\nu}\left(\frac{\kappa_G+3(D_{i,\nu}-1)\sigma^4_G}{D_{i,\nu}^3}+L^4 h''_{\nu}\right)},
	\end{align}
	where step $(a)$ is by the Cauchy-Schwarz inequality, and step $(b)$ follows \eqref{moment2_e2}, \eqref{moment4_e2}, \eqref{norm_E1_nu_final}, and \eqref{norm4_E1_nu_final}.
	\endproof

	\section{Proof of Lemma \ref{lemma_F_similar}}\label{App_C}
	Let us define the matrix $\mathbf{H}^{(\nu)}_{\mathbf{w}}$, for $\nu\geq 1$, as
	\begin{align}\label{H_w}
		\mathbf{H}^{(\nu)}_{\mathbf{w}} := & \left(\mathbf{I} - \alpha \nabla^2 f(\mathbf{w})\right) \left(\mathbf{I} - \alpha \nabla^2 f(\mathbf{w}_1)\right) \left(\mathbf{I} - \alpha \nabla^2 f(\mathbf{w}_2)\right) \cdots \left(\mathbf{I} - \alpha \nabla^2 f(\mathbf{w}_{\nu-1})\right), 
	\end{align}
	which can be recursively expressed as
	\begin{align}\label{H_w_rec}
		\mathbf{H}^{(\nu)}_{\mathbf{w}} &=  \mathbf{H}^{(\nu-1)}_{\mathbf{w}} \left(\mathbf{I} - \alpha \nabla^2 f(\mathbf{w}_{\nu-1})\right),\nonumber\\
		\mathbf{H}^{(1)}_{\mathbf{w}} &=
		\mathbf{I} - \alpha \nabla^2 f(\mathbf{w}). 
	\end{align}
	Using the recursive form \eqref{grad_Fi_rec}, we can then rewrite $\nabla F_i(\mathbf{w})$ as
	\begin{align}\label{eq83}
		\nabla F_i(\mathbf{w}) = \left(\mathbf{H}^{(\nu)}_{\mathbf{w}}+\mathbf{M}_i^{(\nu)}\right)\left(\nabla f(\mathbf{w}_{\nu-1} - \alpha \nabla f(\mathbf{w}_{\nu-1}))+\mathbf{r}_i\right),
	\end{align}
	where 
	\begin{align}
		\mathbf{M}_i^{(\nu)} & := \mathbf{B}^{(\nu)}_{\mathbf{w}} - \mathbf{H}^{(\nu)}_{\mathbf{w}},\label{Mi}\\
		\mathbf{r}_i & := \nabla f_i(\mathbf{w}_{\nu-1} - \alpha \nabla f_i(\mathbf{w}_{\nu-1})) - \nabla f(\mathbf{w}_{\nu-1} - \alpha \nabla f(\mathbf{w}_{\nu-1})).\label{ri}
	\end{align}
	
	First, note that 
	\begin{align}
		\| \mathbf{r}_i\| & =  \|\nabla f_i(\mathbf{w}_{\nu-1} - \alpha \nabla f_i(\mathbf{w}_{\nu-1})) - \nabla f_i(\mathbf{w}_{\nu-1} - \alpha \nabla f(\mathbf{w}_{\nu-1}))\nonumber\\
		&~~~~~ + \nabla f_i(\mathbf{w}_{\nu-1} - \alpha \nabla f(\mathbf{w}_{\nu-1})) - \nabla f(\mathbf{w}_{\nu-1} - \alpha \nabla f(\mathbf{w}_{\nu-1}))\|
		\nonumber\\
		&\leq \left \| \nabla f_i(\mathbf{w}_{\nu-1} - \alpha \nabla f_i(\mathbf{w}_{\nu-1})) - \nabla f_i(\mathbf{w}_{\nu-1} - \alpha \nabla f(\mathbf{w}_{\nu-1})) \right \| \nonumber\\
		&~~~~~ + 	\left \| \nabla f_i(\mathbf{w}_{\nu-1} - \alpha \nabla f(\mathbf{w}_{\nu-1})) - \nabla f(\mathbf{w}_{\nu-1} - \alpha \nabla f(\mathbf{w}_{\nu-1})) \right \| \nonumber \\
		& \leq \alpha L \|  \nabla \!f_i(\mathbf{w}_{\nu-1}) \!-\!  \nabla\! f(\mathbf{w}_{\nu-1}) \| \!+\!  \left \| \nabla\! f_i(\mathbf{w}_{\nu-1} \!-\! \alpha \nabla \!f(\mathbf{w}_{\nu-1}))\! -\! \nabla f(\mathbf{w}_{\nu-1} \!-\! \alpha \nabla\! f(\mathbf{w}_{\nu-1})) \right \|.
	\end{align}
	Therefore, using the inequality $(a+b)^2 \leq 2(a^2+b^2)$, we have
	\begin{align}\label{norm_ri}
		\frac{1}{N} \sum_{i=1}^N \|\mathbf{r}_i\|^2 \leq & ~\! \frac{2}{N} \sum_{i=1}^N \big ( (\alpha L)^2 \|  \nabla f_i(\mathbf{w}_{\nu-1}) -  \nabla f(\mathbf{w}_{\nu-1}) \|^2 \nonumber\\
		& ~~~~~~~~~~ + \left \| \nabla f_i(\mathbf{w}_{\nu-1} - \alpha \nabla f(\mathbf{w}_{\nu-1})) - \nabla f (\mathbf{w}_{\nu-1} - \alpha \nabla f (\mathbf{w}_{\nu-1})) \right \|^2 \big ) \nonumber \\
		\leq & ~\! 2\gamma_G^2 \left (1+ \alpha^2 L^2 \right ).
	\end{align}
	
	Next, we apply a proof by induction to derive the bound $g_{\nu}$ to $\frac{1}{N} \sum_{i=1}^N \|\mathbf{M}_i^{(\nu)}\|^2$. Note that for the base case of $\nu=1$ we have
	\begin{align}\label{norm_Mi_nu1}
		\frac{1}{N} \sum_{i=1}^N \|\mathbf{M}_i^{(1)}\|^2 & =~\! \frac{1}{N} \sum_{i=1}^N \|\mathbf{B}^{(1)}_{\mathbf{w}} - \mathbf{H}^{(1)}_{\mathbf{w}}\|^2\nonumber\\
		& =~\! \frac{1}{N} \sum_{i=1}^N \alpha^2\|\nabla^2 f_i(\mathbf{w})- \nabla^2 f(\mathbf{w}) \|^2\nonumber\\
		& \leq~\!
		\alpha^2 \gamma_H^2.	
	\end{align}
	Now, assuming that step $\nu-1$ holds with some bound $g_{\nu -1}$, i.e., $\frac{1}{N} \sum_{i=1}^N \|\mathbf{M}_i^{(\nu-1)}\|^2\leq g_{\nu -1}$,
	the goal is to derive the bound $g_{\nu}$ at step $\nu$.
	Using \eqref{B_w_rec} and \eqref{H_w_rec} we have
	\begin{align}\label{norm_Mi_nu}
		\left\|\mathbf{M}_i^{(\nu)}\right\|^2 & = ~\! \left\|\mathbf{B}^{(\nu-1)}_{\mathbf{w}} \left(\mathbf{I} - \alpha \nabla^2 f_i(\mathbf{w}_{\nu-1})\right) - \mathbf{H}^{(\nu-1)}_{\mathbf{w}} \left(\mathbf{I} - \alpha \nabla^2 f(\mathbf{w}_{\nu-1})\right)\right\|^2 \nonumber\\
		& = ~\! \big\|\mathbf{B}^{(\nu-1)}_{\mathbf{w}} \left(\mathbf{I} - \alpha \nabla^2 f_i(\mathbf{w}_{\nu-1})\right) - \mathbf{B}^{(\nu-1)}_{\mathbf{w}} \left(\mathbf{I} - \alpha \nabla^2 f(\mathbf{w}_{\nu-1})\right) \nonumber\\
		& ~~~~~~ + \mathbf{B}^{(\nu-1)}_{\mathbf{w}} \left(\mathbf{I} - \alpha \nabla^2 f(\mathbf{w}_{\nu-1})\right) - \mathbf{H}^{(\nu-1)}_{\mathbf{w}} \left(\mathbf{I} - \alpha \nabla^2 f(\mathbf{w}_{\nu-1})\right)\big\|^2
		\nonumber\\
		& \stackrel{(a)}{\leq} ~\! 2\alpha^2\big\|\mathbf{B}^{(\nu-1)}_{\mathbf{w}} \left( \nabla^2 f_i(\mathbf{w}_{\nu-1}) - \nabla^2 f(\mathbf{w}_{\nu-1})\right)\big\|^2 \nonumber\\
		& ~~~~ + 2\big\|\left(\mathbf{B}^{(\nu-1)}_{\mathbf{w}}-\mathbf{H}^{(\nu-1)}_{\mathbf{w}}\right) \left(\mathbf{I} - \alpha \nabla^2 f(\mathbf{w}_{\nu-1})\right)\big\|^2
		\nonumber\\
		& \stackrel{(b)}{\leq} ~\! 2\alpha^2(1+\alpha L)^{2\nu-2}\big\|\nabla^2 f_i(\mathbf{w}_{\nu-1}) - \nabla^2 f(\mathbf{w}_{\nu-1})\big\|^2 + 2(1+\alpha L)^{2}\left\|\mathbf{M}_i^{(\nu-1)}\right\|^2,	
	\end{align}
	where step $(a)$ is by $(a+b)^2 \leq 2(a^2+b^2)$, and step $(b)$ is by \eqref{eq51_norm1_1} and \eqref{Mi}. By taking the average from both sides of \eqref{norm_Mi_nu}, we have
	\begin{align}\label{norm_Mi_nu_final}
		\frac{1}{N} \sum_{i=1}^N \left\|\mathbf{M}_i^{(\nu)}\right\|^2 \leq ~\! &  2\alpha^2(1+\alpha L)^{2\nu-2}.\frac{1}{N} \sum_{i=1}^N\big\|\nabla^2 f_i(\mathbf{w}_{\nu-1}) - \nabla^2 f(\mathbf{w}_{\nu-1})\big\|^2 \nonumber\\
		&+ 2(1+\alpha L)^{2}.\frac{1}{N} \sum_{i=1}^N\left\|\mathbf{M}_i^{(\nu-1)}\right\|^2\nonumber\\
		\leq ~\! &  2\alpha^2(1+\alpha L)^{2\nu-2}\gamma_H^2+ 2(1+\alpha L)^{2}g_{\nu-1}.
	\end{align}
	Therefore, by solving the recursive relation
	\begin{align}
		g_{\nu}=2\alpha^2(1+\alpha L)^{2\nu-2}\gamma_H^2+ 2(1+\alpha L)^{2}g_{\nu-1},
	\end{align}
	with the initial value $g_1=\alpha^2 \gamma_H^2$ (from \eqref{norm_Mi_nu1}), we find the following upper bound on $\frac{1}{N} \sum_{i=1}^N \|\mathbf{M}_i^{(\nu)}\|^2$
	\begin{align}\label{g_nu}
		\frac{1}{N} \sum_{i=1}^N \left\|\mathbf{M}_i^{(\nu)}\right\|^2 \leq ~\! 
		g_{\nu}  := \alpha^2 \gamma_H^2 (1+\alpha L)^{2\nu-2}\left(2^{\nu-1}+\sum_{l=1}^{\nu-1}2^l\right).
	\end{align}

	Finally, to prove the lemma, we note that the goal is to bound the variance of $\nabla F_i(\mathbf{w})$ when $i$ is drawn from a uniform distribution. Since subtracting a constant from a random variable does not change its variance, the variance of $\nabla F_i(\mathbf{w})$ is equal to the variance of $\nabla F_i(\mathbf{w}) - \mathbf{H}^{(\nu)}_{\mathbf{w}}\nabla f(\mathbf{w}_{\nu-1} - \alpha \nabla f(\mathbf{w}_{\nu-1}))$, which itself can be upper bounded by its second moment. Therefore, 
	\begin{align}
		\frac{1}{N} \sum_{i=1}^N \| \nabla F_i(\mathbf{w})  &- \nabla F(\mathbf{w})\|^2 
		\nonumber\\ 
		&\leq \frac{1}{N} \sum_{i=1}^N \left\|\mathbf{M}_i^{(\nu)}\nabla f(\mathbf{w}_{\nu-1} - \alpha \nabla f(\mathbf{w}_{\nu-1}))+\mathbf{H}^{(\nu)}_{\mathbf{w}}\mathbf{r}_i+\mathbf{M}_i^{(\nu)}\mathbf{r}_i\right\|^2 
		\nonumber\\
		&\leq \frac{1}{N} \sum_{i=1}^N\left( \left\|\mathbf{M}_i^{(\nu)}\nabla f(\mathbf{w}_{\nu-1} - \alpha \nabla f(\mathbf{w}_{\nu-1}))\right\|+\left\|\mathbf{H}^{(\nu)}_{\mathbf{w}}\mathbf{r}_i\right\|+\left\|\mathbf{M}_i^{(\nu)}\mathbf{r}_i\right\|\right)^2 
		\nonumber\\
		& \stackrel{(a)}{\leq} 3B^2\frac{1}{N}\sum_{i=1}^N \left\|\mathbf{M}_i^{(\nu)}\right\|^2+3(1+\alpha L)^{2\nu}\frac{1}{N}\sum_{i=1}^N\left\|\mathbf{r}_i\right\|^2+3\frac{1}{N}\sum_{i=1}^N\left\|\mathbf{M}_i^{(\nu)}\right\|^2 \left\|\mathbf{r}_i\right\|^2
		\nonumber\\
		& \stackrel{(b)}{\leq} 3B^2g_{\nu}+6\gamma_G^2(1+\alpha L)^{2\nu} \left (1+ \alpha^2 L^2\right )+3\frac{1}{N}\sum_{i=1}^N\left\|\mathbf{M}_i^{(\nu)}\right\|^2 \left\|\mathbf{r}_i\right\|^2
		\nonumber\\
		& \stackrel{(c)}{\leq} 15B^2g_{\nu}+6\gamma_G^2(1+\alpha L)^{2\nu} \left (1+ \alpha^2 L^2 \right ),\label{eq93}
	\end{align}
	where step $(a)$ is by $\left \| \nabla f (\mathbf{w} - \alpha \nabla f (\mathbf{w})) \right \| \leq B$, $\|\mathbf{H}^{(\nu)}_{\mathbf{w}}\|\leq (1+\alpha L)^{\nu}$ (similar to \eqref{eq51_norm1_1}), as well as the Cauchy-Schwarz inequality $(a+b+c)^2 \leq 3(a^2+b^2+c^2)$ for $a,b,c \geq 0$. Moreover, step $(b)$ is by \eqref{norm_ri} and \eqref{g_nu}, and step $(c)$ follows
	\begin{align}
		\frac{1}{N}\sum_{i=1}^N\left\|\mathbf{M}_i^{(\nu)}\right\|^2 \left\|\mathbf{r}_i\right\|^2 & \leq \max_{i} \left\|\mathbf{r}_i\right\|^2 \left (\frac{1}{N}\sum_{i=1}^N\left\|\mathbf{M}_i^{(\nu)}\right\|^2\right) \nonumber\\
		& \leq 4B^2g_{\nu}.
	\end{align}
	Eq. \eqref{eq93} thus gives the desired result.
	\endproof
	
\section{Proof of Theorem \ref{theorem_main}}\label{App_thrm}
The proof of Theorem \ref{theorem_main} follows the same procedure as the proof provided in \cite[Appendix G]{fallah2020personalized}. This is because the local update rule at every communication round and for every agent is according to \eqref{local_update_2} for both our scheme and the Per-FedAvg algorithm \cite{fallah2020personalized}. The main difference, however, is the definition of meta loss functions $F_i$'s. Therefore, it is easy to verify that a similar proof directly extends here though with different values for the statistics of meta loss functions, i.e., $L_F$, $\mu_F$, $\sigma_F$, and $\gamma_F$, which are characterized specifically for our algorithm in Lemmas \ref{lemma_F_smooth}, \ref{lemma_err_moment}, and \ref{lemma_F_similar}.\endproof

	\section{Proof of Lemma \ref{lemma_err_moment_FO}}\label{App_FO}
	Note that $\tnabla F_i^{\rm FO}$ is estimating 
	\begin{align}\label{g_i}
		\mathbf{g}_i^{\rm FO}(\mathbf{w}) := \nabla f_i(\mathbf{w}_{\nu-1} - \alpha \nabla f_i(\mathbf{w}_{\nu-1})).
	\end{align}
	For the first result we have
	\begin{align} \label{mu_FO_1}
		&	\left \| \E \left [ \tnabla F_i^{\rm FO}(\mathbf{w}) \!-\! \nabla F_i(\mathbf{w}) \right ] \right \| \leq 	\left \| \E \left [ \tnabla F_i^{\rm FO}(\mathbf{w}) - \mathbf{g}_i^{\rm FO}(\mathbf{w}) \right ] \right \| \!+\! 	\left \| \E \left [ \mathbf{g}_i^{\rm FO}(\mathbf{w}) \!-\! \nabla F_i(\mathbf{w}) \right ] \right \|.
	\end{align}
	To bound the first term in \eqref{mu_FO_1}, note that
	we can rewrite $\tnabla F^{\rm FO}_i(\mathbf{w})$ 
	as
	\begin{align}\label{grad_tilde_FO_rewrite1}
		\tnabla F^{\rm FO}_i(\mathbf{w})=\mathbf{g}_i^{\rm FO}(\mathbf{w})+\mathbf{e}_2,
	\end{align}
	where $\mathbf{e}_2$ is defined in \eqref{e2} as $\mathbf{e}_2:=\tnabla f_i(\tilde{\mathbf{w}}_{\nu},\D_{i,\nu}) - \nabla f_i(\mathbf{w}_{\nu})$ with its moments bounded in \eqref{moment1_e2} and \eqref{moment2_e2}. 
	Using \eqref{grad_tilde_FO_rewrite1}, we have for the first term in \eqref{mu_FO_1}
	\begin{align}\label{lemma4_3_result1_FO}
		\big\|\E\left[\tnabla F^{\rm FO}_i(\mathbf{w}) - \mathbf{g}^{\rm FO}_i(\mathbf{w})\right]\big\|&=\big\|\E\left[\mathbf{e}_2\right]\big\| \nonumber\\
		&{\leq}\frac{\sigma_G}{\sqrt{D_{i,\nu}}}+Lh_{\nu},
	\end{align}
	where the last inequality follows \eqref{moment1_e2}. Additionally, using the recursive form for $\nabla F_i(\mathbf{w})$ in \eqref{grad_Fi_rec}, we have for the second term in \eqref{mu_FO_1}
	\begin{align}\label{mu_FO_2}
		\left \| \E \left [ \mathbf{g}_i^{\rm FO}(\mathbf{w}) \!-\! \nabla F_i(\mathbf{w}) \right ] \right \| &= \left \| \mathbf{B}^{(\nu)}_{\mathbf{w}} \nabla f_i(\mathbf{w}_{\nu}) -  \nabla f_i(\mathbf{w}_{\nu}) \right \|  \nonumber\\
		& \leq \left \| \mathbf{B}^{(\nu)}_{\mathbf{w}} \right\| \left \|\nabla f_i(\mathbf{w}_{\nu})\right\| +\left \|  \nabla f_i(\mathbf{w}_{\nu}) \right \|  \nonumber\\
		& \leq B\left((1+\alpha L)^{\nu} + 1\right),
	\end{align}
	where the last inequality is by Assumption \ref{asm_grad} and \eqref{eq51_norm1_1}. Substituting \eqref{lemma4_3_result1_FO} and \eqref{mu_FO_2} in \eqref{mu_FO_1} gives  the first result in the lemma.
	
	Similarly, for the second result in the lemma we have
	\begin{align}\label{sigma_FO_1}
		\E \left [ \left \| \tnabla F_i^{\rm FO}(\mathbf{w}) - \nabla F_i(\mathbf{w})\right \|^2 \right ] \leq 2\E \left [ \left \| \tnabla F_i^{\rm FO}(\mathbf{w}) - \mathbf{g}^{\rm FO}_i(\mathbf{w})\right \|^2 \right ] + 2\E \left [ \left \| \mathbf{g}^{\rm FO}_i(\mathbf{w}) - \nabla F_i(\mathbf{w})\right \|^2 \right ].
	\end{align}
	By applying \eqref{moment2_e2}, we have for the first term in \eqref{sigma_FO_1}
	\begin{align}\label{lemma4_3_result2_FO}
		\E\left[\left\|\tnabla F^{\rm FO}_i(\mathbf{w}) - \mathbf{g}^{\rm FO}_i(\mathbf{w})\right\|^2\right]&=\E\left[\left\|\mathbf{e}_2\right\|^2\right] \nonumber\\&\leq\frac{\sigma^2_G}{D_{i,\nu}} +
		L^2 h'_{\nu}.
	\end{align}
	Plugging \eqref{mu_FO_2} and \eqref{lemma4_3_result2_FO} into \eqref{sigma_FO_1} complete the proof of the second result in the lemma.
	\endproof

	\section{Proof of Lemma \ref{lemma_err_moment_HF}}\label{App_HF}
	Note that $\tnabla F^{\rm HF}_i(\mathbf{w})$ is indeed estimating $\mathbf{g}_i^{\rm HF}(\mathbf{w}):= \mathbf{d}_{\mathbf{w},i}^{(\nu)}$, where the vector $\mathbf{d}_{\mathbf{w},i}^{(l)}$, for $l=1,\cdots,\nu$, is recursively defined as
	\begin{align}\label{d_HF}
		\mathbf{d}_{\mathbf{w},i}^{(l)} =  \mathbf{d}_{\mathbf{w},i}^{(l-1)} - \alpha \frac{\nabla f_i(\mathbf{w}_{\nu-l}+\delta \mathbf{d}_{\mathbf{w},i}^{(l-1)})-\nabla f_i(\mathbf{w}_{\nu-l}-\delta \mathbf{d}_{\mathbf{w},i}^{(l-1)})}{2\delta},
	\end{align}
	with the initial value of $\mathbf{d}_{\mathbf{w},i}^{(0)}:=\nabla f_i(\mathbf{w}_{\nu-1} - \alpha \nabla f_i(\mathbf{w}_{\nu-1}))$.

	We first derive the second result in the lemma. Note that
	\begin{align}\label{sigma_HF_1}
		\E \left [ \left \| \tnabla F_i^{\rm HF}(\mathbf{w}) - \nabla F_i(\mathbf{w})\right \|^2 \right ] \leq 2\E \left [ \left \| \tnabla F_i^{\rm HF}(\mathbf{w}) - \mathbf{g}^{\rm HF}_i(\mathbf{w})\right \|^2 \right ] + 2\E \left [ \left \| \mathbf{g}^{\rm HF}_i(\mathbf{w}) - \nabla F_i(\mathbf{w})\right \|^2 \right ].
	\end{align}
	Next, we bound each term in \eqref{sigma_HF_1} separately. 
	To bound the first term in \eqref{sigma_HF_1},	
	using \eqref{grad_tilde_Fi_HF} and \eqref{d_HF}, we have $\E \left [  \| \tnabla F^{\rm HF}_i(\mathbf{w}) - \mathbf{g}^{\rm HF}_i(\mathbf{w}) \|^2 \right ]=\E \left [ \| \tilde{\mathbf{d}}_{\mathbf{w},i}^{(\nu)} - \mathbf{d}_{\mathbf{w},i}^{(\nu)} \|^2 \right ]$. We apply induction to upper bound $\E \left [ \| \tilde{\mathbf{d}}_{\mathbf{w},i}^{(l)} - \mathbf{d}_{\mathbf{w},i}^{(l)} \|^2 \right ]$, for $l\in\{0,1,\cdots,\nu\}$, by some positive quantity $p'_l$. For the base case of $l=0$ we have
	\begin{align}\label{p'_bound_base}
		\E \!\left [ \left\| \tilde{\mathbf{d}}_{\mathbf{w},i}^{(0)} - \mathbf{d}_{\mathbf{w},i}^{(0)} \right\|^2 \right ] & \!= \E\! \left [ \left\| \tnabla f_i(\tilde{\mathbf{w}}_{\nu-1} - \!\alpha \tnabla f_i(\tilde{\mathbf{w}}_{\nu-1},\D_{i,\nu-1}),\D_{i,\nu}) \!-\! \nabla f_i(\mathbf{w}_{\nu-1} -\! \alpha \nabla f_i(\mathbf{w}_{\nu-1})) \right\|^2 \right ]\nonumber\\
		& \leq \frac{\sigma^2_G}{D_{i,\nu}} +
		L^2 h'_{\nu}:=p'_0,
	\end{align}
	where the last inequality follows \eqref{moment2_e2} with $h'_{\nu}$  defined in Lemma \eqref{lemma_w_tilde_difference}.
	Now, as the induction hypothesis, assuming $\E\left[\|\tilde{\mathbf{d}}_{\mathbf{w},i}^{(l-1)}-\mathbf{d}_{\mathbf{w},i}^{(l-1)}\|^2\right]\leq p'_{l-1}$, we have for $l\geq 1$
	\begin{align}
		& \hspace{-1cm}\E\left[	\left\|\tilde{\mathbf{d}}_{\mathbf{w},i}^{(l)}-\mathbf{d}_{\mathbf{w},i}^{(l)}\right\|^2\right] \nonumber\\
		& = \E\big[\big\|\left(\tilde{\mathbf{d}}_{\mathbf{w},i}^{(l-1)}-\mathbf{d}_{\mathbf{w},i}^{(l-1)}\right)  - \frac{\alpha}{2\delta} \left(\tnabla f_i(\tilde{\mathbf{w}}_{\nu-l}+\delta \tilde{\mathbf{d}}_{\mathbf{w},i}^{(l-1)},\D'_{i,\nu-l}) - \nabla f_i(\mathbf{w}_{\nu-l}+\delta \mathbf{d}_{\mathbf{w},i}^{(l-1)})\right) \nonumber\\
		& \hspace{1.4cm} + \frac{\alpha}{2\delta} \left(\tnabla f_i(\tilde{\mathbf{w}}_{\nu-l}-\delta \tilde{\mathbf{d}}_{\mathbf{w},i}^{(l-1)},\D'_{i,\nu-l}) - \nabla f_i(\mathbf{w}_{\nu-l}-\delta \mathbf{d}_{\mathbf{w},i}^{(l-1)})\right)
		\big\|^2\big]\nonumber\\
		& \stackrel{(a)}{\leq} 3p'_{l-1} +  \frac{3\alpha^2}{4\delta^2}\E\big[\big\| \tnabla f_i(\tilde{\mathbf{w}}_{\nu-l}+\delta \tilde{\mathbf{d}}_{\mathbf{w},i}^{(l-1)},\D'_{i,\nu-l}) - \nabla f_i(\mathbf{w}_{\nu-l}+\delta \mathbf{d}_{\mathbf{w},i}^{(l-1)})\big\|^2\big] \nonumber\\
		& \hspace{0.6cm} +  \frac{3\alpha^2}{4\delta^2}\E\big[\big\| \tnabla f_i(\tilde{\mathbf{w}}_{\nu-l}-\delta \tilde{\mathbf{d}}_{\mathbf{w},i}^{(l-1)},\D'_{i,\nu-l}) - \nabla f_i(\mathbf{w}_{\nu-l}-\delta \mathbf{d}_{\mathbf{w},i}^{(l-1)})\big\|^2\big]
		\nonumber\\
		& \stackrel{(b)}{\leq} p'_l:=3p'_{l-1}\left(1+ \alpha^2 L^2\right) +  \frac{3\alpha^2}{2\delta^2}\left( \frac{\sigma^2_G}{\D'_{i,\nu-l}} +
		2L^2 h'_{\nu-l}\right),\label{p'_rec}
	\end{align}
	where step $(a)$ is by the Cauchy-Schwarz inequality $(a+b+c)^2 \leq 3(a^2+b^2+c^2)$ for $a,b,c \geq 0$ and the induction hypothesis. Moreover, step $(b)$ is by the following result.
	\begin{align}
		&\hspace{-2cm}\E\big[\big\| \tnabla f_i(\tilde{\mathbf{w}}_{\nu-l}\pm\delta \tilde{\mathbf{d}}_{\mathbf{w},i}^{(l-1)},\D'_{i,\nu-l}) - \nabla f_i(\mathbf{w}_{\nu-l}\pm\delta \mathbf{d}_{\mathbf{w},i}^{(l-1)})\big\|^2\big]\nonumber\\
		& \stackrel{(a)}{\leq} \frac{\sigma^2_G}{\D'_{i,\nu-l}} +
		L^2 \E \left [\left \| \left(\tilde{\mathbf{w}}_{\nu-l}-\mathbf{w}_{\nu-l}\right)\pm\delta\left( \tilde{\mathbf{d}}_{\mathbf{w},i}^{(l-1)}- \mathbf{d}_{\mathbf{w},i}^{(l-1)}\right) \right \|^2  \right ]
		\nonumber\\
		& {\leq} \frac{\sigma^2_G}{\D'_{i,\nu-l}} +
		2L^2 \E \left [\left \| \tilde{\mathbf{w}}_{\nu-l}-\mathbf{w}_{\nu-l} \right \|^2  \right ] + 2L^2\delta^2 \E \left [\left \| \tilde{\mathbf{d}}_{\mathbf{w},i}^{(l-1)}- \mathbf{d}_{\mathbf{w},i}^{(l-1)} \right \|^2  \right ]
		\nonumber\\
		& \stackrel{(b)}{\leq} \frac{\sigma^2_G}{\D'_{i,\nu-l}} +
		2L^2 h'_{\nu-l} + 2L^2\delta^2 p'_{l-1},
	\end{align}
	where step $(a)$ follows the same procedure as \eqref{h'_l_rec_stepB} (i.e., defining $\tilde{\mathbf{w}}_{\rm tmp}:=\tilde{\mathbf{w}}_{\nu-l}\pm\delta \tilde{\mathbf{d}}_{\mathbf{w},i}^{(l-1)}$ and $\mathbf{w}_{\rm tmp}:=\mathbf{w}_{\nu-l}\pm\delta \mathbf{d}_{\mathbf{w},i}^{(l-1)}$ and then conditioning on $\tilde{\mathbf{w}}_{\rm tmp}$), and step $(b)$ follows Lemma \ref{lemma_w_tilde_difference} and the induction hypothesis. 
	
	Next, to bound the second term in \eqref{sigma_HF_1}, let us define
	\begin{align}\label{s_HF}
		\mathbf{s}_{\mathbf{w},i}^{(l)}:=\left[\prod_{j=\nu-l}^{\nu-1}\left(\mathbf{I} - \alpha \nabla^2 f_i(\mathbf{w}_j)\right)\right]\nabla f_i(\mathbf{w}_{\nu}), ~~~ l=1,2,\cdots,\nu,
	\end{align}
	which can also be recursively defined as $\mathbf{s}_{\mathbf{w},i}^{(l)}:=\left(\mathbf{I} - \alpha \nabla^2 f_i(\mathbf{w}_{\nu-l})\right)\mathbf{s}_{\mathbf{w},i}^{(l-1)}$ with the initial value of $\mathbf{s}_{\mathbf{w},i}^{(0)}:=\nabla f_i(\mathbf{w}_{\nu})$. Clearly, $\mathbf{s}_{\mathbf{w},i}^{(\nu)}=\nabla F_i(\mathbf{w})$, and $\mathbf{d}_{\mathbf{w},i}^{(l)}$ in \eqref{d_HF} is the HF approximation of $\mathbf{s}_{\mathbf{w},i}^{(l)}$. Therefore,
	\begin{align}\label{Error_HF_result2_term2}
		\E \left [ \left \| \mathbf{g}^{\rm HF}_i(\mathbf{w}) - \nabla F_i(\mathbf{w})\right \|^2 \right ] = \left \| \mathbf{d}_{\mathbf{w},i}^{(\nu)} - \mathbf{s}_{\mathbf{w},i}^{(\nu)}\right \|^2.
	\end{align}

	We  apply induction to upper bound $\left \| \mathbf{d}_{\mathbf{w},i}^{(l)} - \mathbf{s}_{\mathbf{w},i}^{(l)}\right \|$ by some quantity $q_l$, for $l=\{1,\cdots,\nu\}$. Note that for the base case we have
	\begin{align}\label{q_1}
		\left \| \mathbf{d}_{\mathbf{w},i}^{(1)} - \mathbf{s}_{\mathbf{w},i}^{(1)}\right \|&\stackrel{(a)}{=} \alpha\left \| \nabla^2 f_i(\mathbf{w}_{\nu-1})\nabla f_i(\mathbf{w}_{\nu})  -  \frac{\nabla f_i(\mathbf{w}_{\nu-l}+\delta \nabla f_i(\mathbf{w}_{\nu}))-\nabla f_i(\mathbf{w}_{\nu-l}-\delta \nabla f_i(\mathbf{w}_{\nu}))}{2\delta}\right \|\nonumber\\
		&\stackrel{(b)}{\leq} \alpha \rho\delta B^2:=q_1,
	\end{align}
	where step $(a)$ is by \eqref{d_HF} and \eqref{s_HF}, and step $(b)$ is by noting that the second term inside the norm is the HF approximation of $\nabla^2 f_i(\mathbf{w}_{\nu-1})\nabla f_i(\mathbf{w}_{\nu})$ and then using the fact that the error of the HF approximation of the product of $\nabla^2 g (\mathbf{w}) $ by any vector $\mathbf{v}$ is at most $\rho \delta \|\mathbf{v}\|^2$, where $\rho $ is the parameter for Lipschitz continuity of the Hessian of $g$.
	Now, assuming $\left \| \mathbf{d}_{\mathbf{w},i}^{(l-1)} - \mathbf{s}_{\mathbf{w},i}^{(l-1)}\right \|\leq q_{l-1}$, we have
	\begin{align}\label{q_l}
		\left \| \mathbf{d}_{\mathbf{w},i}^{(l)} - \mathbf{s}_{\mathbf{w},i}^{(l)}\right \| =& \Big \| \left(\mathbf{d}_{\mathbf{w},i}^{(l-1)} - \alpha \frac{\nabla f_i(\mathbf{w}_{\nu-l}+\delta \mathbf{d}_{\mathbf{w},i}^{(l-1)})-\nabla f_i(\mathbf{w}_{\nu-l}-\delta \mathbf{d}_{\mathbf{w},i}^{(l-1)})}{2\delta}\right) \nonumber\\
		& \hspace{0.3cm}- \left(\mathbf{s}_{\mathbf{w},i}^{(l-1)}  - \alpha \nabla^2 f_i(\mathbf{w}_{\nu-l})\mathbf{s}_{\mathbf{w},i}^{(l-1)} \right) + \alpha \nabla^2 f_i(\mathbf{w}_{\nu-l})\mathbf{d}_{\mathbf{w},i}^{(l-1)} - \alpha \nabla^2f_i(\mathbf{w}_{\nu-l})\mathbf{d}_{\mathbf{w},i}^{(l-1)} \Big \|\nonumber\\
		{\leq} &\left \| \mathbf{d}_{\mathbf{w},i}^{(l-1)} - \mathbf{s}_{\mathbf{w},i}^{(l-1)} \right \| +\alpha\left \| \nabla^2 f_i(\mathbf{w}_{\nu-l}) \right \| \left \| \mathbf{d}_{\mathbf{w},i}^{(l-1)} - \mathbf{s}_{\mathbf{w},i}^{(l-1)} \right \|
		\nonumber\\
		& +\alpha\left \| \nabla^2 f_i(\mathbf{w}_{\nu-l})\mathbf{d}_{\mathbf{w},i}^{(l-1)} -  \frac{\nabla f_i(\mathbf{w}_{\nu-l}+\delta \mathbf{d}_{\mathbf{w},i}^{(l-1)})-\nabla f_i(\mathbf{w}_{\nu-l}-\delta \mathbf{d}_{\mathbf{w},i}^{(l-1)})}{2\delta} \right \|\nonumber\\
		{\leq} & q_l:=(1+\alpha L)q_{\l-1} + \alpha \rho \delta \|\mathbf{d}_{\mathbf{w},i}^{(l-1)}\|^2,
	\end{align}
	where the last inequality follows the induction hypothesis and  the fact that the last norm is the squared error of the HF approximation of $\nabla^2 f_i(\mathbf{w}_{\nu-l})\mathbf{d}_{\mathbf{w},i}^{(l-1)}$. To fully quantify $q_l$ in \eqref{q_l}, it only remains to bound $\|\mathbf{d}_{\mathbf{w},i}^{(l)}\|^2$; we apply induction to bound $\|\mathbf{d}_{\mathbf{w},i}^{(l)}\|$ by some quantity $a_l$. For the base case, $\|\mathbf{d}_{\mathbf{w},i}^{(0)}\|=\|\nabla f_i(\mathbf{w}_{\nu})\|\leq B:=a_0$. Additionally, by assuming $\|\mathbf{d}_{\mathbf{w},i}^{(l-1)}\|\leq a_{l-1}$, we have
	\begin{align}\label{a_l}
		\left\|\mathbf{d}_{\mathbf{w},i}^{(l)}\right\|&\leq
		\left\|\mathbf{d}_{\mathbf{w},i}^{(l-1)}\right\| + \frac{\alpha}{2\delta}\left\|\nabla f_i(\mathbf{w}_{\nu-l}+\delta \mathbf{d}_{\mathbf{w},i}^{(l-1)})-\nabla f_i(\mathbf{w}_{\nu-l}-\delta \mathbf{d}_{\mathbf{w},i}^{(l-1)})\right\|
		\nonumber\\
		&\leq
		a_{l-1} + \frac{\alpha L}{2\delta}\left\|2\delta \mathbf{d}_{\mathbf{w},i}^{(l-1)}\right\|
		\nonumber\\
		&\leq
		(1+\alpha L)a_{l-1}:=a_l.
	\end{align}
	By solving the recursive relation $a_l=(1+\alpha L)a_{l-1}$ with the initial value of $a_0=B$, we find $a_l=B(1+\alpha L)^l$. 
	Replacing this in \eqref{q_l} results in
	\begin{align}\label{q_l_final}
		q_l=(1+\alpha L)q_{\l-1} + \alpha \rho \delta B^2  (1+\alpha L)^{2l-2},
	\end{align}
	which can be further characterized in the closed form as
	\begin{align}\label{q_l_closed}
		q_l=\alpha \rho \delta B^2\sum_{i=0}^{l-1}(1+\alpha L)^{l+i-1}.
	\end{align}
	This completes the proof of the second result in the lemma by noting that the right hand side of \eqref{Error_HF_result2_term2} is upper bounded by $q^2_{\nu}$.

	To derive the first result, note that
	\begin{align}\label{mu_HF_1}
		\left \| \E \left [ \tnabla F_i^{\rm HF}(\mathbf{w}) - \nabla F_i(\mathbf{w})\right ]\right \| \leq \left\|\E \left [ \tnabla F_i^{\rm HF}(\mathbf{w}) - \mathbf{g}^{\rm HF}_i(\mathbf{w})\right ]\right \| + \left\|\E \left [\mathbf{g}^{\rm HF}_i(\mathbf{w}) - \nabla F_i(\mathbf{w}) \right ]\right \|.
	\end{align}
	To bound the first term in \eqref{mu_HF_1},	
	using \eqref{grad_tilde_Fi_HF} and \eqref{d_HF}, we have $\|\E [ \tnabla F_i^{\rm HF}(\mathbf{w}) - \mathbf{g}^{\rm HF}_i(\mathbf{w})] \| = \|\E [\tilde{\mathbf{d}}_{\mathbf{w},i}^{(\nu)} - \mathbf{d}_{\mathbf{w},i}^{(\nu)}]\|$. We apply induction to upper bound $\|\E [\tilde{\mathbf{d}}_{\mathbf{w},i}^{(l)} - \mathbf{d}_{\mathbf{w},i}^{(l)}]\|$, for $l\in\{0,1,\cdots,\nu\}$, by some positive quantity $p_l$. For the base case of $l=0$ we have
	\begin{align}\label{p_bound_base}
		\left\| \E\! \left [ \tilde{\mathbf{d}}_{\mathbf{w},i}^{(0)} - \mathbf{d}_{\mathbf{w},i}^{(0)} \right ] \right\|& = \left\| \E \!\left [ \tnabla f_i(\tilde{\mathbf{w}}_{\nu-1} - \alpha \tnabla f_i(\tilde{\mathbf{w}}_{\nu-1},\D_{i,\nu-1}),\D_{i,\nu}) - \nabla f_i(\mathbf{w}_{\nu-1} - \alpha \nabla f_i(\mathbf{w}_{\nu-1})) \right ] \right\|\nonumber\\
		& \leq \frac{\sigma_G}{\sqrt{D_{i,\nu}}} +
		L h_{\nu} := p_0,
	\end{align}
	where the last inequality follows \eqref{moment1_e2} with $h_{\nu}$  defined in Lemma \eqref{lemma_w_tilde_difference}.
	Now, assuming $\|\E [\tilde{\mathbf{d}}_{\mathbf{w},i}^{(l-1)} - \mathbf{d}_{\mathbf{w},i}^{(l-1)}]\|\leq p_{l-1}$ as the induction hypothesis, we have for $l\geq 1$
	\begin{align}
		& \hspace{-1cm}	\left\|\E\left[\tilde{\mathbf{d}}_{\mathbf{w},i}^{(l)}-\mathbf{d}_{\mathbf{w},i}^{(l)}\right]\right\| \nonumber\\
		& = \big\|\E\big[\left(\tilde{\mathbf{d}}_{\mathbf{w},i}^{(l-1)}-\mathbf{d}_{\mathbf{w},i}^{(l-1)}\right)  - \frac{\alpha}{2\delta} \left(\tnabla f_i(\tilde{\mathbf{w}}_{\nu-l}+\delta \tilde{\mathbf{d}}_{\mathbf{w},i}^{(l-1)},\D'_{i,\nu-l}) - \nabla f_i(\mathbf{w}_{\nu-l}+\delta \mathbf{d}_{\mathbf{w},i}^{(l-1)})\right) \nonumber\\
		& \hspace{1.4cm} + \frac{\alpha}{2\delta} \left(\tnabla f_i(\tilde{\mathbf{w}}_{\nu-l}-\delta \tilde{\mathbf{d}}_{\mathbf{w},i}^{(l-1)},\D'_{i,\nu-l}) - \nabla f_i(\mathbf{w}_{\nu-l}-\delta \mathbf{d}_{\mathbf{w},i}^{(l-1)})\right)\big]
		\big\|\nonumber\\
		& \stackrel{(a)}{\leq} p_{l-1} +  \frac{\alpha}{2\delta}\big\| \E\big[\tnabla f_i(\tilde{\mathbf{w}}_{\nu-l}+\delta \tilde{\mathbf{d}}_{\mathbf{w},i}^{(l-1)},\D'_{i,\nu-l}) - \nabla f_i(\mathbf{w}_{\nu-l}+\delta \mathbf{d}_{\mathbf{w},i}^{(l-1)})\big]\big\| \nonumber\\
		& \hspace{0.6cm} +  \frac{\alpha}{2\delta}\big\|\E\big[ \tnabla f_i(\tilde{\mathbf{w}}_{\nu-l}-\delta \tilde{\mathbf{d}}_{\mathbf{w},i}^{(l-1)},\D'_{i,\nu-l}) - \nabla f_i(\mathbf{w}_{\nu-l}-\delta \mathbf{d}_{\mathbf{w},i}^{(l-1)})\big]\big\|
		\nonumber\\
		& \stackrel{(b)}{\leq} p_l:=\left(1+ \alpha L\right) p_{l-1}+  \frac{\alpha}{\delta}\left(\frac{\sigma_G}{\sqrt{\D'_{i,\nu-l}}} +
		L h_{\nu-l} \right),\label{p_rec}
	\end{align}
	where step $(a)$ is by the induction hypothesis, and step $(b)$ is by the following result.
	\begin{align}
		&\hspace{-2cm}\big\| \E\big[\tnabla f_i(\tilde{\mathbf{w}}_{\nu-l}\pm\delta \tilde{\mathbf{d}}_{\mathbf{w},i}^{(l-1)},\D'_{i,\nu-l}) - \nabla f_i(\mathbf{w}_{\nu-l}\pm\delta \mathbf{d}_{\mathbf{w},i}^{(l-1)})\big]\big\|\nonumber\\
		& \stackrel{(a)}{\leq} \frac{\sigma_G}{\sqrt{\D'_{i,\nu-l}}} +
		L \E \left [\left \| \left(\tilde{\mathbf{w}}_{\nu-l}-\mathbf{w}_{\nu-l}\right)\pm\delta\left( \tilde{\mathbf{d}}_{\mathbf{w},i}^{(l-1)}- \mathbf{d}_{\mathbf{w},i}^{(l-1)}\right) \right \|  \right ]
		\nonumber\\
		& \leq \frac{\sigma_G}{\sqrt{\D'_{i,\nu-l}}} +
		L \E \left [\left \| \tilde{\mathbf{w}}_{\nu-l}-\mathbf{w}_{\nu-l} \right \|  \right ] + L\delta \E \left [\left \| \tilde{\mathbf{d}}_{\mathbf{w},i}^{(l-1)}- \mathbf{d}_{\mathbf{w},i}^{(l-1)} \right \|  \right ]
		\nonumber\\
		& \stackrel{(b)}{\leq} \frac{\sigma_G}{\sqrt{\D'_{i,\nu-l}}} +
		L h_{\nu-l} + L\delta p_{l-1},
	\end{align}
	where step $(a)$ follows the same procedure as \eqref{h_l_rec_stepB},
	and step $(b)$ follows Lemma \ref{lemma_w_tilde_difference} and the induction hypothesis. 
	The proof is then complete by noting that the right hand side of \eqref{mu_HF_1} is upper bounded by $p_{\nu}+q_{\nu}$.
	\endproof
	
\section{Additional Experiments}\label{Appendix_experiments}	

	\begin{figure}[t]
	\centering
	\includegraphics[trim=0.3cm 1.2cm 0 0,width=6in]{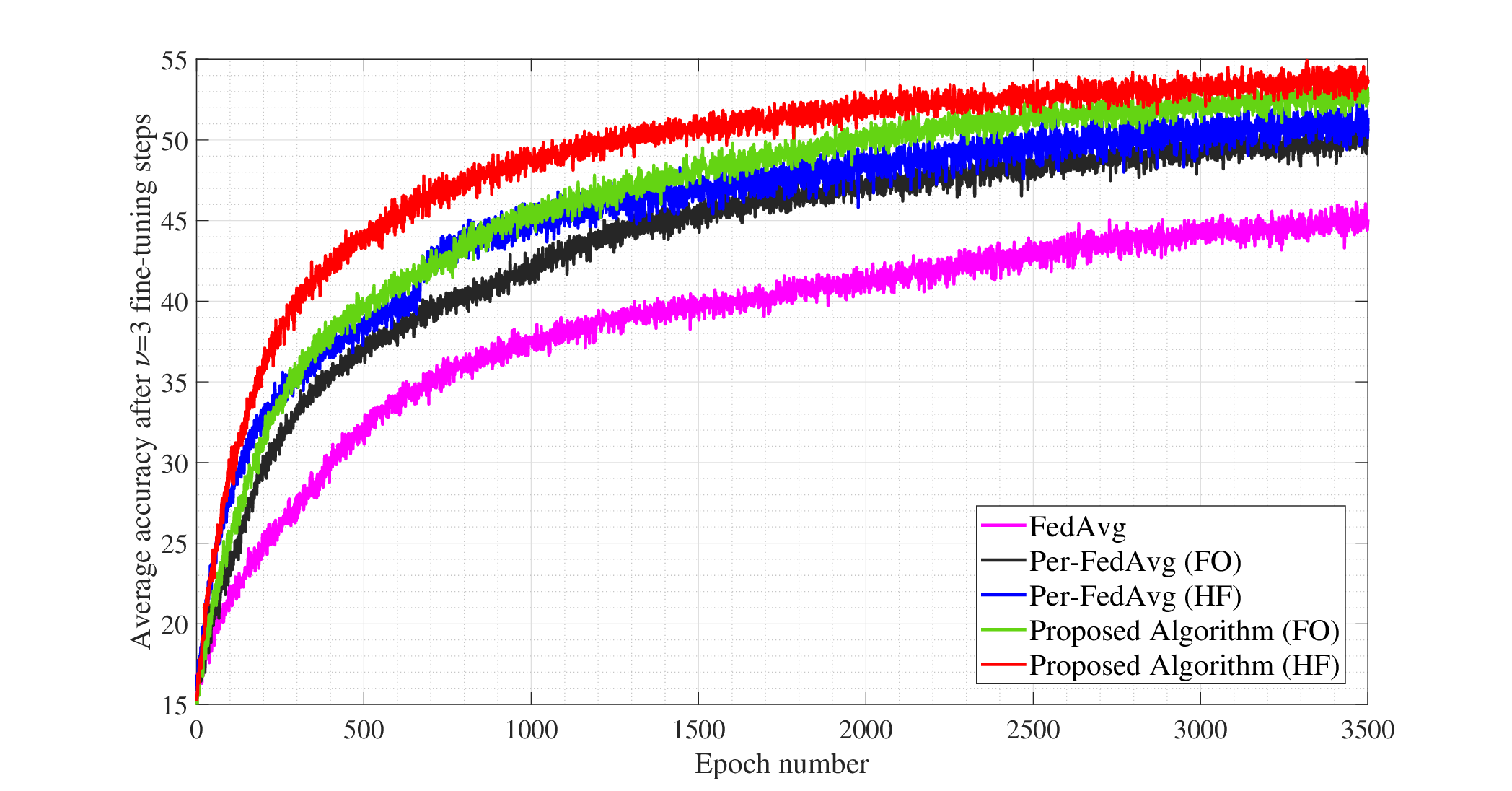}
	\caption{Accuracy of various schemes over the CIFAR-100 dataset with $\alpha_d=0.01$.}
	\label{fig2}
\end{figure}

Figure \ref{fig2} shows the average accuracy percentage of various algorithms over the CIFAR-100 dataset. Similar to Figure \ref{fig1}, our scheme achieves a better accuracy compared to the conventional meta FL and FedAvg algorithms. We note that the accuracies (for all schemes) are degraded compared to Figure \ref{fig1}, implying a more challenging classification task over  CIFAR-100  compared to the CIFAR-10 dataset.

\begin{figure}[t]
	\centering
	\includegraphics[trim=0.3cm 1.2cm 0 0,width=6in]{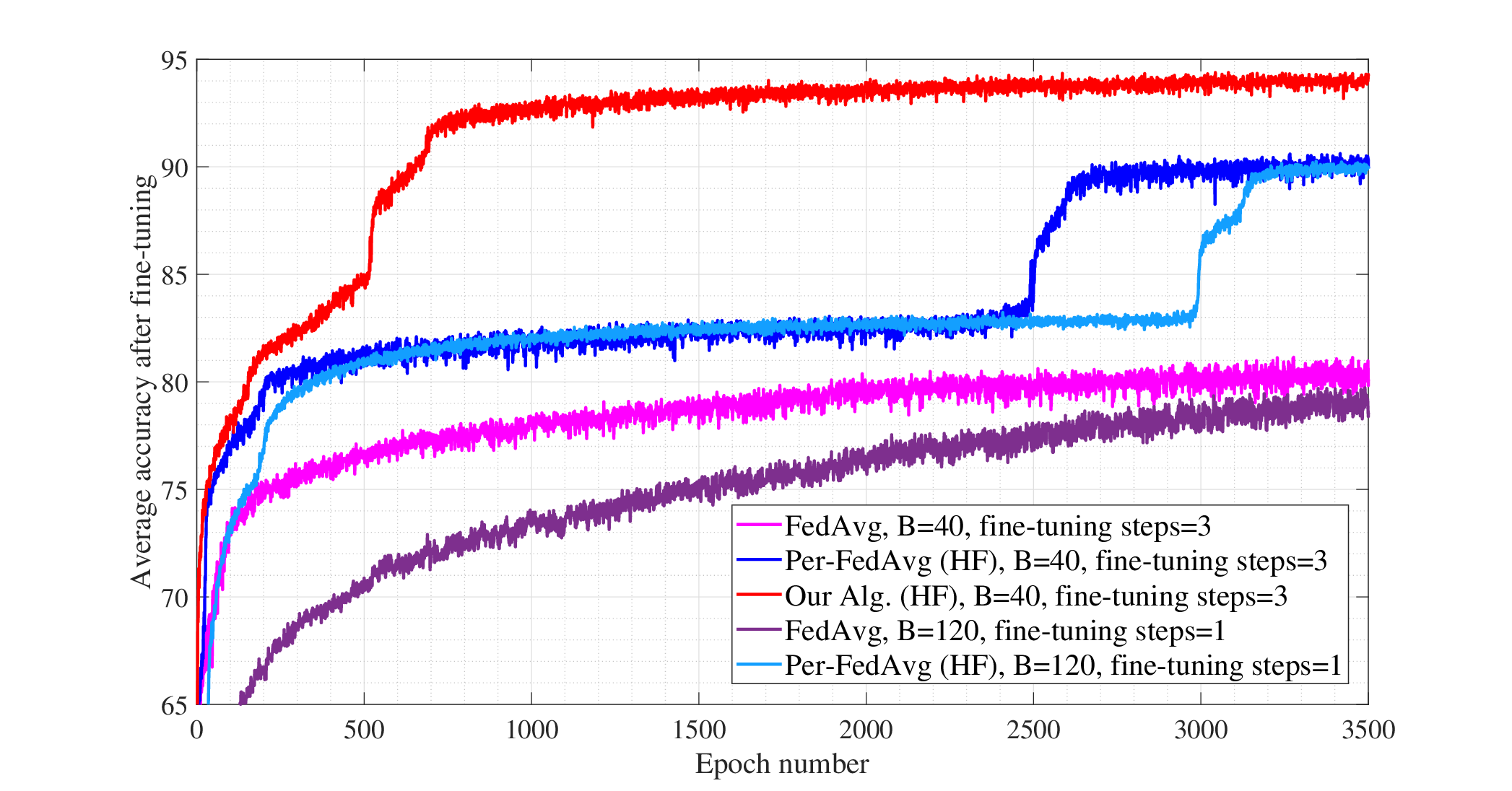}
	\caption{Accuracy of various schemes over the CIFAR-10 dataset with $\alpha_d=0.01$.}
	\label{fig3}
\end{figure}

Given that our algorithms attempt to optimize the performance of the model obtained after $\nu$ fine-tuning steps, one may wonder whether a similar performance could be achieved by using a $\nu$-times larger batch size. In the next experiments, we explore the effect of larger training and fine-tuning batch sizes for various schemes.  Specifically, in Figure \ref{fig3}, we explore the effect of $\nu=3$ times larger batch sizes for the conventional FedAvg and Per-FedAVg algorithms.\footnote{We are considering the larger batch sizes for both training and fine-tuning. To be more precise, we could  consider the larger batch size only for the fine-tuning as we are interested in the performance of such conventional algorithms (trained under the same batch size of $B=40$) after fine-tuning with a larger batch size.} As seen, the FedAvg performance is degraded, meaning that (under the setting of this paper) the FedAVg algorithm favors a larger number of fine-tuning steps compared to a larger batch size. This provides further motivation for the setting of this paper, i.e., optimizing the performance after multiple fine-tuning steps, as simply using a larger batch size may not be able to deliver the same performance as using a larger number of fine-tuning steps. The performance of the Per-FedAvg algorithm, however, is not much affected by using a larger batch size (and a single fine-tuning step), as the Per-FedAvg algorithm is already designed to optimize the performance after a single fine-tuning step. 

\begin{figure}[t]
	\centering
	\includegraphics[trim=0.3cm 1.2cm 0 0,width=6in]{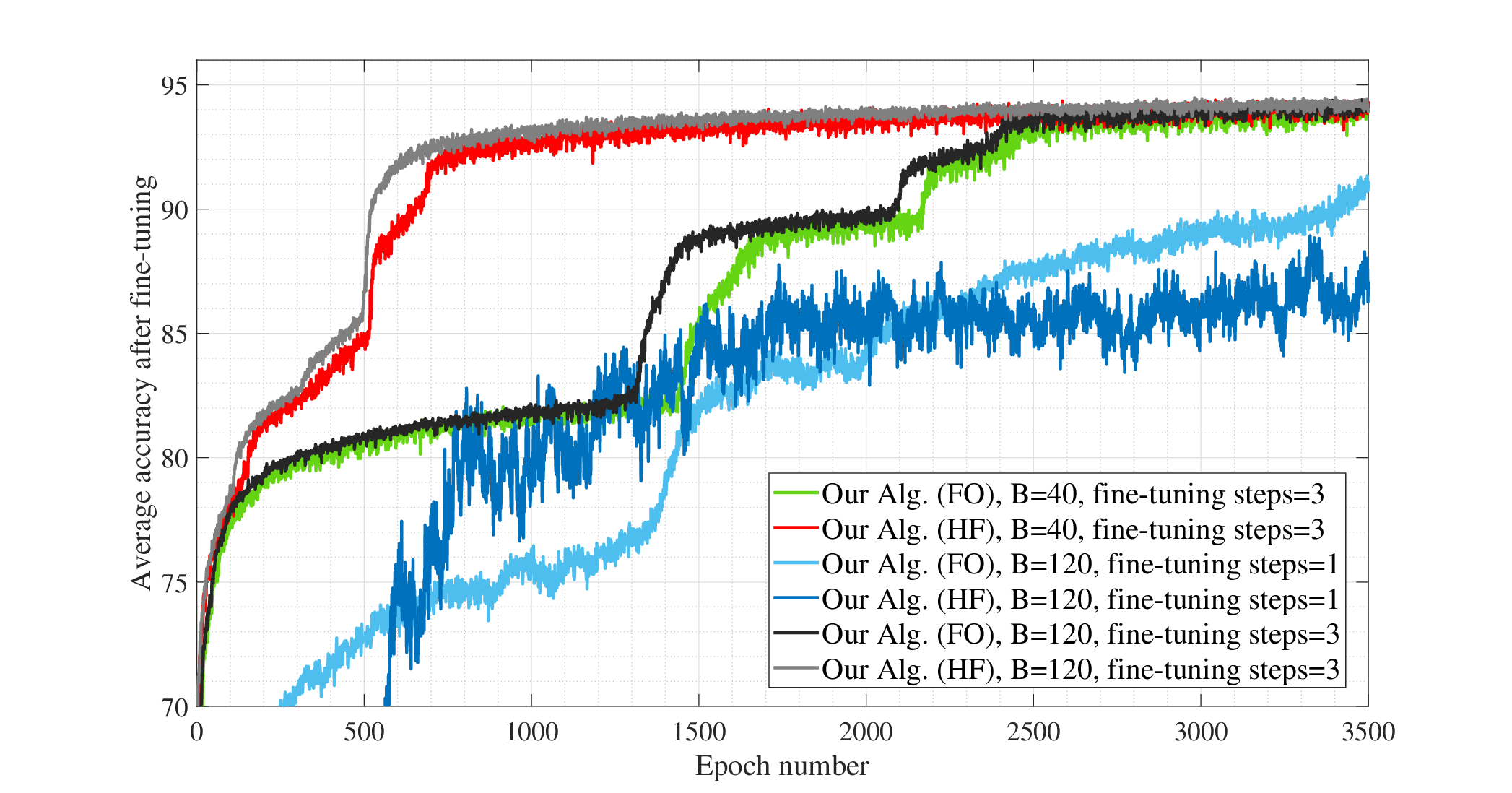}
	\caption{Accuracy of our algorithms over the CIFAR-10 dataset with $\alpha_d=0.01$. Various batch sizes and numbers of fine-tuning steps are considered.}
	\label{fig4}
\end{figure}

In Figure \ref{fig4}, we explore the effect of larger batch size on the performance of our algorithms (under both the FO and HF approximations). We consider  a $3$-times larger batch size of $B=120$ (for both training and fine-tuning -- similar to the setting of the conventional methods in Figure \ref{fig3}). We then evaluate the accuracy for the models obtained after $1$ and $3$ fine-tuning steps. When a single fine-tuning step is used, as expected, the performance is significantly degraded as our algorithms attempt to optimize the performance after $\nu=3$ fine-tuning steps. On the other hand, when $3$ fine-tuning steps are used, increasing the batch size to $120$ does not  improve the performance much compared to the base case of $B=40$. This implies that what matters most for our algorithms is the number of fine-tuning steps and not the batch size.   

\end{document}